\newtheorem{theorem}{Theorem}
\newtheorem{lemma}{Lemma}
\newtheorem{definition}{Definition}%
\newcommand{\ie}{\textit{i.e.}}
\newcommand{\eg}{\textit{e.g.}}
\newcommand{\resp}{\textit{resp.}}
\newcommand{\x}{\mathbf{x}}
\newcommand{\z}{\mathbf{z}}
\newcommand{\s}{\mathbf{s}}
\newcommand{\w}{\mathbf{w}}
\newcommand{\E}{\mathbb{E}}
\newcommand{\F}{\mathcal{F}}
\newcommand{\X}{\mathcal{X}}
\newcommand{\Y}{\mathcal{Y}}
\newcommand{\Yc}{\mathcal{Y}_{\rm com}}
\newcommand{\tP}{P^{\rm c}}
\begin{document}

\title[Adversarial Reweighting with $\alpha$-Power Maximization for Domain Adaptation]{Adversarial Reweighting with $\alpha$-Power Maximization for Domain Adaptation}


\author[1]{\sur{Xiang Gu}}\email{xianggu@stu.xjtu.edu.cn}

\author[1]{\sur{Xi Yu}}\email{ericayu@stu.xjtu.edu.cn}

\author[1]{\sur{Yan Yang}}\email{yangyan@xjtu.edu.cn}

\author*[1]{\sur{Jian Sun}}\email{jiansun@xjtu.edu.cn}

\author[1]{\sur{Zongben Xu}}\email{zbxu@xjtu.edu.cn}

\affil[1]{\orgdiv{School of Mathematics and Statistics}, \orgname{Xi'an Jiaotong University}, \orgaddress{\street{Xianning West Road}, \city{Xi'an}, \postcode{710049}, \state{Shaanxi}, \country{China}}}




\abstract{{The practical Domain Adaptation (DA) tasks, \eg, Partial DA (PDA), open-set DA, universal DA, and test-time adaptation, have gained increasing attention in the machine learning community.} 
In this paper, we propose a novel approach,  dubbed Adversarial Reweighting with $\alpha$-Power Maximization (ARPM), for PDA where the source domain contains private classes absent in target domain. 
In ARPM, we propose a novel adversarial reweighting model that adversarially learns to reweight source domain data to identify source-private class samples by assigning smaller weights to them, for mitigating potential negative transfer. Based on the adversarial reweighting, we train the transferable recognition model on the reweighted source distribution to be able to classify common class data. To reduce the prediction uncertainty of the recognition model on the target domain for PDA, we present an $\alpha$-power maximization mechanism in ARPM, which enriches the family of losses for reducing the prediction uncertainty for PDA. Extensive experimental results on five PDA benchmarks, \ie, Office-31, Office-Home, VisDA-2017, ImageNet-Caltech, and DomainNet, show that our method is superior to recent PDA methods. Ablation studies also confirm the effectiveness of components in our approach. To theoretically analyze our method, we deduce an upper bound of target domain expected error for PDA, which is approximately minimized in our approach. { We further extend ARPM to open-set DA, universal DA, and test time adaptation, and verify the usefulness through experiments.}

}

\keywords{Partial domain adaptation, adversarial reweighting, adversarial training, $\alpha$-power maximization, Wasserstein distance}

\maketitle

\section{Introduction}\label{sec:introduction}
Deep learning approaches have achieved great success in visual recognition~\cite{He_2016_CVPR,krizhevsky2012imagenet,simonyan2014very}, but at the expense of laborious large-scale training data annotation. To alleviate the burden of data labeling, Domain Adaptation (DA) transfers the knowledge from a related but different source domain with rich labels to the label-scarce target domain. DA methods mainly learn the transferable model for the target domain by self-training~\cite{9733209,9512429,liu2021cycle,xu2022cdtrans} or by mitigating the domain shift using moment matching~\cite{9219132,koniusz2017domain,pmlr-v37-long15,sun2016deep,zellinger2017central} or adversarial training~\cite{pmlr-v37-ganin15,sun2016deep,tzeng2015simultaneous,tzeng2017adversarial}. 
{Conventional unsupervised DA is the closed-set DA setting, which assumes known target label space (identical to source label space) that is of the ``closed-world'' paradigm. However, it is often not easy to find a source domain with identical label space to the target domain in practice. Therefore, DA with label space mismatch, \eg, Partial Domain Adaptation (PDA)~\cite{cao2018partial,cao2018partial1}, open-set DA~\cite{panareda2017open,saito2018open}, and universal DA~\cite{you2019universal,fu2020learning},  has gained increasing attention in the machine learning community. PDA, open-set DA, and universal DA are related to the more realistic ``open-world'' paradigm. Specifically, open-world visual recognition does not assume a fixed set of categories (\ie, label space) as in closed-world visual recognition. For PDA, the target label space, being a subset of the source label space, is not fixed, because there exist numerous possible subsets of the source label space. For open-set DA and universal DA, the target domain contains unknown/open classes that are absent in the source domain. { Another practical DA setting is the Test-Time Adaptation (TTA)~\cite{wang2021tent}, allowing model adaptation at test time. This paper first focuses on the methodology design for PDA, and then extends the developed approach to open-set DA, universal DA, and TTA.} 
} 



PDA~\cite{cao2018partial,cao2018partial1,zhang2018importance,feng2019attract} tackles the setting that the source domain contains private classes absent in the target domain, while the target domain classes belong to the set of source domain classes. Besides the domain shift between source and target domains, another main challenge of PDA is the possible negative transfer~\cite{pan2009survey} {(see Sect.~\ref{sec:motivation})}, \ie, the knowledge from source domain harms the learning in the target domain, caused by the source-private class data. To mitigate the negative transfer, previous PDA methods~\cite{cao2018partial,cao2018partial1,cao2019learning,li2020deep,liang2020balanced,ren2020learning,yandiscriminative,zhang2018importance} commonly reweight the source domain data to decrease the importance of data belonging to the source-private classes. The target and reweighted source domain data are used to train the feature extractor by adversarial training~\cite{cao2018partial,cao2018partial1,cao2019learning,liang2020balanced,yandiscriminative,zhang2018importance} or kernel mean matching~\cite{li2020deep,ren2020learning} to align distributions in feature space.

In this paper, we propose a novel approach, dubbed Adversarial Reweighting with $\alpha$-Power Maximization (ARPM), for PDA. To alleviate the potential negative transfer caused by source-private class data, we propose an adversarial reweighting model to reweight the source domain data to decrease the importance of source-private class data in adaptation by assigning them with smaller weights. The learning of source data weights is conducted by minimizing the Wasserstein distance between the target distribution and the reweighted source distribution. The intuition is that the source domain common class data are possibly closer to the target domain data than the source-private class data. This is reasonable and is the assumption taken in~\cite{cao2019learning}, and otherwise, PDA could be hardly realized. Using the dual formulation of the Wasserstein distance, the idea is further transformed into an adversarial reweighting model, in which we introduce a discriminator to distinguish domains and adversarially learn the source data weights to fool the discriminator. 

Based on the reweighted source data distribution, we define a reweighted classification loss to train the model to recognize objects of common classes, in which the importance of source-private class data is reduced using the learned data weights. Inspired by~\cite{liang2020balanced} that bridges domain gap in feature space by entropy minimization~\cite{grandvalet2005semi} to reduce the prediction uncertainty\footnote{In this paper, by ``prediction uncertainty'', we refer to the uncertainty of the classification probability distribution (classification score) outputted by the recognition model, \eg, the uniform distribution has larger uncertainty while the one-hot distribution has smaller uncertainty.} of recognition model on target domain, we also aim to reduce the prediction uncertainty on target domain. Instead of entropy minimization, we propose an $\alpha$-power maximization mechanism that maximizes the sum of $\alpha$-power of the classification score outputted by the recognition model. The $\alpha$-power maximization enriches the family of losses for minimizing the prediction uncertainty. We experimentally show that the $\alpha$-power maximization could be more effective for PDA than the widely adopted entropy minimization~\cite{grandvalet2005semi}. We also utilize the neighborhood reciprocity clustering~\cite{yang2021exploiting}, which is shown to be effective for closed-set DA, to enforce the robustness of the recognition model for PDA.

The above techniques are unified in our total training loss. To train the recognition model, we design an iterative training algorithm that alternately updates the parameters of the recognition model and learns the source domain data weights by solving the adversarial reweighting model. To evaluate our proposed method, we apply our approach to the PDA tasks on five benchmark datasets: Office-31, Office-Home, VisDA-2017, ImageNet-Caltech, and DomainNet. On all five datasets, our proposed ARPM outperforms the recent PDA methods. Ablation studies and empirical analysis also show the effectiveness of each component in our method. 

To further theoretically analyze our method, we study the theoretical analysis of PDA from the perspective of robustness and prediction uncertainty of the recognition model. More specifically, we prove theoretically that the expected error of the recognition model on target domain can be bounded by the expected error on source domain common class data, and the robustness and prediction uncertainty on target domain of the recognition model. Our approach approximately realizes the minimization of this bound so as to minimize the expected error on target domain.

{Additionally, we extend our approach to open-set DA, universal DA, and TTA}. For open-set DA, the target domain contains private classes that are absent in the source domain. For universal DA, both source and target domains possibly contain private classes. The goals of open-set and universal DA are to identify the target-private class data as the ``unknown'' class and meanwhile classify the target domain common class data. To extend our approach to open-set and universal DA, we apply our adversarial reweighting model to reweight target domain data, such that the target domain common (\resp, private) data are assigned with larger (\resp, smaller) weights. Based on learned weights, we reduce (\resp, increase) the prediction uncertainty of target domain possibly common (\resp, private) data using our $\alpha$-power loss. As a result, the target-private class can be identified based on prediction uncertainty. {For TTA, the goal is to evaluate the model on a target domain that may be different from the source domain in data distribution.} Different from vanilla machine learning which directly makes predictions for a mini-batch of test samples at test time, TTA allows adapting the model for a few steps on the mini-batch of test samples in an unsupervised manner and then makes predictions for them. Inspired by the TTA method~\cite{wang2021tent} that updates the parameters of the batch normalization (BN) layers by entropy minimization for one step, we update the parameters of the BN layers by our proposed $\alpha$-power maximization for one step to achieve the extension to TTA. Experiments show the usefulness of our approach for open-set DA, universal DA, and TTA.

Our contributions are summarized as follows:
\begin{itemize}
    \item We propose a novel ARPM approach for PDA. In ARPM, we propose an adversarial reweighting model for learning to reweight source domain data to decrease the importance of source-private class data in adaptation for PDA. We also propose the $\alpha$-power maximization mechanism to reduce the prediction uncertainty.
    \item We present a theoretical bound of PDA based on the robustness and prediction uncertainty. We analyze that our proposed ARPM can realize the minimization of the bound. 
    \item Extensive experimental results show the superiority of ARPM against recent PDA methods, along with sufficient ablation studies verifying the effectiveness of each component.
    \item {We extend our approach to open-set DA, universal DA, and TTA that are closely related to ``open-world vision recognition''.}
\end{itemize}

This paper extends our conference version~\cite{gu2021adversarial} published at NeurIPS, in which we devised the adversarial reweighting model and reduced the prediction uncertainty by entropy minimization for PDA. In this journal version, we make the following additional contributions. (1) We propose to maximize the sum of $\alpha$-power of the classification score outputted by the recognition model, enriching the family of losses to minimize the prediction uncertainty. We experimentally show that the $\alpha$-power maximization could be more effective for PDA than the widely adopted entropy minimization. (2) We present a theoretical analysis for PDA based on the prediction uncertainty and robustness of the recognition model, which theoretically grounds our approach. (3) We also enhance the robustness of the recognition model using neighborhood reciprocity clustering. (4) To ensure reproducibility, more techniques, \eg, spectral normalization to the discriminator and initializing the classification layer using PCA, are introduced. (5) The performance of our approach is further improved compared with the conference version, and the journal version of our method outperforms recent PDA methods. (6) {We extend our approach to other ``open-world'' tasks, including open-set DA, universal DA, and TTA.}
(7) More related works are included and summarized. (8) The paper is restructured and rewritten to include the above contributions better.

In the following sections, we summarize the related works in Sect.~\ref{sec:related_work}, elaborate our ARPM approach in Sect.~\ref{sec:algorithm}, and present the theoretical analysis in Sect.~\ref{sec:theory}. Section~\ref{sec:experiment} discusses the experimental results. 

\section{Related Work}\label{sec:related_work}
{We summarize the related closed-world DA, PDA, open-set DA, and universal DA approaches below.}

\vspace{0.5\baselineskip} \noindent \textbf{Closed-world domain adaptation.}
Unsupervised DA~\cite{pan2009survey} aims to transfer knowledge learned from the labeled source domain to the unlabeled target domain, which generally assumes that the source and target domains share the same label space. A group of unsupervised DA methods~\cite{gretton2006kernel,tzeng2014deep,pmlr-v37-long15,sun2016deep,zellinger2017central,tang2023source} attempt to reduce the distribution gap between the source and target domains by moment matching. 
Another line of methods~\cite{pmlr-v37-ganin15,NIPS2018_7436,ganin2016domain,hoffman2018cycada,liu2021adversarial,du2021cross,li2021unsupervised,gu2020spherical} alleviate the domain discrepancy by introducing a domain discriminator to discriminate domains and training the feature extractor to fool the discriminator in an adversarial manner, to learn domain-invariant features. {Recently, DA approaches based on self-training~\cite{liu2021cycle,9512429,xu2022cdtrans}, transferable attention~\cite{wang2019transferable}, neighborhood consistency~\cite{yang2021exploiting,yang2021generalized}, progressive adaptation~\cite{li2022unsupervised}, and other techniques~\cite{9983498,9397282} without domain alignment have been proposed, achieving promising results.}
Differently, we mainly tackle the PDA setting by proposing a novel adversarial reweighting with $\alpha$-power minimization method for PDA. 
In methodology, our method may be mostly related to the adversarial training-based methods~\cite{pmlr-v37-ganin15,NIPS2018_7436,ganin2016domain,hoffman2018cycada,liu2021adversarial,du2021cross}. 
Different from them, our adversarial reweighting model conducts the adversarial training by learning to reweight data to fool the discriminator instead of training the feature extractor as in these methods. 

\begin{figure*}
	\centering
	\includegraphics[width=2.0\columnwidth]{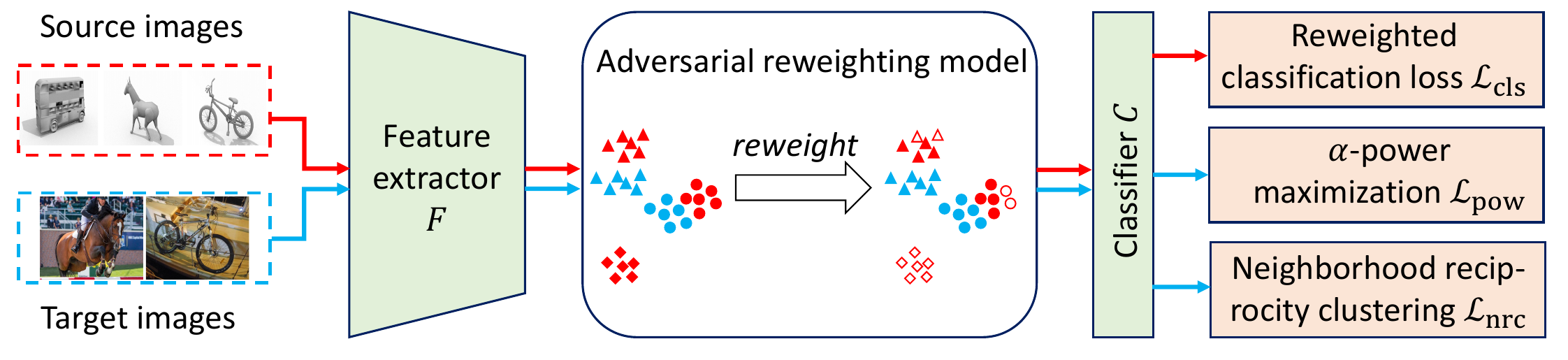}
	\caption{Architecture of our ARPM. Red (\textit{resp.} blue) arrows indicate the computational flow for source (\textit{resp.} target) domain data. Both source and target images are mapped to feature space by the feature extractor. Our adversarial reweighting model automatically reweights the importance of source domain data to match the target domain distribution in feature space to decrease the importance of the data of source-private class data. 
    We then define a reweighted classification loss on the reweighted source domain data distribution to train the recognition model to classify common class data. An $\alpha$-power maximization is proposed to reduce the prediction uncertainty on the target domain. We also utilize neighborhood reciprocity clustering~\cite{yang2021exploiting} to impose the robustness of the recognition model on the target domain.
}
	\label{fig:Architecture}
\end{figure*}

\vspace{0.5\baselineskip} \noindent \textbf{Partial domain adaptation.}  SAN~\cite{cao2018partial} and IWAN~\cite{zhang2018importance} circumvent negative transfer by training the domain discriminator with reweighted source domain samples. PADA~\cite{cao2018partial1} and ETN~\cite{cao2019learning} reweight source domain data in losses for training both the classifier and the domain discriminator. 
DRCN~\cite{li2020deep} uses reweighted class-wise domain alignment with a plug-in residual block that automatically uncovers the most relevant source domain classes to target domain data. TSCDA~\cite{ren2020learning} introduces a soft-weighed maximization mean discrepancy criterion to partially align feature distributions to alleviate negative transfer, and proposes a target-specific classifier to further address the classifier shift. SLM~\cite{sahoo2021select} exploits ``select", ``label" and ``mix" modules to mitigate negative transfer, enhance discriminability of features, and learn domain-invariant latent space, respectively.
ISRA~\cite{xiao2021implicit} aligns the source and target data distributions based on the implicit semantic topics shared between two domains that are extracted by a plug-in module, to boost the positive transfer. {IDSP~\cite{9983498} introduces intra-domain structure preserving without domain alignment, achieving improved results.}
MOT~\cite{Luo_2023_CVPR} utilizes a masked optimal transport on conditional distribution by defining the mask using label information to align class-wise distributions for PDA. Different from the above PDA methods, we adversarially learn to reweight the source domain data to decrease the importance of source-private class data in the classification loss, and propose $\alpha$-power maximization to reduce the prediction uncertainty.

 {
    \vspace{0.2\baselineskip}
    \noindent\textbf{Open-set domain adaptation.} 
    Busto and Gall~\cite{panareda2017open} first study the setting that both the source and target domains contain private categories in addition to the common categories. Saito~\textit{et al.}~\cite{saito2018open} consider the problem setting that the source domain only covers a subset of the target domain label space,  which is a common setting in the other open-set DA methods~\cite{saito2018open,liu2019separate,feng2019attract,baktashmotlagh2019learning,kundu2020towards,bucci2020effectiveness,rakshit2020multi,luo2020progressive,pan2020exploring,jing2021balanced,jing2021towards}. STA~\cite{liu2019separate} adopts a coarse-to-fine weighting mechanism to progressively separate the target domain data into known and unknown classes. Feng~\textit{et al.}~\cite{feng2019attract} exploit the semantic structure of open-set data by semantic categorical alignment and contrastive mapping to encourage the known classes more separable and push the unknown class away from the decision boundary. Baktashmotlag~\textit{et al.}~\cite{baktashmotlagh2019learning} tackle the open-set DA problem with a method based on subspace learning that models the common classes by a shared subspace and the unknown classes by a private subspace. 
    The method in~\cite{luo2020progressive} introduces a graph learning-based adversarial training strategy to align the known class samples from target domain with samples from source domain.
    Pan~\textit{et al.}~\cite{pan2020exploring} augment Self-Ensembling for both closed-set and open-set DA scenarios by integrating category-agnostic clusters into DA procedure. 
    Bucci~\textit{et al.}~\cite{bucci2020effectiveness} utilize a new open-set metric that properly balances the contribution of recognizing the known classes and rejecting the unknown samples, and investigate the self-supervised task of rotation recognition for facilitating open-set DA.
    Jing~\textit{et al.}~\cite{jing2021towards} develop structure-preserving partial alignment to recognize the seen categories and discover the unknown classes. 
    ANNA~\cite{li2023adjustment} tackles Open-set DA utilizing front-door adjustment theory.
    Different from the above methods, we propose the adversarial reweighting model to identify target-private class data for open-set DA. We further respectively decrease and increase the prediction uncertainty of the recognition model on target domain common and private class data, to classify/detect the common/private class data.

    \vspace{0.2\baselineskip}
    \noindent\textbf{Universal domain adaptation.}
    UAN~\cite{you2019universal} proposes a criterion to quantify sample-level transferability based on entropy and domain similarity, thereby promoting the adaptation in the automatically discovered common label set and recognizing the “unknown” samples successfully.
    CMU~\cite{fu2020learning} designs a better criterion based on a mixture of entropy, confidence, and consistency from a multi-classifier ensemble model to measure sample-level transferability.
    DANCE~\cite{saito2020universal} 
    uses entropy-based feature alignment and rejection to align target domain features with the source domain or reject the target domain features as unknown categories based on their entropy.
    OVANet~\cite{saito2021ovanet} introduces one-vs-all classifiers for each class to automatically learn the threshold for identifying the unknown class data.
    DCC~\cite{li2021domain} proposes a cluster-based method to exploit the domain consensus knowledge to discover discriminative clusters for separating the private classes from the common ones in target domain.
    The method in \cite{chen2022geometric} explores the intrinsic geometrical relationship between the two domains and designs a universal incremental classifier to separate “unknown” samples. Motivated by Bag-of-visual-Words, the method in \cite{kundu2022subsidiary} introduces subsidiary prototype-space alignment to tackle universal DA, avoiding negative transfer.
    GLC~\cite{sanqing2023GLC} introduces a global and local clustering learning technique for source-free universal DA. PPOT~\cite{Yang_Gu_Sun_2023} tackle universal DA
    based on a proposed prototypical partial optimal transport model to identify private class data.
    SAKA~\cite{wang2023exploiting} introduces knowability-guided detection of known and unknown samples and refines target pseudo labels based on neighborhood consistency. 
    Differently, we propose the novel adversarial reweighting model to reweight data for identifying the private class data of target domain, and perform domain adaptation relying on reducing/increasing prediction uncertainty based on the learned data weights.
    }

\section{Method}\label{sec:algorithm}

PDA assumes two related but different distributions, namely source distribution $P$ over space $\mathcal{X}\times\mathcal{Y}$ and the target distribution $Q$ over space $\mathcal{X}\times\mathcal{Y}_{\rm com}$, where $\mathcal{Y}_{\rm com}\subset\mathcal{Y}$. In training, we are given labeled source samples $S=\{(\x_i^s,y_i)\}_{i=1}^m$ drawn \textit{i.i.d.} from $P$, and unlabeled target samples $T=\{\x_j^t\}_{j=1}^n$ drawn \textit{i.i.d.} from $Q_{\x}$ where $Q_{\x}$ is the marginal distribution of $Q$ in space $\X$. The goal of PDA is to train a recognition model using the training samples to predict the class labels of target samples. $\mathcal{Y}_{\rm com}\subset\mathcal{Y}$ implies that the source domain contains private classes absent in the target domain, which may cause negative transfer in adaptation ({see Sect.~\ref{sec:motivation}}).
In this paper, we implement the recognition model using deep neural networks. Specifically, the recognition model is composed of a feature extractor $F$ (\eg, ResNet~\cite{He_2016_CVPR}) and a classifier $C$. Detailed architectures of $F$ and $C$ will be given in Sect.~\ref{sec:network}.

To tackle the PDA task, we propose a novel approach, dubbed Adversarial Reweighting with $\alpha$-Power Maximization (ARPM). The overall framework of ARPM is illustrated in Fig.~\ref{fig:Architecture}. We apply the feature to the input images to extract features for both source and target domains. In the feature space, we propose the adversarial reweighting model to reweight source features such that the source-private class features are assigned smaller weights. We then perform PDA based on the adversarial reweighting. Specifically, on the reweighted source domain data distribution, we define a reweighted classification loss to train the recognition model to be able to classify common class data. On the target domain data, we propose an $\alpha$-power maximization mechanism to reduce the prediction uncertainty of the recognition model. We also utilize the neighborhood reciprocity clustering~\cite{yang2021exploiting} to enforce the robustness of the recognition model on target domain data. {We next discuss our intuitive motivation in Sect.~\ref{sec:motivation}}, the adversarial reweighting model in Sect.~\ref{sec:adv_rew}, and adaptation based on adversarial reweighting in Sect.~\ref{sec:adapt}, in which we introduce the reweighted classification loss, the $\alpha$-power maximization, and the neighborhood reciprocity clustering, followed by our training algorithm in Sect.~\ref{sec:training}. {Finally, we extend our method to open-set DA and universal DA in Sect.~\ref{sec:extend_DA} and to TTA in Sect.~\ref{sec:extend_TTA}.}
{
\begin{figure}[t]
    \centering
    \includegraphics[width=0.9\columnwidth]{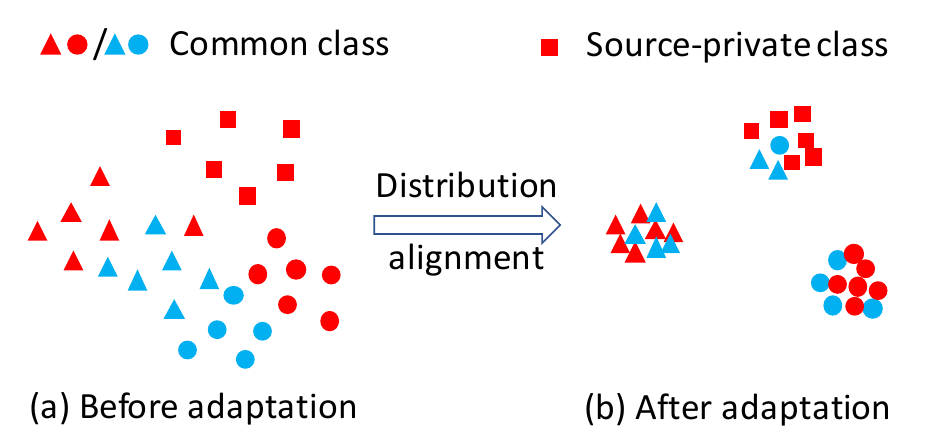}
     
    \caption{Illustration of negative transfer caused by the source-private class data in PDA. The source and target features are respectively in red and blue. Some of the target domain samples are unavoidably aligned with the source-private class data in feature adaptation by distribution alignment, and are incorrectly recognized by the recognition model.
    }
    \label{fig:negative_transfer}
\end{figure}
\begin{figure*}
    \centering
    \includegraphics[width=2.0\columnwidth]{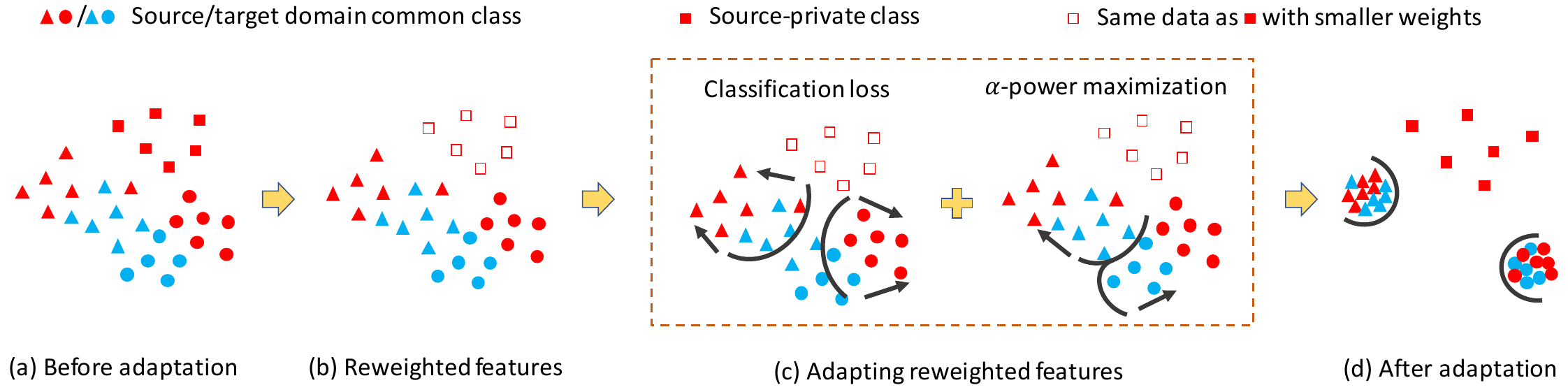}
    
    \caption{Intuitive motivations of ARPM.  We reweight source domain data by our adversarial reweighting model to assign smaller weights to source-private class data. The classification loss can enforce lower prediction uncertainty mainly on source domain common class data. We propose the $\alpha$-power maximization to lower prediction uncertainty on target samples. Intuitively, to achieve lower prediction uncertainty, the target samples will be pushed toward the regions of source domain common class data. 
    }
    \label{fig:intuitive_insights}
\end{figure*}

\subsection{Motivation}\label{sec:motivation}
We explain the negative transfer in PDA and the intuitive motivation of our method as follows.

\vspace{0.5\baselineskip} \noindent 
\textbf{Negative transfer.} The challenges of PDA arise from the distribution difference and the possible negative transfer caused by the source-private class data in adaptation. To enable the recognition model to be transferred from source to target domain, DA methods often align source and target distributions in feature space to adapt the feature extractor to tackle the challenge of distribution difference. However, some of the target domain data are unavoidably aligned with the source-private class data if directly aligning distributions, and thus are incorrectly recognized, as illustrated in Fig.~\ref{fig:negative_transfer}. In other words, the source-private class data can cause negative transfer when aligning distributions in PDA, \ie, these source-private class data harm the learning in the target domain.

\vspace{0.5\baselineskip} \noindent         
\textbf{Intuitive motivation.} Figure~\ref{fig:intuitive_insights} shows the motivations of our approach. Intuitively, our adversarial reweighting model aims to learn to reweight source domain data to assign smaller weights to source-private class data, as illustrated in Fig.~\ref{fig:intuitive_insights}(b). We then define the classification loss on the reweighted source domain data to train the recognition model, in which the source-private class data are less important because they are reweighted by smaller weights. The trained recognition model could be mainly discriminative on source domain common class data, \ie, the predictions are certain on these data. We propose the $\alpha$-power maximization to reduce the prediction uncertainty on the target domain, as will be discussed in Sect.~\ref{sec:adapt}. By using our $\alpha$-power maximization loss to train the feature extractor, the learned target features will be pushed toward the source common class features to achieve a lower prediction uncertainty, as shown in Fig.~\ref{fig:intuitive_insights}(c). After adaptation, the target features will be aligned with source domain common class features, as illustrated by Fig.~\ref{fig:intuitive_insights}(d) (also Fig.~\ref{fig:tsne_features} on real data). Therefore, the negative transfer could be alleviated in our approach.
}

\subsection{Adversarial Reweighting}\label{sec:adv_rew}
We follow~\cite{cao2019learning} to assume that the source domain data of common classes $\Yc$ are closer to the target domain data than the source domain data belonging to the source-private classes $\mathcal{Y}\backslash\mathcal{Y}_c$. This is reasonable, and otherwise, PDA could be hardly realized.  We then learn the weights of source domain data by minimizing the Wasserstein distance between the reweighted source and target distributions. The weight learning process is formulated as an adversarial reweighting model. Figure~\ref{fig:adv_idea} illustrates our idea. We first introduce the Wasserstein distance.

\vspace{0.5\baselineskip} \noindent \textbf{Wasserstein distance.} 
The Wasserstein distance is a metric from optimal transport that measures the discrepancy between two distributions. The Wasserstein distance between distributions $\mu$ and $\nu$ is defined by $W(\mu,\nu)=\min_{\pi\in\Pi}\mathbb{E}_{(\x,\x')\sim\pi}\left\|\x-\x'\right\|$, where $\Pi$ is the set of couplings of $\mu$ and $\nu$, \textit{i.e.}, $\Pi=\{\pi|\int\pi(\x,\x'){\rm d}\x'=\mu(\x),\int\pi(\x,\x'){\rm d}\x=\nu(\x')\}$, and $\left\|\cdot\right\|$ is the $l_2$-norm. Leveraging the Kantorovich-Rubinstein duality, the Wasserstein distance has the dual form of $W(\mu,\nu)=\max_{\left\|g\right\|_L\leq 1} \mathbb{E}_{\x\sim\mu}g(\x)-\mathbb{E}_{\x'\sim\nu}g(\x')$,
where the maximization is over all 1-Lipschitz functions $g:\mathbb{R}^d\rightarrow\mathbb{R}$. To compute the Wasserstein distance, we parameterize $g$ by a neural network $D$ (called discriminator). 
Then, the Wasserstein distance becomes
\begin{equation}\label{eq:Wasserstein_dist_dual}
W(\mu,\nu) \approx \max_{\left\|D\right\|_L\leq 1} \mathbb{E}_{\x\sim\mu}D(\x)-\mathbb{E}_{\x'\sim\nu}D(\x').
\end{equation}
In the conference version~\cite{gu2020spherical}, we enforce the constraint in Eq.~\eqref{eq:Wasserstein_dist_dual} with the gradient penalty technique as in~\cite{NIPS2017_892c3b1c}, which adds in a regularization term $-\beta\mathbb{E}_{\tilde{\x}\sim \tilde{P}_{\mu,\nu}}(\left\|\nabla_{\tilde{\x}}D(\tilde{\x})\right\|-1)^2$ to the objective function in Eq.~\eqref{eq:Wasserstein_dist_dual}, and $D$ is unconstrained. $\tilde{P}_{\mu,\nu}$ denotes the samples uniformly along lines between pairs of points sampled from distributions $\mu$ and $\nu$, \ie, $\tilde{\x}$ is constructed by $\tilde{\x}=\tau \x + (1-\tau)\x'\mbox{ where }\x\sim \mu, \x'\sim \nu, \tau\sim\mathcal{U}(0,1)$. However, this strategy introduces additional hyper-parameter $\beta$ and additional randomness from $\tau$, making the results less reproducible. In this journal version, we implement the Lipschitz constraint using spectral normalization~\cite{miyato2018spectral}, which normalizes the weight of each layer in $D$ by its spectral norm. We experimentally show in Sect.~\ref{sec:analysis} that the spectral normalization results in better reproducibility of our method than the gradient penalty.
Equation~\eqref{eq:Wasserstein_dist_dual} allows us to approximately compute the Wasserstein distance using gradient-based optimization algorithms on large-scale datasets. 
Compared with the other popular statistical distances, \textit{e.g.}, the JS-divergence, the Wasserstein distance enjoys better continuity for learning distributions~\cite{arjovsky2017wasserstein}.

\begin{figure}
	\centering
	\includegraphics[width=1.0\columnwidth]{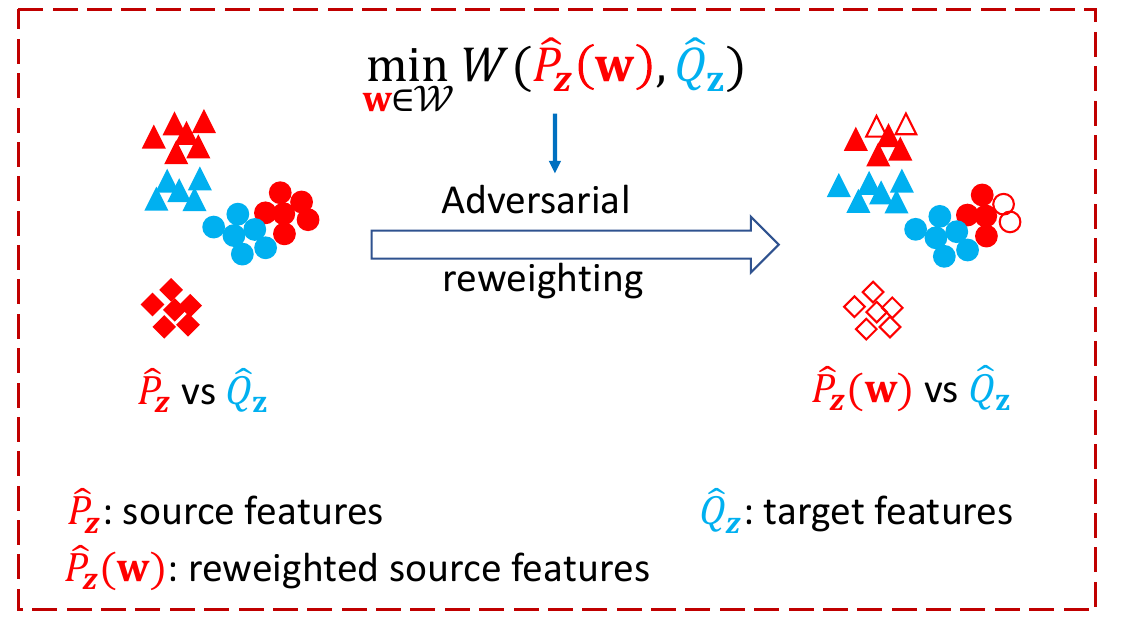}
	\caption{We minimize the Wasserstein distance between reweighted source feature distribution $\hat{P}_{\z}(\w)$ and target feature distribution $\hat{Q}_{\z}$ to learn weights $\w$. This idea is further transformed into the adversarial reweighting model.
}
	\label{fig:adv_idea}
\end{figure}
\subsubsection{Adversarial Reweighting Model}
We introduce source data weight $\frac{w_i}{m}$ (divided by $m$ is for the convenience of description) for each $i$, and denote $\mathbf{w}=(w_1,w_2,\cdot,w_m)$. Our adversarial reweighting model is defined in the feature space. We extract features by $\z_{i}^s=F(\x_i^s)$ and $\z_{j}^t=F(\x_j^t)$ for source and target domain data. 
The empirical distribution of the target domain features $\hat{Q}_{\z}$ is denoted as $\hat{Q}_{\z} = \frac{1}{n}\sum_{j=1}^{n}\delta_{\z_{j}^t}$.
The reweighted source domain feature distribution using $\mathbf{w}$ is denoted as $\hat{P}_{\z}(\mathbf{w}) = \frac{1}{m}\sum_{i=1}^{m}w_i\delta_{\z_{i}^s}$.
 Based on the aforementioned analysis that the source-private class data are more distant from target domain data than the source data of common classes, we minimize the Wasserstein distance between the reweighted source domain and target domain feature distributions to learn the weights (as illustrated in Fig.~\ref{fig:adv_idea}) as follows:
\begin{equation}\label{eq:weight_learning_model}
\min_{\mathbf{w}\in\mathbf{\mathcal{W}}} W(\hat{P}_{\z}(\mathbf{w}),\hat{Q}_{\z}).
\end{equation}
To avoid the mode collapse, \ie, the reweighted distribution is only supported on a few data, we enforce $\sum_{i=1}^{m}(w_i-1)^2 < \rho m$,
where $\rho$ is a hyper-parameter and is set to 5 in this paper. We will study the effect of $\rho$ in Sect.~\ref{sec:analysis}.
By this constraint, the difference between the learned data weights and the all-one vector (corresponding to unweighted data distribution) is not too large, avoiding the case that most samples are assigned with zero weight.
Then, the solution space is $\mathcal{W} = \{\mathbf{w}:\mathbf{w}=(w_1,w_2,\cdots, w_{m})^T,w_i\geq 0,  \sum_{i=1}^{m}w_i=m, \sum_{i=1}^{m}(w_i-1)^2 < \rho m\}$. 
With the approximation of the dual form in Eq.~\eqref{eq:Wasserstein_dist_dual}, Eq.~\eqref{eq:weight_learning_model} is transformed to the following adversarial reweighting model:
\begin{equation}\label{eq:adversarial_weight_learning_model}
\min_{\mathbf{w}\in\mathbf{\mathcal{W}}} \max_{\left\|D\right\|_L\leq 1} \frac{1}{m}\sum_{i=1}^{m}w_iD(\z_{i}^s) -\frac{1}{n}\sum_{j=1}^{n}D(\z_{j}^t).
\end{equation}
In Eq.~\eqref{eq:adversarial_weight_learning_model}, the discriminator $D$ is trained to maximize (\textit{resp.} minimize) the average of its outputs on the source (\textit{resp.} target) domain to discriminate the source and target domain data. Adversarially, the source data weights $\mathbf{w}$ are learned to minimize the reweighted average of the outputs of the discriminator on the source domain. As a result, the source data (closer to the target domain) with smaller discriminator outputs will be assigned with larger weights. We will discuss the adversarial training of Eq.~\eqref{eq:adversarial_weight_learning_model} in Sect.~\ref{sec:training}. 

\subsection{Adaptation Based on Adversarial Reweighting}\label{sec:adapt}
Based on the adversarial reweighting model, we perform PDA by defining a reweighted classification loss on reweighted source domain data, proposing an $\alpha$-power maximization mechanism to reduce prediction uncertainty on target domain, and utilizing the neighborhood reciprocity clustering to enforce the robustness.  We next discuss these three techniques.

\subsubsection{Reweighted Classification loss}
Based on the learned source domain data weights by the adversarial reweighting model, we define the reweighted classification loss on reweighted source domain data to implement the supervised training of the recognition model. The reweighted classification loss is defined using the cross-entropy by 
\begin{equation}\label{eq:class_loss}
    \mathcal{L}_{\rm cls}(F,C) = \frac{1}{m}\sum_{i=1}^{m}w_i \mathcal{J}(C(F(\x_i^s)),y_i^s).
\end{equation}
Following~\cite{liang2020we}, we employ the cross-entropy loss with label smoothing, \ie, $\mathcal{J}(\mathbf{p},y)=-\sum_k a_k$ $\log p_k$ for distribution $\mathbf{p}=(p_1,p_2,\cdots,$ $p_{|\Y|})^T$ where $a_k=1-\alpha$ if $k=y$, otherwise $a_k=\frac{\alpha}{|\Y|-1}$. $\alpha$ is set to 0.1.  
In the reweighted classification loss, the importance of the source-private class data is decreased because they are reweighted by smaller weights learned from the adversarial reweighting model. 
Minimizing the reweighted classification loss encourages the ability of the trained recognition model to classify common class data only.

\begin{figure*}[t]
    \centering
    \subfigure[$\mathcal{H}_{\alpha}(\mathbf{p}),\alpha=2$]{\includegraphics[width=0.48\columnwidth]{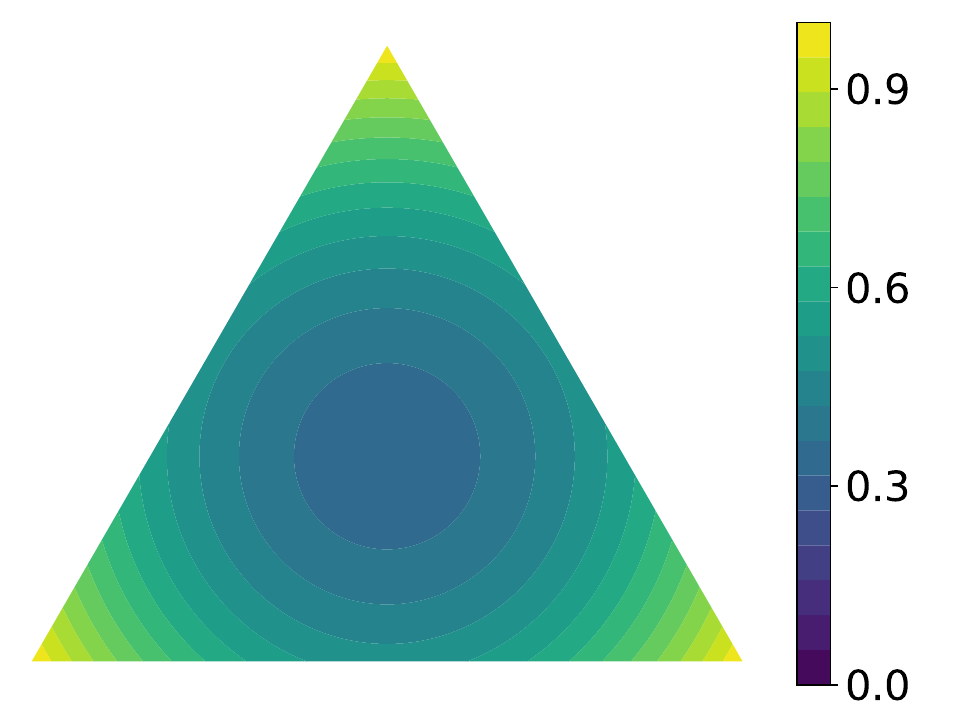}}
    \subfigure[$\mathcal{H}_{\alpha}(\mathbf{p}),\alpha=4$]{\includegraphics[width=0.48\columnwidth]{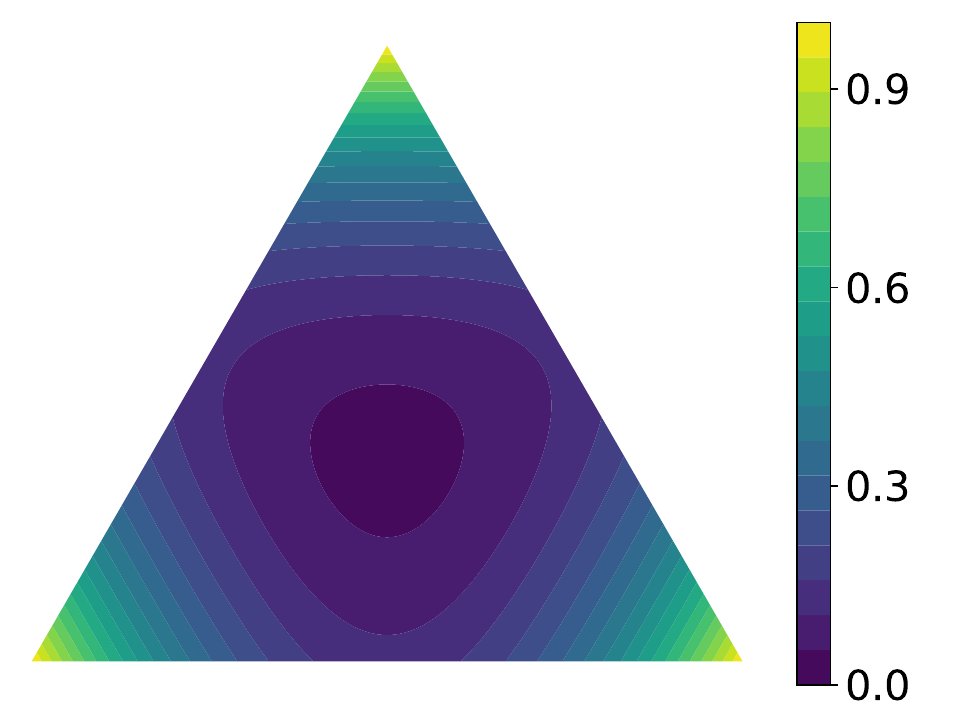}}
    \subfigure[$\mathcal{H}_{\alpha}(\mathbf{p}),\alpha=6$]{\includegraphics[width=0.48\columnwidth]{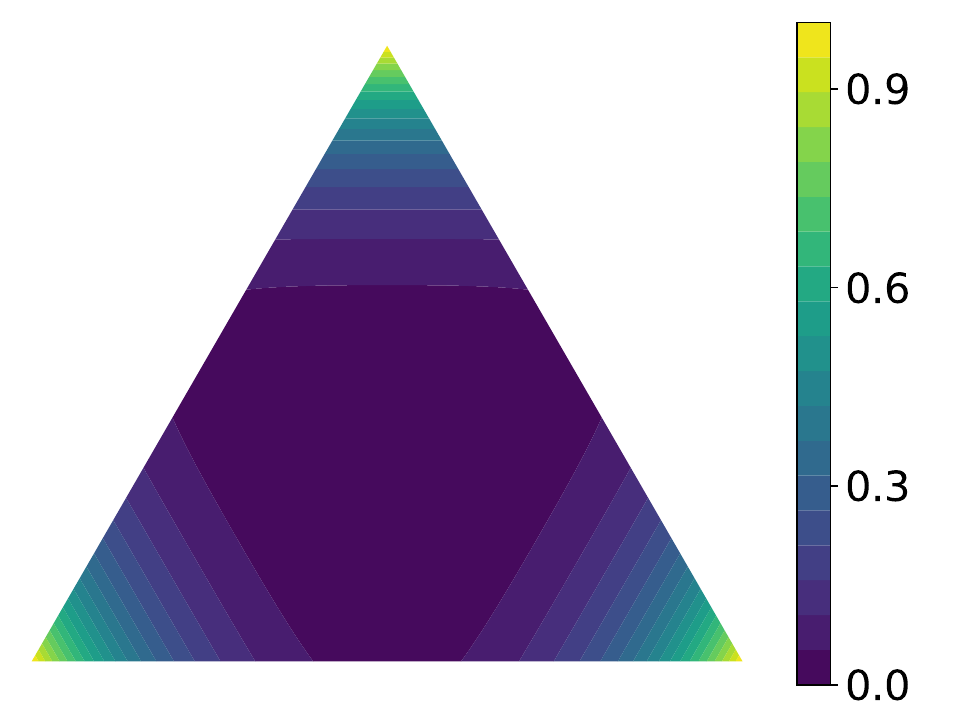}}
    \subfigure[$\mathcal{H}_{\alpha}(\mathbf{p}),\alpha=8$]{\includegraphics[width=0.48\columnwidth]{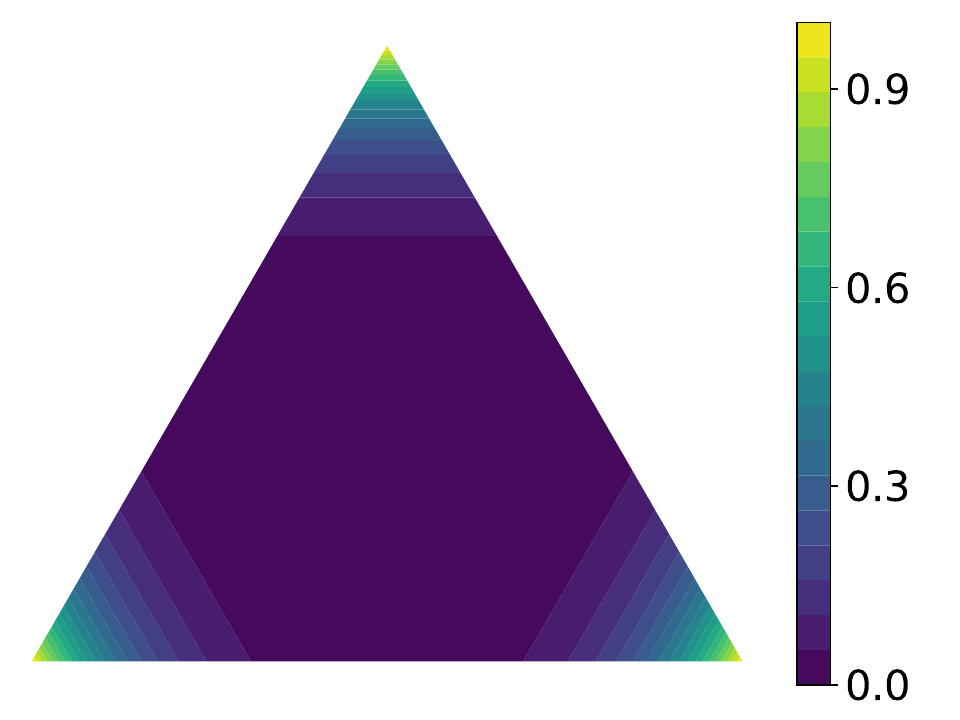}}
    \subfigure[$\|\nabla_{\mathbf{p}}\mathcal{H}_{\alpha}(\mathbf{p})\|,\alpha=2$]{\includegraphics[width=0.48\columnwidth]{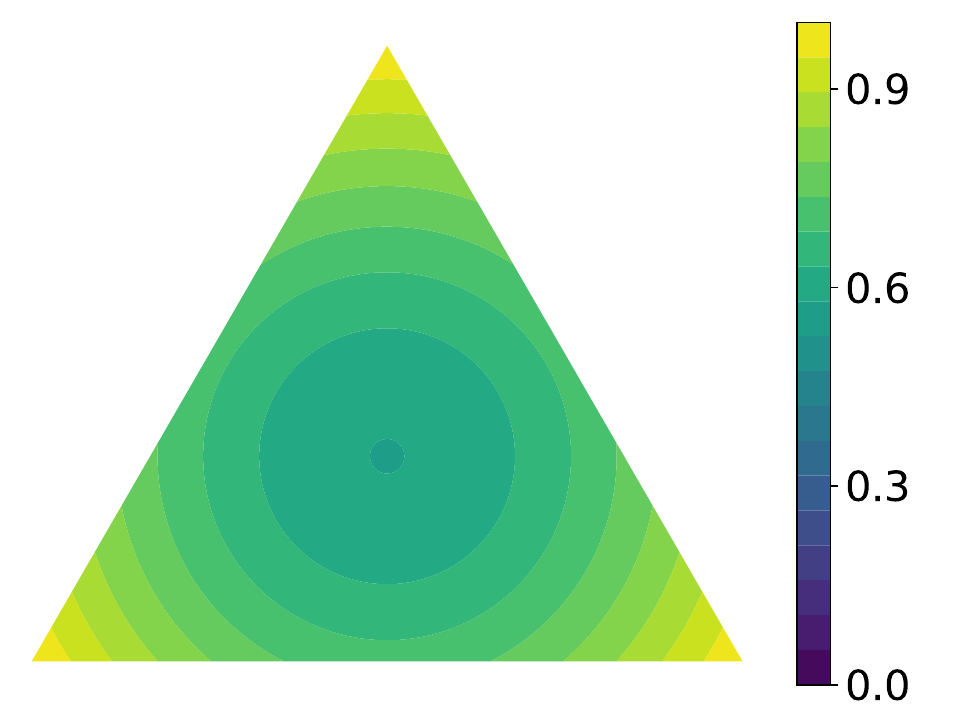}}
    \subfigure[$\|\nabla_{\mathbf{p}}\mathcal{H}_{\alpha}(\mathbf{p})\|,\alpha=4$]{\includegraphics[width=0.48\columnwidth]{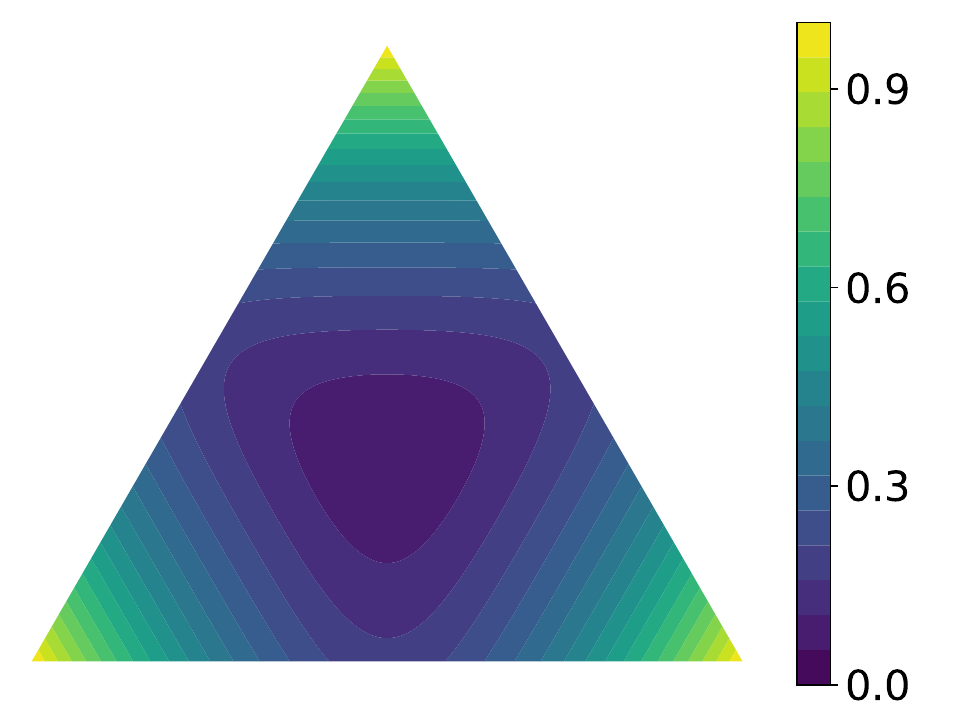}}
    \subfigure[$\|\nabla_{\mathbf{p}}\mathcal{H}_{\alpha}(\mathbf{p})\|,\alpha=6$]{\includegraphics[width=0.48\columnwidth]{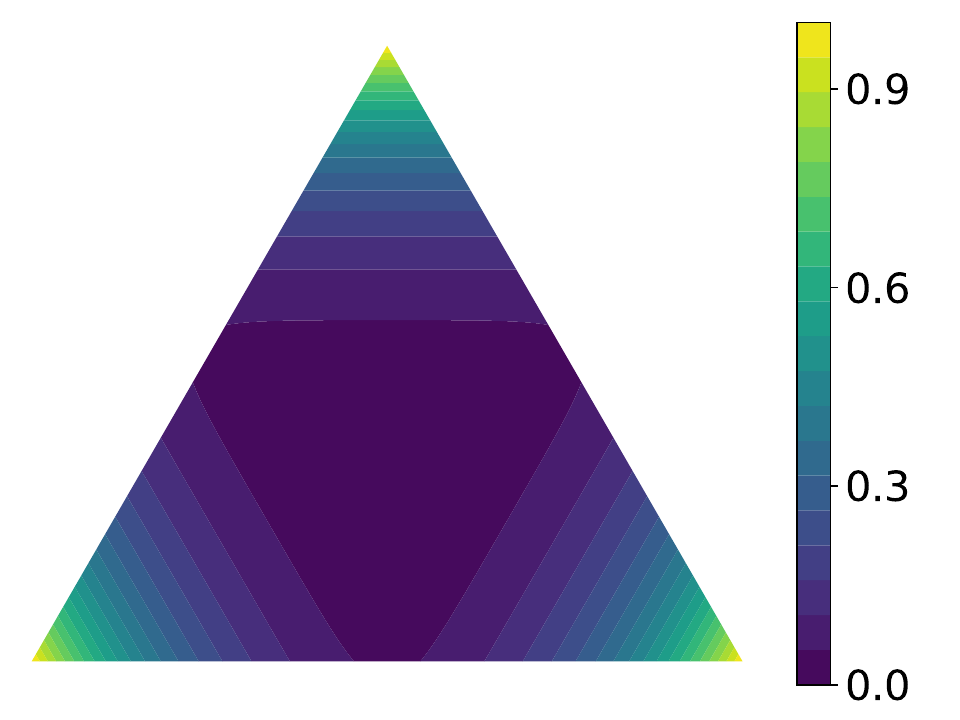}}
    \subfigure[$\|\nabla_{\mathbf{p}}\mathcal{H}_{\alpha}(\mathbf{p})\|,\alpha=8$]{\includegraphics[width=0.48\columnwidth]{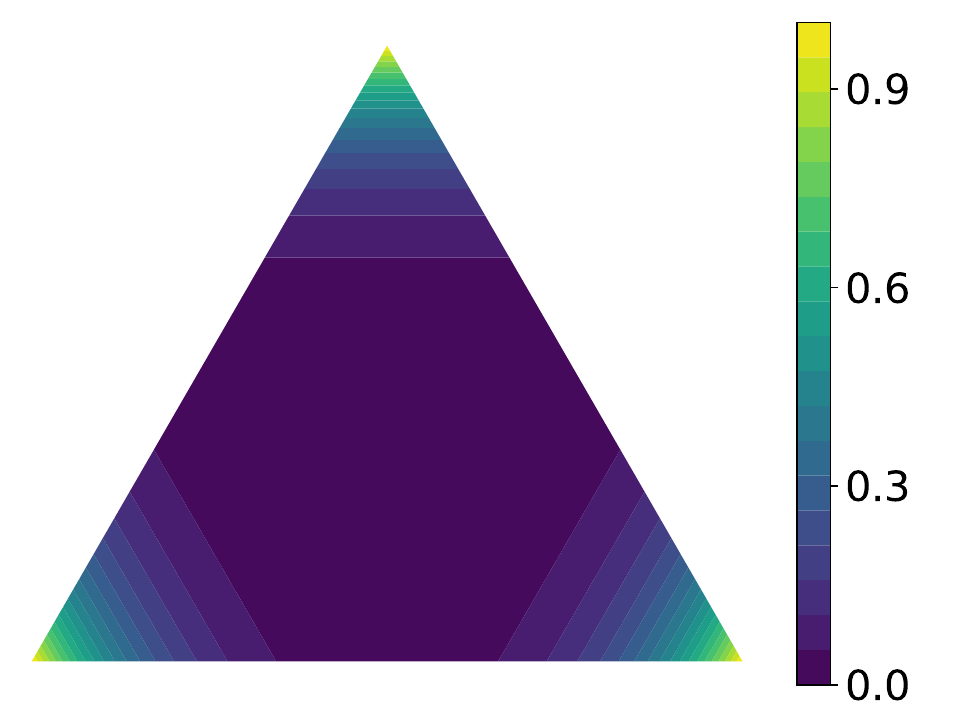}}
    \caption{Contour of (a-d) $\mathcal{H}_{\alpha}(\mathbf{p})$ and (e-h) gradient norm $\|\nabla_{\mathbf{p}}\mathcal{H}_{\alpha}(\mathbf{p})\|$ for 3-dimensional distribution $\mathbf{p}$ in probability simplex $\Delta^3$ under different $\alpha$. To show the relative magnitude for different $\mathbf{p}$, we normalize the gradient norm such that the maximum value is 1 in each figure of (e-h).}
    \label{fig:alpha_power}
\end{figure*}
\subsubsection{$\alpha$-Power Maximization}\label{sec:power_max}
Reducing the prediction uncertainty is shown to be effective in DA and even in PDA.
In our conference version~\cite{gu2021adversarial} of this work, we utilize entropy minimization that is widely adopted in DA methods to reduce the prediction uncertainty. In this journal paper, we propose to maximize the sum of $\alpha$-power of the output of the recognition model. The $\alpha$-power loss is defined by 
\begin{equation}
    \mathcal{L}_{\rm pow}(F) = -\frac{1}{n}\sum_{j=1}^{n}\mathcal{H}_{\alpha}(C(F(\x_j^t))),
\end{equation}
where $\mathcal{H}_{\alpha}(\mathbf{p})=\sum_k p_k^\alpha$. We empirically set $\alpha=6$ in experiments. $\mathcal{H}_{\alpha}(\mathbf{p})$ with $\alpha>1$ takes its maximum value when $\mathbf{p}$ is a one-hot distribution (with low uncertainty) and minimum value when $\mathbf{p}$ is a uniform distribution (with high uncertainty). Minimizing the $\alpha$-power loss (\ie, maximizing $\mathcal{H}_{\alpha}$) will reduce the uncertainty of $\mathbf{p}$. We plot the contour of $\mathcal{H}_{\alpha}(\mathbf{p})$ and its gradient norm $\|\nabla_{\mathbf{p}}\mathcal{H}_{\alpha}(\mathbf{p})\|$ under different $\alpha$ in Fig.~\ref{fig:alpha_power}. We can see that for larger $\alpha$, the samples with high prediction uncertainty (points near to the center in Figs.~\ref{fig:alpha_power}(a-h)) have smaller or even near-to-zero gradients of $\mathcal{H}_{\alpha}$, implying that these samples may not contribute to the training. The $\alpha$-power loss enriches the family of losses for reducing prediction uncertainty. Following~\cite{zhang2018importance}, we use the $\alpha$-power loss to update the feature extractor $F$. We show that $\alpha$-power maximization could be more effective for PDA than entropy minimization in Sect.~\ref{sec:analysis}.

\vspace{0.5\baselineskip} \noindent \textbf{Comparison of different uncertainty losses.} 
In the learning tasks with unlabeled data, \eg, semi-supervised learning and DA, reducing the uncertainty of model's prediction on unlabeled data in training can often improve the performance of the model~\cite{zhang2018importance,grandvalet2005semi}. The most used uncertainty loss in DA could be conditional entropy. Some methods~\cite{li2020deep,Cui_2020_CVPR} also investigate the mutual information and the nuclear norm for balancing the prediction over classes. \citet{liu2021cycle} minimize the $\alpha$-Tsallis entropy, \ie, $\frac{1}{\alpha-1}(1-\sum_k p_k^{\alpha})$, but choose $\alpha$ in a narrow interval $[1, 2]$, possibly limiting its ability. \citet{chen2019domain} propose to maximize the square loss of prediction probability, \ie, $\sum_k p_k^2$. Our $\alpha$-power loss is mostly related to the square loss and the $\alpha$-Tsallis entropy.
The square loss is a special case ($\alpha=2$) of the $\alpha$-power loss. Compared with the $\alpha$-Tsallis entropy, the $\alpha$-power loss is possibly more stable to $\alpha$ because of the presence of $\frac{1}{\alpha-1}$ in the $\alpha$-Tsallis entropy. We experimentally find that 
$2<\alpha\leq10$ often yields better results than $1<\alpha\leq2$ for PDA tasks, as in Sect.~\ref{sec:analysis}.

\subsubsection{Neighborhood Reciprocity Clustering}\label{sec:NRC}
The robustness or local consistency~\cite{yang2021exploiting} that enforces the outputs of the recognition model on the neighborhood samples are similar, is proven effective for closed-set DA. We utilize the neighborhood reciprocity clustering~\cite{yang2021exploiting} to impose the robustness for PDA in this paper.

We first build the feature and score banks on target domain data by
\begin{equation}
   \mathcal{Z}=\{\z^t_1,\z^t_2,\cdots,\z^t_n\}, \mathcal{S}=\{\s^t_1,\s^t_2,\cdots,\s^t_n\}
\end{equation}
where $\z^t_j=F(\x_j^t)$ is the feature, and $\s^t_j=C(F(\x_j^t))$ is the classification score outputted by the recognition model. At each training step, feature and score banks are updated by replacing the old items with the corresponding items from the current mini-batch samples. 

Given target sample $\x_j^t$ with feature $\z_j^t$, the index set of its $K$-nearest neighbors\footnote{Following \cite{yang2021exploiting}, we use the cosine distance to find the neighbors.} in the feature bank is denoted as $\mathcal{N}_K^j$. To better identify the true neighbors, we introduce the affinity\footnote{As in \cite{yang2021exploiting}, we set $K=M=5$ for VisDA-2017 dataset and $K=4$, $M=3$ for the other datasets in experiments.} as 
\begin{equation}\label{eq:aff}
    A_{j,j'}=\begin{cases}
        1 & \mbox{ if } j' \in \mathcal{N}_K^{j} \mbox{ and } j\in \mathcal{N}_M^{j'};\\
        0.1 & \mbox{ otherwise}.
    \end{cases}
\end{equation}
The intuition of the affinity in Eq.~\eqref{eq:aff} is that if $\z^t_j$ and $\z^t_{j'}$ are both neighbors of each other, they could be true neighbors (reciprocal neighbors). Finally, the neighborhood reciprocity clustering loss is defined as 
\begin{equation}\label{eq:nrc_loss}
    \mathcal{L}_{\rm nrc}(F) = -\frac{1}{n}\sum_{j=1}^n\sum_{j'\in\mathcal{N}_K^j}A_{j,j'}\langle\s_{j'}^t,C(F(\x_j^t))\rangle.
\end{equation}
We use $\mathcal{L}_{\rm nrc}(F)$ to update $F$. Minimizing $\mathcal{L}_{\rm nrc}(F)$ encourages the similar output of the neighborhood samples so that the robustness could be imposed. 

{Note that Zhong et al.~\cite{zhong2017re} proposed a $k$-reciprocal encoding method to re-rank the re-identification results for the person re-identification task, which may be related to the neighborhood reciprocity clustering in our approach. They encode the $k$-reciprocal nearest neighbors of a given image into a single vector, which is used for re-ranking under the Jaccard distance to help accomplish the person re-identification task. Differently, we utilize the reciprocal neighbors to enforce the robustness of the deep recognition model, instead of aggregating them as in~\cite{zhong2017re}.}

\begin{algorithm*}[t] 
	\caption{ Training algorithm.} 
	\label{alg:training} 
	{\bf Input:} Source and target domain training datasets $S$ and $T$\\
	{\bf Output:} Trained networks $F, C$
	\begin{algorithmic}[1]
	\State ${S}' = {S}$
	\State Initialize $\mathbf{w}$ by $w_i=1$, $i=1,2,\cdots,|S'|$
    \For{$step = 0, 1, 2, \cdots$}
    \State Sample a mini-batch data $(X_s,Y_s)$ and $X_t$  from ${S}'$ and ${T}$, respectively 
    \State Update $F$ and $C$ with the loss in Eq.~\eqref{eq:overall_loss} computed on $(X_s,Y_s)$ and $X_t$, using the SGD
    \If{$step \%N = 0$ and $step>0$}
    \If{$|{S}|>$1000k}
            \State Randomly select $N'$ samples from ${S}$ to construct ${S}'$
        \EndIf
        \State Extract features for all training samples in both ${S}'$ and ${T}$
        \State Train $D$  as in Eq.~\eqref{eq:Wasserstein_dist_dual} with spectral normalization to enforce the 1-Lipschitz constraint
        \If{$|S'|>$20k}
            \State Split source data indexes into $\left[\frac{m}{\mbox{20k}}\right]$ groups
            \State Solve Eq.~\eqref{eq:optimization_of_w} for each group to update $\w$
        \Else
            \State Solve Eq.~\eqref{eq:optimization_of_w} for all source data indexes to update $\w$
        \EndIf
    \EndIf
    \EndFor
	\end{algorithmic} 
\end{algorithm*}

\subsection{Training}\label{sec:training}
The overall training loss is
\begin{equation}\label{eq:overall_loss}
    \mathcal{L}  = \mathcal{L}_{\rm cls} + \lambda \mathcal{L}_{\rm pow} + \mathcal{L}_{\rm nrc},
\end{equation}
where $\lambda$ is a hyper-parameter. Note that in the classification loss $\mathcal{L}_{\rm cls}$ in Eq.~\eqref{eq:class_loss}, the source domain data weights $\w$ should be learned in training. We then devise an iterative training algorithm to alternately train $F$ and $C$, and learn $\w$.

\subsubsection{Training Algorithm}\label{sec:training_algorithm}
We initialize $\mathbf{w}$ by $w_i=1$ for all $i$.  Then, we alternately run the following two procedures when training the networks.

\vspace{0.5\baselineskip} \noindent \textbf{Updating ${F\mbox{ and }C}$ with fixed ${\mathbf{w}}$.}
Fixing $\mathbf{w}$, we update $F$ and $C$ to minimize the loss in Eq.~\eqref{eq:overall_loss} for $N$ steps, using the mini-batch stochastic gradient descent algorithm. 

\vspace{0.5\baselineskip} \noindent \textbf{Updating ${\mathbf{w}}$ with fixed ${F\mbox{ and } C}$.}
Fixing $F$ and $C$, we extract the features for all training data on both source and target domains, and learn $\mathbf{w}$ in Eq.~\eqref{eq:adversarial_weight_learning_model}.
Since Eq.~\eqref{eq:adversarial_weight_learning_model} is a min-max optimization problem, we can alternately optimize the weights $\mathbf{w}$ and the discriminator $D$ by fixing the other one as known. To reduce the computational cost, we only perform the alternate optimization once, which yields satisfactory performance in experiments. Therefore, we first fix $w_i=1$ for all $i$ and optimize $D$ to maximize the objective function in Eq.~\eqref{eq:adversarial_weight_learning_model}. Then, fixing the discriminator, we optimize $\mathbf{w}$ as follows. We denote $d_i = D(\z_{i}^s)$ and $\mathbf{d}=(d_1,d_2,\cdots,d_{m})^T$.  
The optimization problem for $\mathbf{w}$ becomes 
\begin{equation}\label{eq:optimization_of_w}
\begin{split}
\min_{\mathbf{w}} \hspace{0.1cm}&\mathbf{d}^T\mathbf{w} ,\\
\mbox{s.t.}\hspace{0.1cm}  &w_i\geq 0, \sum_{i=1}^{m}w_i =m,
\sum_{i=1}^{m}(w_i-1)^2\leq\rho m.
\end{split}
\end{equation}
Equation~\eqref{eq:optimization_of_w} is a second-order cone program.  We use the CVXPY~\cite{diamond2016cvxpy} package to solve Eq.~\eqref{eq:optimization_of_w}.

\vspace{0.5\baselineskip}
The above algorithm faces challenges when $m$ is large. 
We next discuss how to tackle the cases with larger size $m$ of the source dataset.
(1) In the first case (20k$<m<$1000k), solving Eq.~\eqref{eq:optimization_of_w} for all source data is time-consuming or even infeasible. For such a case, once the discriminator is trained and fixed, we split the source data indexes into several groups and solve the problem \eqref{eq:optimization_of_w} for each group. The number of groups is $l=\left[\frac{m}{\mbox{20k}}\right]$, where $[\cdot]$ is the floor function. The $i$-th group is $\{i,l+i,2l+i,\cdots\}$. Such a splitting strategy ensures that the empirical feature distribution corresponding to each group can approximate the empirical distribution of all features. 
(2) In the second case ($m>$1000k), extracting the features of all source samples is time-consuming. For such a case, we randomly sample a subset (with size $N'$) of the source dataset to learn their weights $\w$, and then update $F \mbox{ and } C$ on the subset and weights, and iterate these two above procedures.
We give the pseudo-code of the training algorithm in Algorithm~\ref{alg:training}.

\subsection{Network Details}\label{sec:network}
For the discriminator $D$, we use the same architecture as \cite{ pmlr-v37-ganin15} (three fully connected layers with 1024, 1024 and 1 nodes respectively), excluding the last sigmoid function.
For the feature extractor $F$, we use the ResNet-50~\cite{He_2016_CVPR} pre-trained on ImageNet~\cite{russakovsky2015imagenet}, excluding the last fully-connected layer. The classifier $C$ is a fully-connected layer. Inspired by~\cite{gu2020spherical,liang2020we}, we perform $L_2$-normalization after the feature extractor to enforce the same norm\footnote{We set the norm as in~\cite{gu2020spherical}.} of features, and normalize each row of the weight of classifier $C$ to a unit vector\footnote{On VisDA-2017 dataset, we do not normalize the weight of $C$. We empirically find that on VisDA dataset, the unnormalized weight of $C$ yields better result.}.

Since the features are normalized, the feature distribution is not a Gaussian distribution. Therefore, the commonly adopted initialization strategies~\cite{pmlr-v9-glorot10a,7410480} that assume the Gaussian distribution could not be suitable. 
While the initialization of the classifier can affect the reproducibility as we experimentally show in Sect.~\ref{sec:analysis}.
Following the idea of preserving the variance in~\cite{pmlr-v9-glorot10a,7410480}, we use Principal Component Analysis (PCA) to initialize the weight of the classifier to preserve the feature variance. Specifically, we compute principal components $V = (\mathbf{v}_1,\mathbf{v}_2,\cdots,\mathbf{v}_{|\Y|})$ of target features. We then compute the  principal component scores of each source feature and assign the source feature to the principal component with the largest score. Between the class label and the assigned principal component, we can calculate the confusion matrix $\mathcal{M}\in\mathbb{R}^{|\Y|\times|\Y|}$, of which the entry $\mathcal{M}_{ij}$ is the ratio of the $i$-th class source samples being assigned to the $j$-th principal component. Finally, the weight $W$ of $C$ is initialized by $W = \mathcal{M}V^T$. This PCA-based initialization strategy reduces the randomness compared with \cite{pmlr-v9-glorot10a,7410480}, and may achieve better reproducibility, as shown in Sect.~\ref{sec:analysis}.

{
\subsection{Extension to Open-set and Universal DA}\label{sec:extend_DA}
We extend our approach to open-set DA and universal DA in this section. In open-set DA, the unlabeled target domain contains private classes that are absent in the source domain. In universal DA, both the labeled source domain and unlabeled target domain possibly contain private classes. The goals of both open-set DA and universal DA are to identify the target-private class data as ``unknown'' and classify the target domain common class data. To extend our approach ARPM to open-set DA and universal DA, we employ our adversarial reweighting model to reweight target domain data, such that the target domain common (\resp, private) class data are assigned larger (\resp, smaller) weights. That is, in Eq.~\eqref{eq:adversarial_weight_learning_model}, we reweight $D(\mathbf{z}_j^t)$ by a weight $w_j$ and $D(\mathbf{z}_i^s)$ is unweighted. Based on the reweighted target domain data, we reduce (\resp, increase) the prediction uncertainty on the target domain common (\resp, private) class data so that the target-private class data can be detected using the prediction uncertainty.

More specifically, we sort the target samples according to the learned data weights in ascending order to obtain $\x^t_{(1)}, \x^t_{(2)}, \cdots, \x^t_{(n)}$. Since the target samples with larger weights possibly belong to common classes, we define the following reweighted $\alpha$-power loss to reduce the prediction uncertainty on these data:
\begin{equation}
    \mathcal{L}_{\rm com}(F) = -\frac{1}{n\tau}\sum_{l=n-n\tau}^n w_{(l)} \mathcal{H}_{\alpha}(C(F(\x_{(l)}^t))),
\end{equation}
where we discard the $n-n\tau$ samples with smaller weights to further enforce robustness. 
We also maximize the following $\alpha$-power loss on the $n\tau$ samples with smaller weights that are more possibly private classes, to increase their prediction uncertainty:
\begin{equation}
    \mathcal{L}_{\rm pri}(F) = -\frac{1}{n\tau}\sum_{l=1}^{n\tau} \mathcal{H}_{\alpha}(C(F(\x_{(l)}^t))).
\end{equation}
The total training loss for open-set DA and universal DA is 
\begin{equation}
    \mathcal{L} = \mathcal{L}_{\rm cls} + \lambda'(\mathcal{L}_{\rm com} - \mathcal{L}_{\rm pri}).
\end{equation}
$\mathcal{L}_{\rm cls}$ is defined in Eq.~(4) in the revised paper, in which we do not reweight the source domain data. For universal DA, one may also reweight the source domain data in $\mathcal{L}_{\rm cls}$ to reduce the importance of source-private class data. In this extension, for the sake of a unified formulation for open-set DA and universal DA, we do not reweight the source domain data.

\subsection{Extension to TTA}\label{sec:extend_TTA}
This section extends our approach to TTA. TTA trains the recognition model on a source domain and evaluates it on an unknown target domain. Different from vanilla machine learning which directly makes predictions for a mini-batch of test samples at test time, TTA allows adapting the model for a few steps on the mini-batch of test samples in an unsupervised manner and then makes predictions for them. At the test time of TTA, we are given the source-trained recognition model. The test data arrive sequentially. At each time, we only access a mini-batch of target data $B=\{\x_j^t\}_{j=1}^b$ where $b$ is the batch size. The goal of TTA is to update the model on the mini-batch data $B$ and then make predictions on them. Inspired by TENT~\cite{wang2021tent} that updates the parameters of the BN layers by entropy minimization for one step, we update the parameters of the BN layers by our proposed $\alpha$-power maximization for one step. Specifically, if we denote the set of parameters of the BN layers in the model as $\Theta_{\rm bn}$, the $\alpha$-power loss is defined as 
        \begin{equation}
            \mathcal{L}_{\rm pow}(\Theta_{\rm bn}) = \frac{1}{b}\sum_{j=1}^b \mathcal{H}_{\alpha}(C(F((\x_j^t))).
        \end{equation}
We update $\Theta_{\rm bn}$ using $\mathcal{L}_{\rm pow}$ by one-step gradient descent, and then predict the label of $B$. Note that the updated $\Theta_{\rm bn}$ will be taken as the initial value for the next mini batch. 
Our approach for TTA is dubbed Test $\alpha$-Power Maximization (TPM).
}

\section{Theoretical Analysis}\label{sec:theory}
This section presents the theoretical analysis of our method for PDA. We first provide the notations and assumptions,  then present an upper bound for PDA, and finally analyze that our proposed ARPM approximately realizes the minimization of the upper bound. 

\vspace{0.5\baselineskip} \noindent \textbf{Notations.}
We denote $P^c$ as the source domain common class data distribution, \ie, $\tP(\x,y)=P(\x,y|y\in\Y_{\rm com})$. For any $\x\in\X$, its neighborhood is defined by $\mathcal{N}(\x)=\{\x':d(\x,\x')\leq \xi\}$ for some $\xi>0$, where $d(\cdot,\cdot)$ is the distance on $\X$. For any set $A\subset \X$, we define $\mathcal{N}(A) = \bigcup_{\x\in A}\mathcal{N}(\x)$. For convenience, we denote $f:\X\rightarrow [0,1]^{|\Y|}$ as the recognition model, \ie, $f=C\circ F$. The decision function corresponding to $f$ is $\tilde{f}$ defined by $\tilde{f}(\x)=\arg\max_{i\in\Y}{f}(\x)_i$. We use  $\F$/$\tilde{\F}$ to denote the sets of all possible $f$/$\tilde{f}$.
For any $f \in \F$, its margin on sample $\x$ is defined by $M(f(\x))=f(\x)_{i^*}-\max_{i\neq i^*}f(\x)_i$, where $i^* = \arg\max_{i}f(\x)_i$. $1-M(f(\x))$ reflects the \textit{prediction uncertainty} of $f$.  Specifically, if $1-M(f(\x))$ is smaller, $f(\x)$ approaches the one-hot vector leading to low prediction uncertainty.
The expected margin of $f$ on distribution $P$ is ${M}_P(f)=\E_{(\x,y)\sim P}M(f(\x))$. The \textit{robustness} of $f$ on distribution $P$ is defined by $R_P(f) = P(\{\x,\exists\x'\in\mathcal{N}(\x),\tilde{f}(\x)\neq\tilde{f}(\x')\})$.
For any $i\in \Yc$, we denote $\tP_i = \tP(\x|y=i)$ as class-wise data distribution of the $i$-th class. $Q_i$ is similarly defined. 

\vspace{0.5\baselineskip} \noindent \textbf{Assumptions.}
To develop the theory for PDA, we assume that:\\
(i) $\forall i \in \mathcal{Y}_{\rm com}$, $\frac{Q(y=i)}{P^{c}(y=i)}\leq r$;\\
(ii) $\forall i,j \in \mathcal{Y}_{\rm com}$, the supports of $Q_i$ and $Q_j$ are disjoint for $i\neq j$;\\
(iii) For any $i\in\mathcal{Y}_{\rm com}$, $\frac{1}{2}(\tP_i+Q_i)$ satisfies $(q,\epsilon)$-constant expansion (see Definition \ref{thm:def_exp}) for some $q,\epsilon \in (0,1)$.



\vspace{0.5\baselineskip}
\begin{definition}[$(q,\epsilon)$-constant expansion]\label{thm:def_exp}
    We say that a distribution $P$ satisfies $(q,\epsilon)$-constant expansion for some $q,\epsilon \in (0,1)$, if for any set $A\subset\mathcal{X}$ with $\frac{1}{2}\geq P (A)\geq q$, we have $P(\mathcal{N}(A)\backslash A)>\min\{\epsilon,P(A)\}$.
\end{definition}

Assumption (i) is realistic because $Q(y=i)$ is finite. Assumption (ii) implies that any target sample has a unique class label. 
We follow~\cite{liu2021cycle,wei2021theoretical,pmlr-v139-cai21b} to take the expansion assumption (Assumption (iii)) of the mixture distribution. Intuitively, this assumption indicates that the conditional distributions $\tP_i$ and $Q_i$ are closely located and regularly shaped, enabling knowledge transfer from the source domain to the target domain. \citet{wei2021theoretical} justified this assumption on real-world datasets with BigGAN~\cite{brock2018large}.

\vspace{0.5\baselineskip} 
\begin{theorem}\label{thm:pda_bound}
    Suppose the above Assumptions hold. For any $f\in\F$ and any $\eta\in (0,1)$, if $f$ is $L$-Lipschiz \textit{w.r.t.} $d(\cdot,\cdot)$, we have 
    \begin{equation}
    \begin{split}
        \varepsilon_Q(f) \leq &  r\varepsilon_{\tP}(f) + c_1 R_{\frac{\tP+Q}{2}}(f)\\
        &+c_2(1-{M}_{\frac{\tP+Q}{2}}(f)) + 2rq,
    \end{split}
    \end{equation}
    where the coefficients $c_1 = \frac{2\eta r}{\min\{\epsilon,q\}(1+r)}$ and $c_2 = \frac{2r(1-\eta) }{\min\{\epsilon,q\}(1-2L\xi)(1+r)}$ are constants to $f$.
\end{theorem}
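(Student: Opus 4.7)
The plan is to decompose the target error by class, invoke Assumption (i) to change measure from $Q$ to $\tP$ at the cost of a factor $r$, and then on each class-wise mixture $\mu_i = \tfrac{1}{2}(\tP_i + Q_i)$ use the constant-expansion property of Assumption (iii) to bound the misclassified mass by a ``boundary'' mass, which in turn is controlled by the robustness $R$ and the margin $M$ via the Lipschitz assumption on $f$.

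Concretely, let $V_i=\{\x : \tilde f(\x)\ne i\}$. Using Assumption (ii) to partition by class and Assumption (i) to reweight,
\begin{equation}
\varepsilon_Q(f) = \sum_{i\in\Yc} Q(y=i)\, Q_i(V_i) \le r\sum_{i\in\Yc} \tP(y=i)\, Q_i(V_i).
\end{equation}
Since $Q_i(V_i)\le 2\mu_i(V_i)$, the task reduces to controlling $\mu_i(V_i)$ on top of a residual $\tP_i(V_i)$ that will consolidate into $r\varepsilon_{\tP}(f)$.

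Applying the $(q,\epsilon)$-expansion to $V_i$, I would case-split: if $\mu_i(V_i)\le q$ the bound $Q_i(V_i)\le 2q$ is immediate and, aggregated with weights $r\tP(y=i)$, yields the $2rq$ term; otherwise expansion gives $\mu_i(\mathcal{N}(V_i)\setminus V_i) > \min\{\epsilon,\mu_i(V_i)\}$, and a short sub-case analysis (with the complement used when $\mu_i(V_i)>1/2$) produces $\mu_i(V_i)\le \mu_i(\mathcal{N}(V_i)\setminus V_i)/\min\{\epsilon,q\}$. The crucial geometric observation is that every point $\x \in \mathcal{N}(V_i)\setminus V_i$ is correctly classified as $i$ yet has a neighbor of a different prediction, so $\x$ lies in the non-robust set; moreover, by the $L$-Lipschitz property of $f$, any two such $\x,\x'$ with differing argmaxes satisfy $M(f(\x))\le 2L\xi$, hence $1-M(f(\x))\ge 1-2L\xi$. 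I would then interpolate these two certificates pointwise via the parameter $\eta$:
\begin{equation}
\mathbf{1}_{\mathcal{N}(V_i)\setminus V_i}(\x) \le \eta\,\mathbf{1}_{\mathrm{non\text{-}rob}}(\x) + \frac{1-\eta}{1-2L\xi}\bigl(1-M(f(\x))\bigr).
\end{equation}

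Integrating against $\mu_i$, summing with weights $r\tP(y=i)$, and using $\tP(y=i)+Q(y=i)\le(1+r)\tP(y=i)$ to aggregate the class-conditional mixtures into the global mixture $\tfrac{1}{2}(\tP+Q)$ (which supplies the common $2/(1+r)$ factor), yields exactly the coefficients $c_1 = 2\eta r /(\min\{\epsilon,q\}(1+r))$ and $c_2 = 2r(1-\eta)/(\min\{\epsilon,q\}(1-2L\xi)(1+r))$. The main obstacle I anticipate is the expansion case split---particularly the regime $\mu_i(V_i)>1/2$ where one must pass to the complement and reinterpret the boundary---together with the bookkeeping required to convert per-class inequalities into a single inequality on $\tfrac{1}{2}(\tP+Q)$ with the precise stated constants; the remaining steps are essentially Lipschitz estimates and summation.
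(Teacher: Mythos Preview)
Your decomposition differs genuinely from the paper's. The paper never applies expansion to the error set $V_i$; it splits $\Yc$ by whether the class-conditional non-robust mass $\mu_i(B)$ (with $B$ the non-robust set and $\mu_i=\tfrac12(\tP_i+Q_i)$) is below $\min\{\epsilon,q\}$. On the good set $S_1$ it cites a black-box lemma from \cite{liu2021cycle} giving $\varepsilon_{Q_i}\le\varepsilon_{\tP_i}+2q$, which is where $r\varepsilon_{\tP}(f)+2rq$ comes from; on the bad set $S_2$ it simply bounds $\varepsilon_{Q_i}\le 1$ and controls $\sum_{i\in S_2}Q(y=i)$ by a Markov-type inequality. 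The factor $(1+r)^{-1}$ in $c_1,c_2$ arises from the conversion $Q(y=i)\le\tfrac{2r}{1+r}\cdot\tfrac{\tP+Q}{2}(y=i)$, and the $\eta$-interpolation between $R$ and $1-M$ is applied only at the very end, to the global quantity $R_{\frac{\tP+Q}{2}}(f)$.

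Two gaps in your route. First, $r\varepsilon_{\tP}(f)$ does not emerge where you say: $Q_i(V_i)\le 2\mu_i(V_i)$ leaves no ``residual $\tP_i(V_i)$'' (since $2\mu_i(V_i)=\tP_i(V_i)+Q_i(V_i)$, this is a tautology). In your framework the source-error term should surface in the regime $\mu_i(V_i)>1-q$, where expansion applies to neither $V_i$ nor $V_i^c$ and one must fall back on $Q_i(V_i)\le 1\le\tP_i(V_i)+2q$; your case split (``$\mu_i(V_i)\le q$'' versus expansion/complement) does not isolate this. Second, the aggregation fails as written. You sum with weights $r\tP(y=i)$ and cite $\tP(y=i)+Q(y=i)\le(1+r)\tP(y=i)$, but this \emph{lower}-bounds $\tP(y=i)$---the wrong direction---and $\sum_i\tP(y=i)\int g\,d\mu_i$ contains $\tfrac12\sum_i\tP(y=i)\int g\,dQ_i$, which cannot be controlled by $\int g\,d\tfrac{\tP+Q}{2}$ at all, since Assumption~(i) gives no upper bound on $\tP(y=i)/Q(y=i)$. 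Keeping the weights $Q(y=i)$ instead would salvage a valid bound, but only with constant $\tfrac{2r}{\min\{\epsilon,q\}}$ in place of $\tfrac{2r}{(1+r)\min\{\epsilon,q\}}$; the sharper $(1+r)^{-1}$ in the paper relies on summing only class \emph{weights} on $S_2$, never integrating against the class-conditional mixtures $\mu_i$.
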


The proof is given in Appendix. Theorem~\ref{thm:pda_bound} implies that the target domain expected error $\varepsilon_Q(f)$ is bounded by the expected error $\varepsilon_{\tP}(f)$ on source domain common class data, the robustness $R_{\frac{\tP+Q}{2}}(f)$ and prediction uncertainty $1-{M}_{\frac{\tP+Q}{2}}(f)$ on mixture distribution of source and target domains.

In our method, the adversarial reweighting model aims to assign larger weights to source domain common class data, and the reweighted source data distribution is expected to approach the data distribution of source common classes. Minimizing the classification loss $\mathcal{L}_{\rm cls}$ on the reweighted source domain data distribution enforces the recognition model to predict the label of source common class data. Therefore, the expected error $\varepsilon_{\tP}(f)$ on source domain common class data could be minimized. 

We notice that $R_{\frac{\tP+Q}{2}}(f)=\frac{1}{2}(R_{\tP}(f)+R_{Q}(f))$. Since $f$ is trained using the classification loss with the class labels on the source domain, the prediction of $\tilde{f}$ on the neighborhood samples should be their class labels and thus are similar, because the neighborhood samples should belong to the same class. This implies that $R_{\tP}(f)$ is minimized. The neighborhood reciprocity clustering loss $\mathcal{L}_{\rm nrc}$ encourages similar outputs of the recognition model on neighborhood samples on target domain, which implies $R_{Q}(f)$ is minimized. 

Note that $1-{M}_{\frac{\tP+Q}{2}}(f)=\frac{1}{2}(1-{M}_{\tP}(f)) + \frac{1}{2}(1-{M}_{Q}(f))$.  By the classification loss, the outputs of $f$ on the source domain common class data are near to one-hot vectors, so that $1-{M}_{\tP}(f)$ is minimized.  Our $\alpha$-power loss $\mathcal{L}_{\rm pow}$ reduces the prediction uncertainty on the target domain, realizing the minimization of $1-{M}_{Q}(f)$ which reflects the prediction uncertainty on target domain.


\section{Experiments}\label{sec:experiment}
{We conduct experiments on benchmark datasets to evaluate our ARPM approach, and compare it with recent methods. The source code is available at \url{https://github.com/XJTU-XGU/ARPM}.}

\subsection{Setup}\label{sec:setup}
{For ease of understanding, we discuss the setup for PDA in this section. We will discuss the setup for open-set and universal DA in Sect~\ref{sec:results_os_unida} and for TTA in Sect.~\ref{sec:results_tta}. }

\vspace{0.5\baselineskip} \noindent \textbf{Datasets.}
\textit{Office-31} dataset~\cite{saenko2010adapting} contains 4,652 images of 31 categories, collected from three domains: Amazon (A), DSLR (D), and Webcam (W). 
\textit{ ImageNet-Caltech} is built with ImageNet (I)~\cite{russakovsky2015imagenet} and Caltech-256 (C)~\cite{griffin2007caltech}, respectively including 1000 and 256 classes. 
\textit{ Office-Home}~\cite{venkateswara2017deep} consists of four domains: Artistic (A), Clip Art (C), Product (P), and Real-World (R), sharing 65 classes.
\textit{ VisDA-2017}~\cite{peng2017visda} is a large-scale challenging dataset, containing two domains: Synthetic (S) and Real (R), with 12 classes.
\textit{ DomainNet}~\cite{peng2019moment} is another large-scale challenging dataset, composed of six domains with 345 classes.

 \begin{table*}[t]
	\centering
    
	\caption{Accuracy (\%) on Office-Home dataset for PDA. The best results are bolded. *AR is our conference version.}
	\setlength{\tabcolsep}{2.1pt}	
	\begin{tabular}{lcccccccccccc|c}
		\toprule
		Method                                             & A$\rightarrow$C & A$\rightarrow$P& A$\rightarrow$R & C$\rightarrow$A& C$\rightarrow$P & C$\rightarrow$R & P$\rightarrow$A & P$\rightarrow$C & P$\rightarrow$R& R$\rightarrow$A & R$\rightarrow$C & R$\rightarrow$P & Avg\\
		\midrule
		ResNet-50~\cite{He_2016_CVPR}          &46.3&67.5&75.9&59.1&59.9&62.7&58.2&41.8&74.9&67.4&48.2&74.2&61.4\\
		ADDA~\cite{tzeng2017adversarial}&45.2&68.8&79.2&64.6&60.0&68.3&57.6&38.9&77.5&70.3&45.2&78.3&62.8\\
		CDAN+E~\cite{NIPS2018_7436}&47.5&65.9&75.7&57.1&54.1&63.4&59.6&44.3&72.4&66.0&49.9&72.8&60.7\\
		IWAN~\cite{zhang2018importance} &53.9&54.5&78.1&61.3&48.0&63.3&54.2&52.0&81.3&76.5&56.8&82.9&63.6\\
		PADA~\cite{cao2018partial1} &52.0&67.0&78.7&52.2&53.8&59.0&52.6&43.2&78.8&73.7&56.6&77.1&62.1\\
		ETN~\cite{cao2019learning} &59.2&77.0&79.5&62.9&65.7&75.0&68.3&55.4&84.4&75.7&57.7&84.5&70.5\\
		DRCN~\cite{li2020deep}&54.0&76.4&83.0&62.1&64.5&71.0&70.8&49.8&80.5&77.5&59.1&79.9&69.0 \\
		BA$^3$US~\cite{liang2020balanced}&60.6&83.2&88.4&71.8&72.8&83.4&75.5&61.6&86.5&79.3&62.8&86.1&76.0\\
        ISRA+BA$^3$US~\cite{xiao2021implicit}&{64.7}&83.0&89.1&75.7&75.5&85.4&78.5&{64.2}&88.1&81.3&65.3&86.7&78.2\\ 
        SHOT++~\cite{9512429}&{65.0} &85.8 &\bf93.4& \bf 78.8& 77.4 & {87.3} &79.3 &{66.0} &89.6 &81.3 &{68.1} &86.8 &79.9 \\
        SPDA~\cite{guo2022selective} &64.2&\bf 87.8& 88.0& 74.3&75.1& 79.1& 79.4& 58.9& 85.1& 81.4& 67.4& 84.1& 77.1\\
        APDA-CI~\cite{lin2022adversarial}&61.7&  {86.9}& 90.5& 77.2& 76.9& 83.8& 79.6& 63.8& 88.5&  {85.0}& 65.8& 86.2& 78.8\\
        CLA~\cite{9705553}&66.7&85.6&90.9&75.6&76.9&86.8&78.8& {67.4}&88.7&81.7&66.9&87.8&79.5\\ 
        RAN~\cite{9957101}&63.3&83.1&89.0& 75.0&74.5&82.9&78.0&61.2&86.7&79.9&63.5&85.0&76.8\\
        STCPDA~\cite{he2023addressing}&63.1&\bf87.8&90.1&77.2&75.4& 85.6&\bf 81.4&62.4&\bf 90.5&82.6& {69.5}&88.2&79.5\\
        SLM~\cite{sahoo2021select}&61.1&84.0&91.4&76.5&75.0&81.8&74.6&55.6&87.8&82.3&57.8&83.5&76.0\\
        SAN++~\cite{9736609}&61.3& 81.6& 88.6& 72.8& 76.4& 81.9& 74.5& 57.7&87.2&79.7& 63.8& 86.1& 76.0\\
        IDSP~\cite{9983498}&60.8&80.8&87.3&69.3&76.0&80.2&74.7&59.2&85.3&77.8&61.3&85.7&74.9\\
    MOT~\cite{Luo_2023_CVPR}&63.1&{86.1}& {92.3}& {78.7}&\bf85.4&\bf89.6&{79.8}&62.3& {89.7} &{83.8}&{67.0}&\bf89.6& {80.6}\\
        \midrule
        $^*$AR~\cite{gu2021adversarial}& {67.4}&85.3&90.0&77.3&70.6&85.2&79.0&64.8&89.5&80.4&66.2&86.4&78.3\\
        \bf ARPM&\bf 68.3&\bf87.8& {92.3}&{77.8}& {84.6}&{86.3}& {81.1}&\bf69.2&{89.5}&\bf86.2&\bf70.0& {89.1}&\bf81.8 \\
       \bottomrule
	\end{tabular}
	\label{tab:result_office-home}
 \end{table*}

\begin{table}[t]
	\centering
 
	\caption{Accuracy (\%) on ImageNet-Caltech dataset for PDA. The best results are bolded. *AR is our conference version.}
	\setlength{\tabcolsep}{9pt}
   \normalsize
		\begin{tabular}{lcc|c}
			\toprule
			Method&C$\rightarrow$I&I$\rightarrow$C&Avg\\ 
			\midrule
			ResNet-50~\cite{He_2016_CVPR} &     71.3&69.7&70.5 \\
			DAN~\cite{pmlr-v37-long15}&60.1&71.3&65.7\\
			DANN~\cite{ganin2016domain}&67.7&70.8&69.2\\
			IWAN~\cite{zhang2018importance}&  73.3&78.1&75.7\\
			PADA~\cite{cao2018partial1}  &          70.5&75.0&72.8\\
			ETN~\cite{cao2019learning}   &          74.9&83.2&79.1\\
			DRCN~\cite{li2020deep}        &           78.9&75.3&77.1\\
			BA$^3$US~\cite{liang2020balanced}& 83.4&84.0&83.7\\
            ISRA+BA$^3$US~\cite{xiao2021implicit}& {83.7}&85.3& {84.5}\\
            SLM~\cite{sahoo2021select}&81.4&82.3&81.9\\
             SAN++~\cite{9736609}&81.1&83.3&82.2\\ 
            \midrule
            $*$AR~\cite{gu2021adversarial}&82.2&\bf 87.1&84.7\\
            \bf ARPM&\bf 87.1&  {84.6}&\bf 85.9\\
            \bottomrule
	\end{tabular}  
	\label{tab:results_imagenet}
\end{table}

\begin{table}[t]
	\centering
 
	\caption{Accuracy (\%) on VisDA-2017 dataset for PDA. The best results are bolded. *AR is our conference version.}
	\setlength{\tabcolsep}{30pt}
   \normalsize
		\begin{tabular}{lc}
			\toprule
			Method&S$\rightarrow$R\\ 
			\midrule
			ResNet-50~\cite{He_2016_CVPR} & 45.3 \\
			IWAN~\cite{zhang2018importance}&48.6\\
			PADA~\cite{cao2018partial1}  &53.5 \\
			DRCN~\cite{li2020deep}        &58.2\\
            SHOT++~\cite{9512429}&78.6 \\
            SPDA~\cite{guo2022selective} &82.9\\
            APDA-CI~\cite{lin2022adversarial}&69.8\\
            RAN~\cite{9957101}&75.1\\
            STCPDA~\cite{he2023addressing}&70.1\\
            SLM~\cite{sahoo2021select}& 91.7\\
            SAN++~\cite{9736609}&63.1\\
            MOT~\cite{Luo_2023_CVPR}& {92.4}\\
            \midrule
            $*$AR~\cite{gu2021adversarial}&88.8\\
            \bf ARPM& \bf 93.2\\
            \bottomrule
	\end{tabular}  
	\label{tab:results_visda}
\end{table}

\vspace{0.5\baselineskip} \noindent \textbf{Adaptation tasks.} On Office-31, ImageNet-Caltech, Office-Home datasets, we set every domain as the source domain in turn and use each of the rest domain(s) to build the target domain, forming the adaptation tasks. On Office-31, we follow~\cite{cao2018partial} to select images from the 10 categories shared by Office-31 and Caltech-256~\cite{griffin2007caltech} to build the target domain in each task. On ImageNet-Caltech dataset, we utilize the 84 shared classes by ImageNet and Caltech-256 to build the target domain in each task. As most networks are pre-trained on the training set of ImageNet,  for task C$\rightarrow$I, we use images from ImageNet validation set to build the target domain. On Office-Home, we use images of the first 25 classes in alphabetical order to build the target domain in each task. On VisDA-2017, we set Synthetic (S) as source domain and Real (R) as target domain to perform synthetic to real domain transfer, and use the first 6 classes in alphabetical order as the target domain. 
On DomainNet, since the labels of some domains and classes are very noisy, we follow~\cite{saito2019semi} to adopt four domains (Clipart (C), Painting (P), Real (R), and Sketch (S)) with 126 classes for PDA. We use the first 40 classes in alphabetical order to build the target domain in each task. 
In each adaptation task, we report the average classification accuracy on target domain.

\vspace{0.5\baselineskip} \noindent \textbf{Implementation details.}
We implement our method using Pytorch~\cite{NEURIPS2019_bdbca288} on a single Nvidia Tesla v100 GPU.
We use the SGD algorithm with momentum 0.9 to update the parameters of $F$ and $C$.  The learning rate of $C$ is ten times that of $F$. The parameters of $D$ are updated by the Adam algorithm with learning rate 0.001.  Following~\cite{pmlr-v37-ganin15}, we adjust the learning rate of $C$ by $ \frac{\kappa}{(1+10p)^{0.75}}$, where $p$ represents the training progress linearly changing from 0 to 1. Bath size is set to 64. For Office-Home, ImageNet-Caltech, and DomainNet datasets, we set $\kappa=0.01$ and $\lambda=0.3$. Since the training processes on VisDA-2017 and Office-31 datasets converge faster, we set $\kappa$ to 0.001 for VisDA and 0.005 for Office-31. Correspondingly, $\lambda$ is set to 1.0 for VisDA-2017 and Office-31 datasets. $N$ in Algorithm~\ref{alg:training} is set to 500 for Office-Home and Office-31 datasets, and is set to 1000 for the larger VisDA-2017, DomainNet, and ImageNet-Caltech datasets. $N'$ is set to $64*2000$.

On VisDA-2017, ImageNet-Caltech, and DomainNet datasets, we sample the mini-batch data according to the learned weights using a reweighted random sampler and then calculate the unweighted classification loss on the sampled mini-batch data in training. We find that this strategy makes training more stable on these datasets than the commonly used strategy that we first uniformly sample mini-batch data and then reweight the classification loss for each sample in the mini-batch. The reason could be that the number of samples with zero weights is large in these large-sized datasets and the uniformly sampled mini-batch data may contain only a few samples having non-zero weights.

 \begin{table*}[t]
	\centering
 
	\caption{Accuracy (\%) on Office-31 dataset for PDA.  The best results are bolded. *AR is our conference version.}
	\setlength{\tabcolsep}{11.6pt}
		\begin{tabular}{lcccccc|c}
			\toprule
			Method&A$\rightarrow$D&A$\rightarrow$W & D$\rightarrow$A &D$\rightarrow$W&W$\rightarrow$A& W$\rightarrow$D&Avg\\ 
			\midrule
			ResNet-50~\cite{He_2016_CVPR}      &83.4&75.6&83.9&96.3&85.0&98.1&87.1  \\
			DAN~\cite{pmlr-v37-long15}&61.8&59.3&75.0&73.9&67.6&90.5&71.3\\
			DANN~\cite{ganin2016domain}&81.5&73.6&82.8&96.3&86.1&98.7&86.5\\
			IWAN~\cite{zhang2018importance}  &90.5&89.2&95.6&99.3&94.3&99.4&94.7\\
			PADA~\cite{cao2018partial1}             &82.2&86.5&92.7&99.3&95.4&\bf100.0&92.7\\
			ETN~\cite{cao2019learning}              &95.0&94.5&96.2&\bf100.0&94.6&\bf100.0&96.7\\
			DRCN~\cite{li2020deep}                    &86.0&88.5&95.6&\bf100.0&95.8&\bf100.0&94.3\\
			TSCDA~\cite{ren2020learning} &98.1&96.8&  94.8&\bf 100.0& 96.0& \bf100.0& 97.6\\
			BA$^3$US~\cite{liang2020balanced} & {99.4}&99.0&94.8&\bf100.0&95.0&98.7&97.8 \\
            ISRA+BA$^3$US~\cite{xiao2021implicit}&98.7&99.3&95.4&\bf100.0&95.4&\bf100.0&98.2\\
            SPDA~\cite{guo2022selective} & 96.2&99.3& 96.0& \bf100.0&96.6&  \bf100.0& 98.0\\
            APDA-CI~\cite{lin2022adversarial}&96.8& 99.7& 96.2& \bf100.0&96.6& \bf100.0&98.2\\
            CLA~\cite{9705553}&\bf100.0&\bf100.0&94.5&\bf100.0&96.7&\bf100.0&98.5\\ 
            RAN~\cite{9957101}&97.8&99.0& 96.3&\bf100.0&96.2& \bf100.0&98.2\\
            SLM~\cite{sahoo2021select}& 98.7& 99.8& 96.1&\bf100.0& 95.9& {99.8}&  {98.4}\\
            SAN++~\cite{9736609}&98.1&99.7&94.1&\bf100.0&95.5&\bf100.0&97.9\\
            IDSP~\cite{9983498}&99.4&99.7&95.1&99.7&95.7&\bf100.0&98.3\\
            MOT~\cite{Luo_2023_CVPR}&98.7&99.3&96.1&\bf 100.0& {96.4}&\bf 100.0& {98.4}\\
            \midrule
            $^*$AR~\cite{gu2021adversarial}&96.8&93.5&95.5&\bf 100.0&96.0&99.7&96.9\\
            \bf ARPM&99.6& {99.4}&\bf96.6& {99.9}&\bf 96.8&\bf 100.0&\bf 98.7 \\
            \bottomrule
	\end{tabular}  
	\label{tab:results_office}
\end{table*}

\begin{table*}[t]
	\centering
 
	\caption{Accuracy ($\%$) on DomainNet dataset for PDA.   The best results are bolded. *AR is our conference version. The results of the compared methods are produced using their official source codes.}
	\setlength{\tabcolsep}{2.6pt}
	\begin{tabular}{lcccccccccccc|c}
		\toprule
		Method    &C$\rightarrow$P& C$\rightarrow$R& C$\rightarrow$S& P$\rightarrow$C& P$\rightarrow$R& P$\rightarrow$S& R$\rightarrow$C& R$\rightarrow$P& R$\rightarrow$S& S$\rightarrow$C& S$\rightarrow$P& S$\rightarrow$R& Avg\\
		\midrule
		ResNet-50~\cite{He_2016_CVPR}          &41.2&60.0&42.1&54.5&70.8&48.3&63.1&58.6&50.3&45.4&39.3&49.8&52.0\\
		DANN~\cite{ganin2016domain}&27.8&36.6&29.9&31.8&42.0&36.6&47.6&46.8&40.9&25.8&29.5&32.7&35.7\\
		CDAN+E~\cite{NIPS2018_7436}&37.5&48.3&46.6&45.5&61.0&52.6&62.0&60.6&54.7&35.4&38.5&43.6&48.9\\
		PADA~\cite{cao2018partial1} &22.5&32.9&30.0&25.7&56.5&30.5&65.3&63.4&54.2&17.5&23.9&26.9&37.4\\
		BA$^3$US~\cite{liang2020balanced}&42.9&54.7&53.8& {64.0}& {76.4}&64.7& {80.0}&74.3&74.0& {50.4}&42.7&49.7&60.6\\
        ISRA+BA$^3$US~\cite{xiao2021implicit}& {43.3}& {55.1}& {56.9}&59.4&75.8& {66.3}&76.3& {76.1}& {77.3}&44.2& {50.6}& {50.6}& {61.0}\\
        STCPDA~\cite{he2023addressing}& 65.1& 69.6&\bf 69.6& 72.7& 77.6&78.7& 78.1&72.9&\bf 80.0&64.4&60.7&67.8&71.4\\
        \midrule
        $^*$AR~\cite{gu2021adversarial}&52.7& 68.2& 58.3& 66.8& 77.5& 74.4& 76.7& 71.8& 70.5& 53.7& 53.6& 61.6& 65.5\\
        \bf ARPM&\bf 67.9&\bf 79.8& 66.3&\bf 78.4&\bf 84.1&\bf 81.9&\bf 86.5&\bf 78.0&78.6&\bf 62.5&\bf 64.8&\bf 71.7&\bf 75.0 \\
        \bottomrule
	\end{tabular}
	\label{tab:result_domainnet}
\end{table*}


{
\subsection{Results for PDA}\label{sec:result}

We implement our approach with three different random seeds $\{2019,2021,2023\}$, and report the average results over these three different runs in Tables~\ref{tab:result_office-home},~\ref{tab:results_office},~\ref{tab:result_domainnet},~\ref{tab:results_imagenet}, and~\ref{tab:results_visda} for Office-Home, Office-31, DomainNet, ImageNet-Caltech, and VisDA-2017 datasets, respectively. The results of compared methods on Office-Home, Office-31, ImageNet-Caltech, and VisDA-2017 datasets are quoted from their papers. The results of compared methods  on DomainNet dataset are produced using their official codes (except the results of STCPDA~\cite{he2023addressing}, which are quoted from its paper). 

\begin{table*}
\caption{H-score (\%)  on Office-Home of different open-set DA methods. The best results are bolded.
}
\label{tab:results_open-set DA}
\centering
\setlength{\tabcolsep}{3.0pt}
\begin{tabular}{l|cccccccccccc|c}
	\toprule                                                                                                            Method&A$\rightarrow$C&A$\rightarrow$P&A$\rightarrow$R&C$\rightarrow$A&C$\rightarrow$P&C$\rightarrow$R&P$\rightarrow$A&P$\rightarrow$C&P$\rightarrow$R&R$\rightarrow$A&R$\rightarrow$C&R$\rightarrow$P&Avg\\
    \midrule
	STA \cite{liu2019separate} & 55.8 & 54.0 & 68.3 & 57.4 & 60.4 & 66.8 & 61.9 & 53.2 & 69.5 & 67.1 & 54.5 & 64.5 & 61.1 \\
	OSBP \cite{saito2018open} & 55.1 & 65.2 & 72.9 & 64.3 & 64.7 & 70.6 & 63.2 & 53.2 & 73.9 & 66.7 & 54.5 & 72.3&64.7 \\
	ROS \cite{bucci2020effectiveness} & 60.1 & 69.3 & 76.5 & 58.9 & 65.2 & 68.6 & 60.6 & 56.3 & 74.4 & 68.8 & 60.4 & 75.7&66.2\\
	DCC \cite{li2021domain} & 56.1 & 67.5 & 66.7 & 49.6 & 66.5 & 64.0 & 55.8 & 53.0 & 70.5 & 61.6 & 57.2 & 71.9 &  61.7 \\
	OVANet \cite{saito2021ovanet} &  58.6 & 66.3 & 69.9 & 62.0 & 65.2 & 68.6 & 59.8 & 53.4 & 69.3 & 68.7 & 59.6 & 66.7 &  64.0 \\
    UMAD \cite{liang2021umad} & 59.2 & 71.8 & 76.6 & 63.5 & 69.0 & 71.9 & 62.5 & 54.6 & 72.8 & 66.5 & 57.9 & 70.7 &  66.4 \\
	GATE \cite{chen2022geometric} &  63.8 & 70.5 & 75.8 & 66.4 & 67.9 & 71.7 & 67.3 & 61.5 & 76.0 & 70.4 & 61.8 & 75.1 &  69.1 \\	
    GLC \cite{qu2023upcycling} & \textbf{65.3} & 74.2 & 79.0 & 60.4 &71.6 & \bf74.7 & 63.7 & \bf63.2 &75.8 & 67.1 &64.3 & 77.8 &  69.8 \\
    PPOT~\cite{Yang_Gu_Sun_2023}& 60.7 & 75.2 & 79.5 & \bf 67.3 & 70.1 & 73.8 & \bf 70.6 & 57.2 & 76.1 &71.8 & 61.4 & 75.8 &   70.0 \\
    ANNA~\cite{li2023adjustment}&69.9 &73.7 &76.8 &64.7 &68.6 &73.0 &66.5 &63.1 &76.6 &71.6 &\bf65.7 &78.7 &70.7\\
    \midrule
    \bf ARPM&63.8&\bf76.0&\bf80.6 &\bf67.3& \bf 72.1&74.6&67.7&61.7& \bf76.7& \bf 73.8& 65.4&\bf81.2&\bf71.7\\
	\bottomrule
\end{tabular}
\end{table*}

In Table~\ref{tab:result_office-home}, our approach ARPM achieves the best average accuracy of 81.8\% on Office-Home dataset, outperforming the second-best approach of MOT by 1.2\%. 
Tables~\ref{tab:results_imagenet} and~\ref{tab:results_visda} imply that our proposed ARPM achieves the best results of 85.9\% and 93.2\% on ImageNet-Caltech and ViSDA-2017, respectively. ARPM outperforms the second-best method on ImageNet-Caltech by 1.2\% and on VisDA-2017 by 0.8\%, respectively. 
In Table~\ref{tab:results_office}, on Office-31 dataset, our proposed ARPM achieves the best result of 98.7\%, outperforming the recent PDA methods CLA~\cite{9705553} (the second-best method) by 0.2\%. ARPM outperforms CLA by 2.3\% on Office-Home dataset. 
Table~\ref{tab:result_domainnet} shows that our approach ARPM achieves the best accuracy on DomainNet dataset, outperforming the second-best method by 3.6\%. 

Among the reported methods in Tables \ref{tab:result_office-home}, \ref{tab:results_imagenet}, \ref{tab:results_visda}, \ref{tab:results_office}, and \ref{tab:result_domainnet},  DANN~\cite{pmlr-v37-ganin15}, DAN~\cite{pmlr-v37-long15}, ADDA~\cite{tzeng2017adversarial}, and CDAN+E~\cite{NIPS2018_7436} are devised for closed-set DA which do not consider the challenge of label space mismatch in PDA and achieve worse results than the other PDA approaches. In PDA approaches, IWAN~\cite{zhang2018importance}, PADA~\cite{cao2018partial1}, ETN~\cite{cao2019learning}, DRCN~\cite{li2020deep}, TSCDA~\cite{ren2020learning}, and BA$^3$US~\cite{liang2020balanced} align reweighted source domain feature distribution and unweighted target domain feature distribution by minimizing maximum mean discrepancy or adversarial training, performing worse than MOT~\cite{Luo_2023_CVPR}. MOT~\cite{Luo_2023_CVPR} aligns reweighted source  and unweighted target feature distributions using robust optimal transport that does not require exact matching of the distributions. This implies that aligning reweighted distribution without requiring exact matching may be better than that with requiring exact matching for PDA. Our proposed ARPM boosts the performance of MOT on all the evaluated datasets. ARPM adversarially learns to reweight source domain data by minimizing the cross-domain distribution distance, and enforcing robustness and reducing prediction uncertainty of the recognition model,  which more effectively tackles PDA from a novel perspective with theoretical analysis. 

 Compared with the conference version, the results on Office, Office-Home, ImageNet-Caltech, VisDA-2017, and DomainNet datasets are improved by 1.8\%, 3.5\%, 1.2\%, 4.4\%, and 9.5\% in this journal version, respectively.  In this journal version, we extend the work by introducing more methodological techniques, \eg, the $\alpha$-power maximization for substituting entropy minimization, the neighborhood reciprocity clustering~\cite{yang2021exploiting}, and the spectral normalization to enforce the Lipschitz constraint and the PCA-based initialization strategy to improve stability. The performance improvements demonstrate the effectiveness of the methodological extensions for PDA. Note that the usefulness of each of these techniques will be verified in Sect.~\ref{sec:analysis}.
}

\begin{table*}
\caption{H-score (\%)  on Office-Home of different universal DA methods. The best results are bolded.
}
\label{tab:results_unida}
\centering
\setlength{\tabcolsep}{3.0pt}
\begin{tabular}{l|cccccccccccc|c}
	\toprule                                                                                                            Method&A$\rightarrow$C&A$\rightarrow$P&A$\rightarrow$R&C$\rightarrow$A&C$\rightarrow$P&C$\rightarrow$R&P$\rightarrow$A&P$\rightarrow$C&P$\rightarrow$R&R$\rightarrow$A&R$\rightarrow$C&R$\rightarrow$P&Avg\\
    \midrule
	UAN \cite{you2019universal} & 51.6 & 51.7 & 54.3 & 61.7 & 57.6 & 61.9 & 50.4 & 47.6 & 61.5 & 62.9 & 52.6 & 65.2 & 56.6 \\
    CMU \cite{fu2020learning} &56.0 & 56.9 & 59.2 & 67.0 & 64.3 & 67.8 & 54.7 & 51.1 & 66.4 & 68.2 & 57.9 & 69.7 & 61.6 \\
    DANCE \cite{saito2020universal} &61.0 & 60.4 & 64.9 & 65.7 & 58.8 & 61.8 & 73.1 & 61.2 & 66.6 & 67.7 & 62.4 & 63.7 & 63.9 \\
    USFDA \cite{kundu2020universal}  & 60.3 & 79.7 & 80.7 & 64.2 & 67.9 & 79.3 & 74.1 & 65.3 & 80.1 & 68.6 & 59.8 & 77.9 & 71.4 \\ 
    UMAD \cite{liang2021umad} &  61.1 & 76.3 & 82.7 & 70.7 & 67.7 & 75.7 & 64.4 & 55.7 & 76.3 & 73.2 & 60.4 & 77.2 & 70.1  \\
    DCC \cite{li2021domain} & 58.0 & 54.1 & 58.0 & 74.6 & 70.6 & 77.5 & 64.3 & 73.6 & 74.9 & 81.0 & \bf 75.1 & 80.4 & 70.2 \\
    OVANet \cite{saito2021ovanet} &  63.4 & 77.8 & 79.7 & 69.5 & 70.6 & 76.4 & 73.5 & 61.4 & 80.6 & 76.5 & 64.3 & 78.9 & 72.7 \\
    GATE \cite{chen2022geometric} & 63.8 & 75.9 & 81.4 & 74.0 & 72.1 & 79.8 & 74.7 & \bf 70.3 & 82.7 & 79.1 & 71.5 & 81.7 & 75.6 \\
    UniOT \cite{chang2022unified} & \bf 67.3 & 80.5 & 86.0 & 73.5 & 77.3 & 84.3 & 75.5 & 63.3 & 86.0 & 77.8 & 65.4 & 81.9 & 76.6\\
    GLC \cite{qu2023upcycling}&  64.3 & 78.2 & \bf 89.6 & 63.1 & \bf81.7 & \bf 89.1 & 77.6 & 54.2 & \bf 88.9 & 80.7 & 54.2 & \bf 85.9 & 75.6 \\
    PPOT~\cite{Yang_Gu_Sun_2023} & 66.0 & 79.3 & 84.8 & \bf 78.8 &  78.0 & 80.4 & \bf 82.0 & 62.0 & 86.0 & \bf 82.3 & 65.0 & 80.8 & 77.1\\
    SAKA~\cite{wang2023exploiting}&64.3&80.4&86.1&72.0&71.1&77.8&71.5&61.7&83.8&79.1&64.8&82.4&74.6\\
    \midrule
    \bf ARPM&65.2&\bf 81.2&89.4&73.2&73.4&83.9&74.9&67.3&84.8&78.9&70.3 &85.1 &\bf 77.3 \\
	\bottomrule
\end{tabular}
\end{table*}

\begin{table*}
    \centering
    
    \caption{ Accuracy (\%) on ImageNet-R of different TTA methods. The best result is bolded.}
    \setlength{\tabcolsep}{4.5pt}
   \normalsize
        \begin{tabular}{l|cccccc}
            \toprule
            
            Method&Source-trained model&TTT~\cite{sun2020test} &NORM~\cite{schneider2020improving} &TENT~\cite{wang2021tent} &DUA~\cite{mirza2022norm}&\bf TPM (ours)\\ 
            \midrule
            Accuracy&33.0&33.5&34.7&36.8&33.2&\bf 38.3\\
            \bottomrule
    \end{tabular}  
    \label{tab:results_tta}
\end{table*}

 \begin{table}[t]
	\centering
	\caption{Ablation study on VisDA-2017 dataset. ``SO'' means training the model using the source domain data only. ``R'' represents our adversarial reweighting model. ``P'' is the $\alpha$-power loss. ``N'' is the neighborhood reciprocity clustering loss. ``SO+R+N+P'' is our full method ARPM. We also report the results for entropy minimization (E) as a substitute of $\alpha$-power loss (P).}
	\setlength{\tabcolsep}{21pt}
    \normalsize
	\begin{tabular}{lc}
		\toprule
		Method & S$\rightarrow$R \\
		\midrule
		SO& 46.4\\
        SO+R & 50.4\\
        SO+P &90.7\\
        SO+N & 89.2\\
        SO+R+P & 91.7\\
        SO+R+N & 90.8\\
        SO+N+P & 92.4\\
        SO+R+N+P (ARPM)&\bf 93.2 \\
        \midrule
        \midrule
        SO+E &85.2 \\
        SO+R+E &89.2\\
        SO+E+R+N &90.7\\
       \bottomrule
	\end{tabular}
	\label{tab:ablation_visda}
 \end{table}

 \begin{table*}[t]
	\centering
	\caption{Ablation study on Office-Home dataset. ``SO'' means training the model using the source domain data only. ``R'' represents our adversarial reweighting model. ``P'' is the $\alpha$-power loss. ``N'' is the neighborhood reciprocity clustering loss. ``SO+R+N+P'' is our full method ARPM. We also report the results for entropy minimization (E) as a substitute of $\alpha$-power loss (P).}
	\setlength{\tabcolsep}{1.5pt}	
	\begin{tabular}{lcccccccccccc|c}
		\toprule
		Method & A$\rightarrow$C & A$\rightarrow$P& A$\rightarrow$R & C$\rightarrow$A& C$\rightarrow$P & C$\rightarrow$R & P$\rightarrow$A & P$\rightarrow$C & P$\rightarrow$R& R$\rightarrow$A & R$\rightarrow$C & R$\rightarrow$P & Avg\\
		\midrule
		SO&50.2&69.5&79.8&60.2&61.1&67.5&60.3&43.7&74.5&71.3&50.6&78.0&63.9\\
        SO+R&50.3&71.4&83.3&61.9&64.3&72.6&63.8&43.9&78.5&73.3&52.6&80.5&66.4\\
        SO+P&64.3&84.4&88.8&76.6&80.0&81.9&78.2&65.4&88.4&81.0&63.1&86.3&78.2 \\
        SO+N&63.1&83.8&88.3&75.1&76.6&77.4&77.4&60.8&85.1&81.3&62.9&86.1&76.5 \\
        SO+R+P&67.5&87.6&91.7&79.4&78.9&85.0&78.9&65.3&\bf90.8&82.8&63.7&87.7&79.9 \\
        SO+R+N&65.3&86.1&92.0&\bf80.3&77.2&85.4&\bf81.2&64.2&87.8&84.5&65.3&89.0&79.9\\
        SO+N+P&65.6&86.5&91.3&78.6&82.1&82.3&79.8&65.6&90.7&82.2&65.0&86.9&79.7\\
        SO+R+N+P (ARPM)&\bf 68.3&\bf87.8&\bf{92.3}&{77.8}&\bf{84.6}&\bf{86.3}&81.1&\bf69.2&{89.5}&\bf86.2&\bf70.0&\bf{89.1}&\bf81.8 \\
        \midrule
        \midrule
        SO+E & 58.6 &84.1 &88.0  &74.1 &76.8 &77.4 &76.1 &61.4 &86.2 &78.5 &61.5  &84.3  &75.6 \\
        SO+R+E &63.3  &86.7 &91.5  &78.0  &77.6  &81.9 &77.5  &64.6  &89.7 &81.0  &63.6  &86.8 &78.4 \\
        SO+E+R+N &66.6&86.8&91.9&78.3&81.9&85.3&77.9&65.9&89.6&83.7&67.9&89.6&80.5 \\
       \bottomrule
	\end{tabular}
	\label{tab:ablation_office-home}
 \end{table*}

{
\subsection{Results for Open-set and Universal DA}\label{sec:results_os_unida}
We follow the protocol of~\cite{Yang_Gu_Sun_2023} to conduct experiments on Office-Home dataset for open-set and universal DA. In the test phase, the samples with prediction confidence lower than 0.65 are classified as ``unknown'' class. $\tau$ is set to 0.25 and $\lambda'$ is set to 0.05.
The other experimental details are the same as those for PDA. We report the open-set evaluation metric, H-score, in Tables~\ref{tab:results_open-set DA} and~\ref{tab:results_unida}. 
It can be observed from Tables~\ref{tab:results_open-set DA} and~\ref{tab:results_unida} that our method performs better than the recent open-set DA methods and universal DA methods on Office-Home dataset. Our proposed ARPM outperforms the previous state-of-the-art method PPOT~\cite{Yang_Gu_Sun_2023} by 1.7\% in open-set DA task as in Table~\ref{tab:results_open-set DA} and by 0.2\% in universal DA task. These results imply that our method can tackle the tasks of open-set DA and universal DA with open-class data.

\subsection{Results for TTA}\label{sec:results_tta}
For TTA, we take the RestNet-18 pre-trained on ImageNet as the source-trained model and evaluate it on ImageNet-R dataset~\cite{hendrycks2021many}. The experimental setups are the same as those in~\cite{wang2021tent}. The experimental results are reported in Table~\ref{tab:results_tta}. We can see that our proposed method TPM (test $\alpha$-power maximization) achieves better results than the other TTA approaches. Especially, TPM outperforms TENT. The model adaptation is performed by $\alpha$-power maximization in TPM and by entropy minimization in TENT.
The performance improvement of TPM over TENT indicates that our $\alpha$-power maximization could be more effective than entropy minimization for TTA.

}

 \subsection{Analysis}\label{sec:analysis}
 In this section, for convenience of description, we utilize ``SO'' to denote the baseline approach that trains the recognition model using the source domain data only. ``R'' represents our adversarial reweighting model. ``P'' is the $\alpha$-power loss. ``N'' is the neighborhood reciprocity clustering loss.
 
 \vspace{0.5\baselineskip} \noindent \textbf{Effectiveness of components in ARPM.}
 We study the effectiveness of each component in our method on VisDA-2017 and Office-Home datasets, of which the results are reported in Tables~\ref{tab:ablation_visda} and~\ref{tab:ablation_office-home}. Note that the differences between SO and ``ResNet-50'' in Tables~\ref{tab:result_office-home} and~\ref{tab:ablation_visda} are that SO utilizes label smoothing and feature normalization. These two techniques enable SO to perform slightly better than ResNet-50. We can observe from Tables~\ref{tab:ablation_visda} and~\ref{tab:ablation_office-home} that on both datasets, SO+R, SO+R+P, SO+R+N, and SO+R+N+P (\ie, our full method ARPM) respectively outperforms SO, SO+P, SO+N, and SP+N+P, demonstrating the effectiveness of our adversarial reweighting model for learning to reweight source domain data. The results of SO+P, SO+R+P, SO+N+P, SO+R+N+P are better than those of SO, SO+R, SO+N, SO+R+N, respectively. This confirms the usefulness of our $\alpha$-power maximization for reducing the prediction uncertainty of the recognition model. Tables~\ref{tab:ablation_visda} and~\ref{tab:ablation_office-home} show that SO+N+P outperforms both SO+N and SO+P, which implies that $\alpha$-power maximization and neighborhood reciprocity clustering are complementary to improve the performance. On all the combinations of SO, P, N, and R, SO+R+P+N, \ie, ARPM, achieves the best results on both datasets. Note that SO and SO+R do not utilize target domain data to update the recognition model (SO+R utilizes target domain data to learn the source data weight). The results of SO and SO+R seem to be largely lower than those of the other approaches in Tables~\ref{tab:ablation_visda} and~\ref{tab:ablation_office-home} since the $\alpha$-power and neighborhood reciprocity clustering losses are implemented on target domain data.

 \begin{figure}[t]
	\centering
	\subfigure[] { \includegraphics[width=0.44\columnwidth]{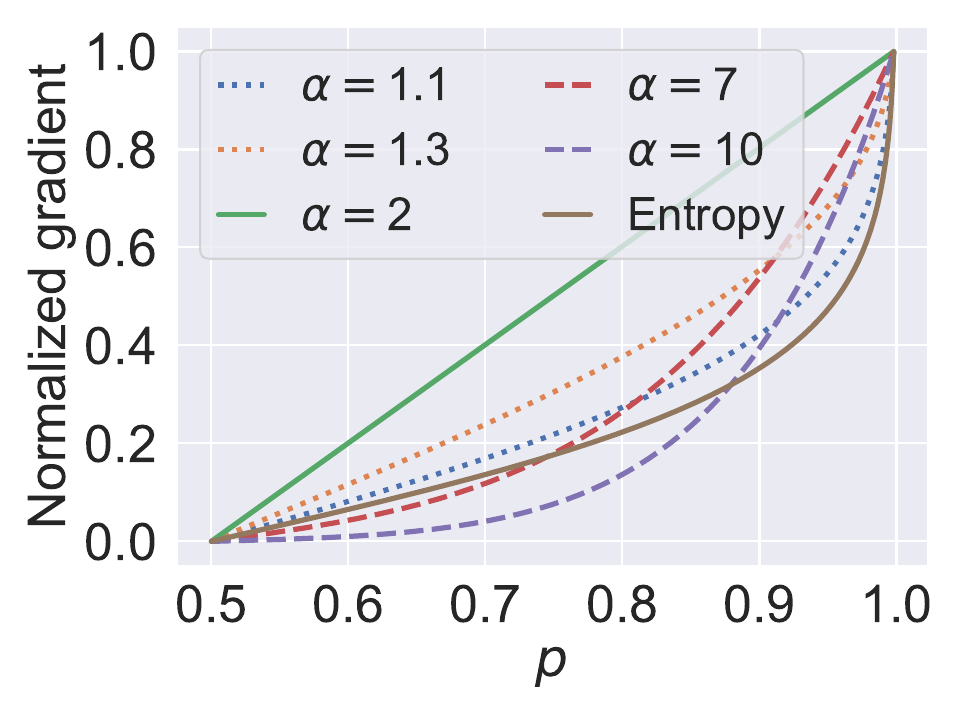}
		\label{fig:grad} }
	\subfigure[]{ \includegraphics[width=0.44\columnwidth]{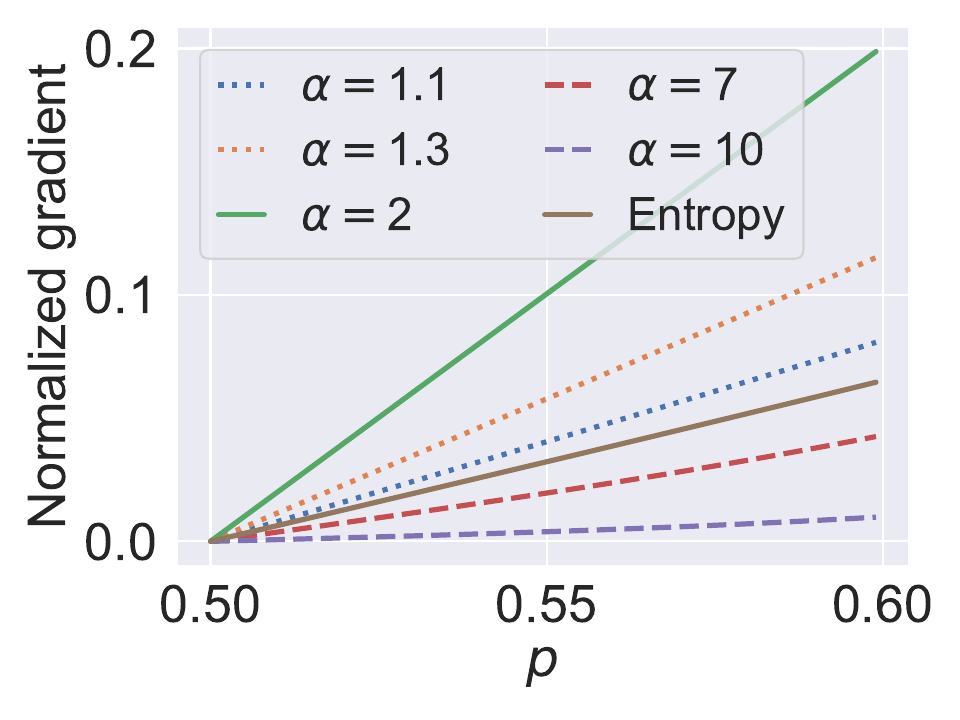} 
		\label{fig:grad_small}}
	
	\caption{(a) The normalized gradient of $\alpha$-power loss with varying $\alpha$ and entropy loss. (b) Amplified part of Fig.~\ref{fig:grad} with $p$ ranging in $[0.5,0.6]$.
	}
	\label{fig:comparison_alpha_entropy}
\end{figure}
 \vspace{0.5\baselineskip} \noindent \textbf{Comparison of $\alpha$-power maximization and entropy minimization.}
 We report the results of entropy minimization (E) as a substitute of $\alpha$-power maximization (P) in our framework. Tables~\ref{tab:ablation_visda} and~\ref{tab:ablation_office-home} show that SO+E, SO+R+E, and SO+R+E+N respectively degrade the results of SO+P, SO+R+P, and SO+R+P+N by more than 1.0\% on both VisDA-2017 and Office-Home datasets. This demonstrates that our $\alpha$-power maximization is more effective than entropy minimization for PDA. 
 
 We provide more analysis on the $\alpha$-power maximization and entropy minimization in this paragraph. We take the two-way classification task as an example to compare the gradients of the $\alpha$-power loss (\ie, $\mathcal{H}(p)=p^{\alpha}+(1-p)^{\alpha}$) and entropy loss (\ie, $\mathcal{H}(p)=p\log p+(1-p)\log(1-p)$) \textit{w.r.t.} the probability $p$ in Fig.~\ref{fig:comparison_alpha_entropy}. Note we only care about the absolute value of the gradients in this experiment. The gradients are normalized by dividing by their maximum value when $p$ ranges in $[0.5,0.99]$. The samples with larger normalized gradients are more important in training. We can see (in Fig.~\ref{fig:grad}) that the curves of $\alpha$-power losses approach that of entropy as $\alpha$ goes to 1. Larger $\alpha$ more possibly pushes the gradients of uncertain samples (with $p$ near 0.5) to zero as shown in Fig.~\ref{fig:grad_small} (also in Fig.~\ref{fig:alpha_power} for three-way classification), and hence neglects their importance in network training. In Fig.~\ref{fig:alpha}, we show that in range $[1,10]$, $\alpha$ larger than 2 achieves better performance than $\alpha$ smaller than 2. This may be because, for $\alpha$ (in $[1,10]$) larger than 2, the uncertain samples that are more likely incorrectly predicted are less important (compared with $\alpha$ smaller than 2) when reducing the prediction uncertainty.
\begin{figure}[t]
	\centering
     \subfigure[] { \includegraphics[width=0.48\columnwidth]{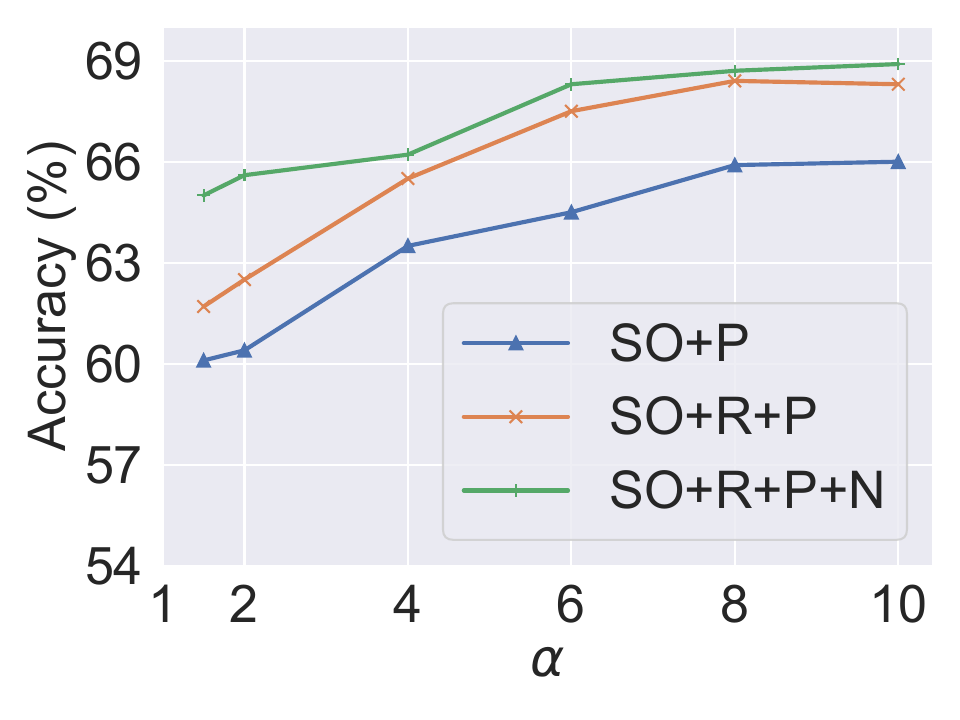}
		\label{fig:alpha} }
	\subfigure[] { \includegraphics[width=0.44\columnwidth]{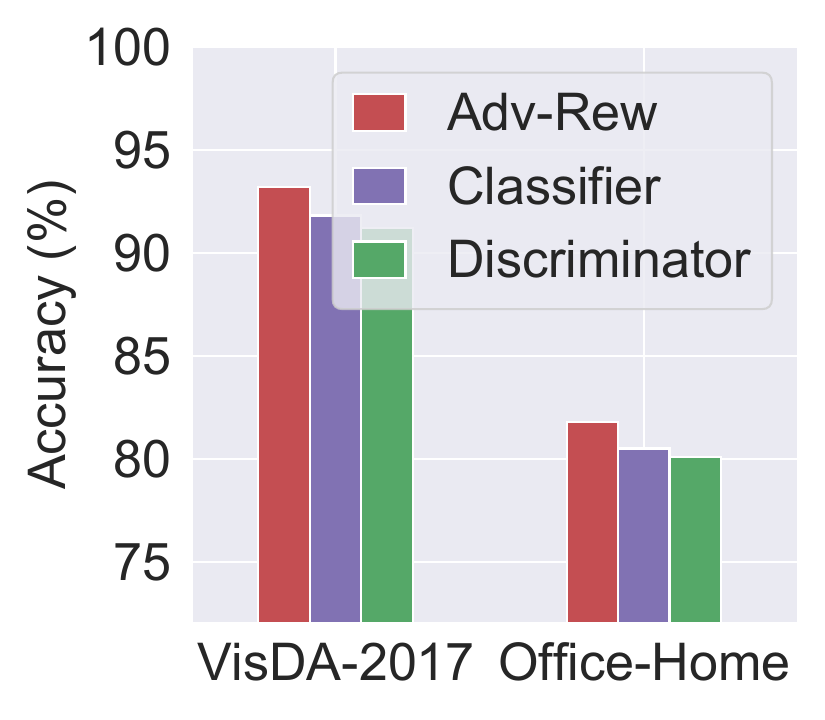}
		\label{fig:comparision_different_weighting} }
	\caption{ (a) Results with varying $\alpha$ in task A$\rightarrow$C on Office-Home dataset. (b) Results of different reweighting strategies on Office-Home and VisDA-2017 datasets. }
\end{figure}

 \vspace{0.5\baselineskip} \noindent \textbf{Comparison of different reweighting strategies.}
 We compare different reweighting strategies for obtaining the weights used in our loss of Eq.~\eqref{eq:class_loss} for PDA, including our adversarial reweighting (Adv-Rew), reweighting based on the classifier in the PDA methods~\cite{cao2018partial,cao2018partial1,li2020deep,liang2020balanced}, and reweighting by the output of discriminator on source data as in~\cite{zhang2018importance}. For the classifier-based strategy, the source data weight of the $k$-th class is defined by $\frac{1}{n}\sum_{j=1}^n C(F(\x_j^t))_k$. For the discriminator-based strategy, we introduce a discriminator $\tilde{D}$ that aims to predict 1 (\textit{resp.} 0) on the target (\textit{resp.} source) domain data. The weight of source domain data $\x_i^s$ is $\tilde{D}(\x_i^s)$.
 The results in Fig.~\ref{fig:comparision_different_weighting} show that our adversarial reweighting outperforms the other two reweighting strategies on VisDA-2017 and Office-Home datasets, confirming the effectiveness of our adversarial reweighting strategy.

 \begin{figure}[t]
	\centering
	\subfigure[] { \includegraphics[width=0.44\columnwidth]{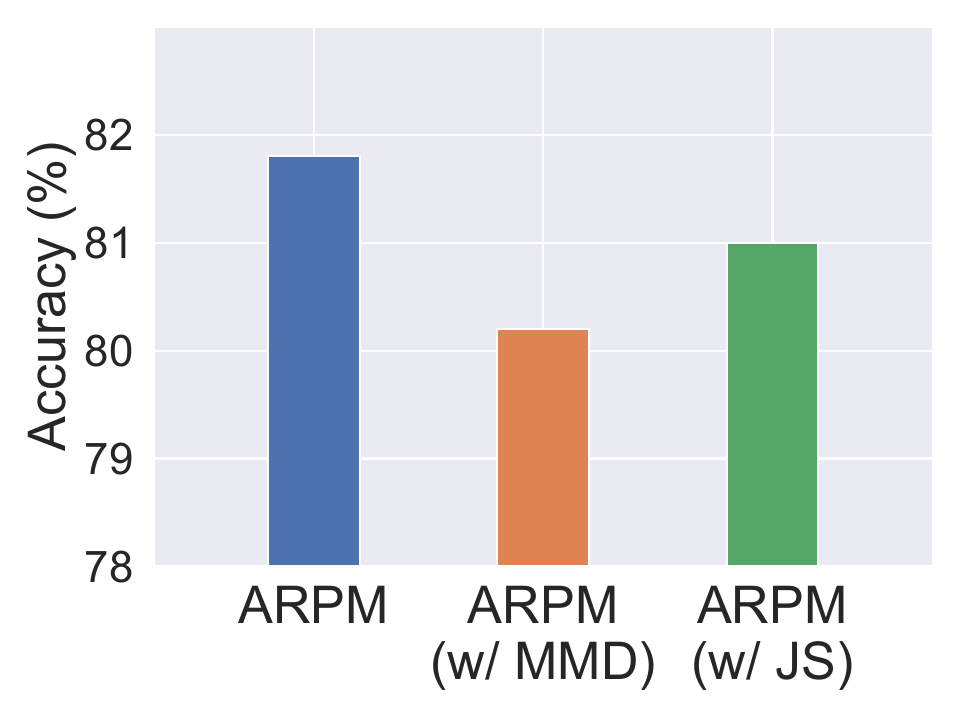}
		\label{fig:comparision_MMD_JS} }
	\subfigure[] { \includegraphics[width=0.44\columnwidth]{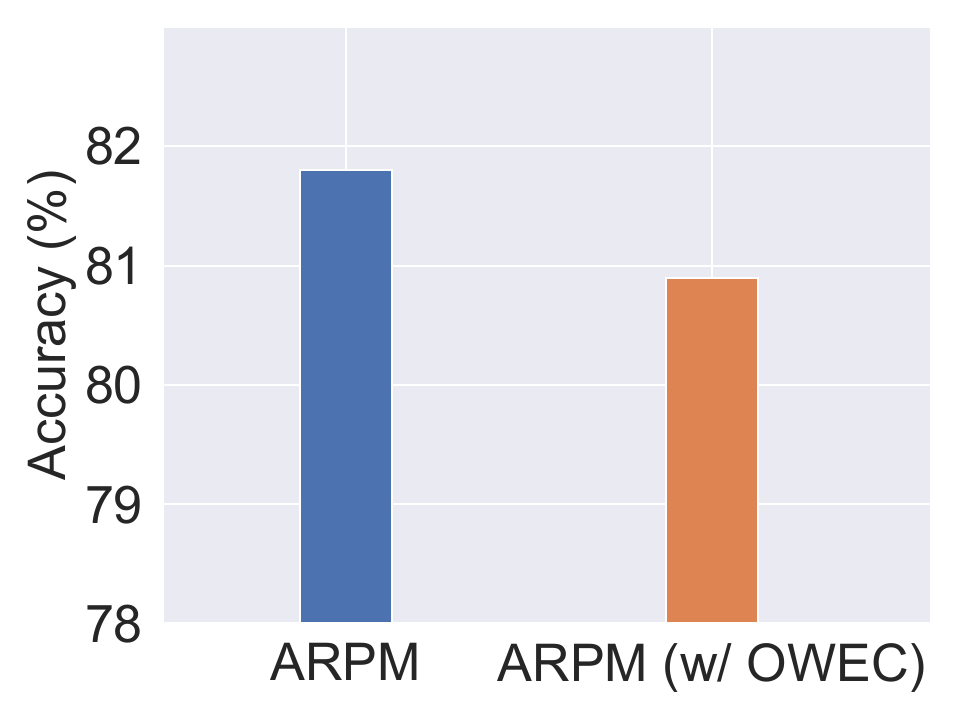}
		\label{fig:comparision_one_weight_each_class}}
	\caption{ (a) Results for MMD and JS-divergence for learning source data weights in our framework on Office-Home dataset. (b) Results for learning one weight for each class (OWEC) on Office-Home dataset. }
\end{figure}

 \vspace{0.5\baselineskip} \noindent \textbf{Comparison with MMD and JS-divergence to learn the weights.}
As discussed in Sect.~\ref{sec:adv_rew}, we minimize the Wasserstein distance to learn the data weights deducing our adversarial reweighting model. We compare ARPM with the approaches that minimize the JS-divergence and Maximum Mean Discrepancy (MMD) to learn the data weights in our framework (denoted as ARPM (w/ JS) and ARPM (w/ MMD), respectively), on Office-Home dataset for PDA. Note that the JS-divergence also induced an adversarial reweighting model where the discriminator is trained with loss in~\cite{goodfellow2014generative}.
In Fig.~\ref{fig:comparision_MMD_JS}, we can see that ARPM using the Wasserstein distance for learning the data weights outperforms ARPM (w/ JS) and ARPM (w/ MMD). The reasons could be as follows.  When the supports of source and target distributions are disjoint, the Wasserstein distance may be more suitable to measure their distance than JS-divergence~\cite{arjovsky2017wasserstein}. The MMD with widely used kernels may be unable to capture very complex distances in high dimensional spaces~\cite{arjovsky2017wasserstein,reddi2015high}, possibly making it less effective than the Wasserstein distance in our framework. 
 
 \vspace{0.5\baselineskip} \noindent \textbf{Comparison with learning one weight for each class (OWEC).}
 The previous PDA methods~\cite{cao2018partial,cao2018partial1,li2020deep,liang2020balanced} assign one weight for each class. As comparisons, we conduct experiments for learning one weight shared by samples of each class in our adversarial reweighting model (denoted as ARPM (w/ OWEC)) on Office-Home dataset for PDA. The results in Fig.~\ref{fig:comparision_one_weight_each_class} show that ARPM with individual weight for each sample outperforms ARPM (w/ OWEC), implying that learning individual weight for each sample is more effective. If the weight is learned for each sample, it is possible to assign higher weights to samples (even in the same class) closer to the target domain. The model trained in such a case may be more transferable, because samples (even in the source domain common classes) less relevant to the target domain become less important.

 \begin{table}[t]
 \caption{Results (under varying random seeds) of gradient penalty (GP) and spectral normalization (SP) to impose Lipschitz constraint on Office-Home dataset.}
     \centering
     \setlength{\tabcolsep}{10pt}	
     \begin{tabular}{l|ccc|cc}
     \toprule
          Seed & 2019&2021&2023&Avg&Std \\
    \midrule
          SP&81.8 &81.6 &82.1 &81.8 &0.3 \\
          GP&  81.1 &80.6 &81.9 &81.2&0.7 \\
    \bottomrule
     \end{tabular}
     \label{tab:gp_sp}
 \end{table}
 
 \vspace{0.5\baselineskip} \noindent \textbf{Comparison of gradient penalty (GP) and spectral normalization (SP) to impose Lipschitz constraint.}
 We compare the gradient penalty (GP) and spectral normalization (SP) to impose Lipschitz constraint in Eq.~\eqref{eq:adversarial_weight_learning_model}. The results in Table~\ref{tab:gp_sp} indicate that spectral normalization achieves better average accuracy and lower standard variation. As discussed in Sect.~\ref{sec:adv_rew}, GP introduces additional hyper-parameter and randomness, which may degrade the reproducibility.
 
 \vspace{0.5\baselineskip} \noindent \textbf{PCA initialization of classifier improving reproducibility.}
 Table~\ref{tab:pca} shows that initializing the weight of the classifier by PCA as discussed in Sect.~\ref{sec:network} does improve the reproducibility of our method (\ie, our method with PCA performs well over different random seeds). Note that $\nabla_{\mathbf{p}}\mathcal{H}_{\alpha}(\mathbf{p}) \propto (p_1^{\alpha-1},p_2^{\alpha-1},\cdots,p_{|\Y|}^{\alpha-1})$, which implies that $\mathbf{p}$ is updated with high possibility towards the class corresponding to the largest element of $\mathbf{p}$. Therefore, good initialization with ``correct'' $\nabla_{\mathbf{p}}\mathcal{H}_{\alpha}(\mathbf{p})$ on target domain may yield better results. Our PCA-based initialization specifies the variance preservation idea of commonly used random initialization strategies~\cite{pmlr-v9-glorot10a,7410480} for normalized features and reduces the randomness, which may yield better reproducibility.
 
\begin{table}[t]
 \caption{Results (under varying random seeds) of ARPM with and without PCA for initializing classifier on VisDA-2017 dataset.}
     \centering
     \setlength{\tabcolsep}{7.5pt}	
     \begin{tabular}{l|ccccc}
     \toprule
          Seed &2015&2017&2019&2021&2023\\
    \midrule
          w/ PCA&92.4&93.5&92.2&93.9&93.6\\
          w/o PCA &93.8&76.8&94.1&78.6& 93.9\\
    \bottomrule
     \end{tabular}
     \label{tab:pca}
 \end{table}
 

 \begin{figure}[t]
	\centering
    \subfigure[]{ \includegraphics[width=0.45\columnwidth]{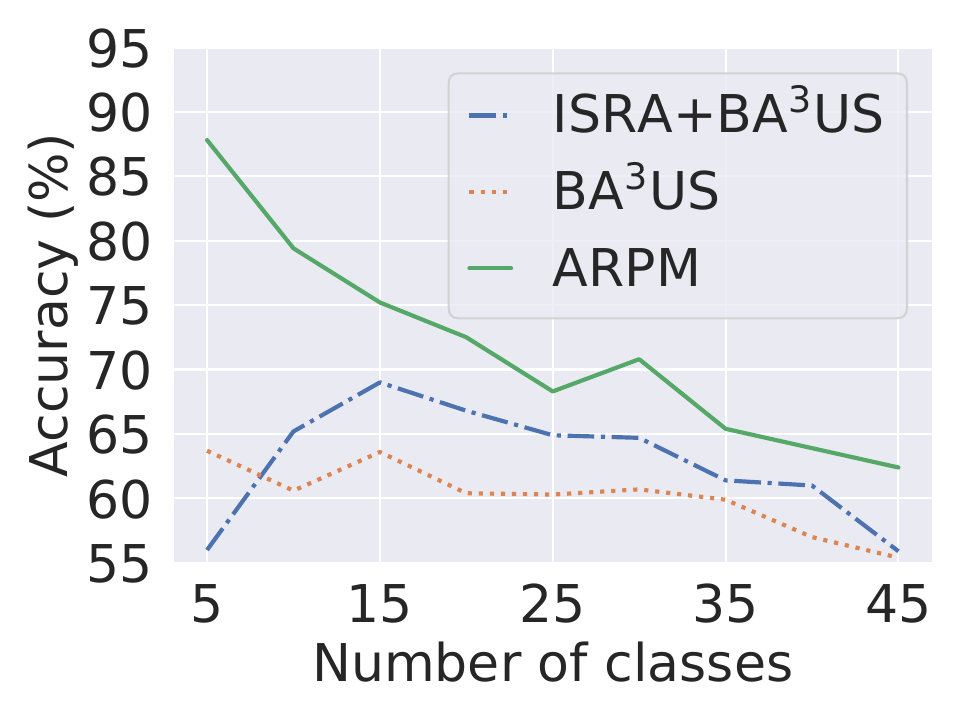}
		\label{fig:num_AC}}
	\subfigure[]{ \includegraphics[width=0.45\columnwidth]{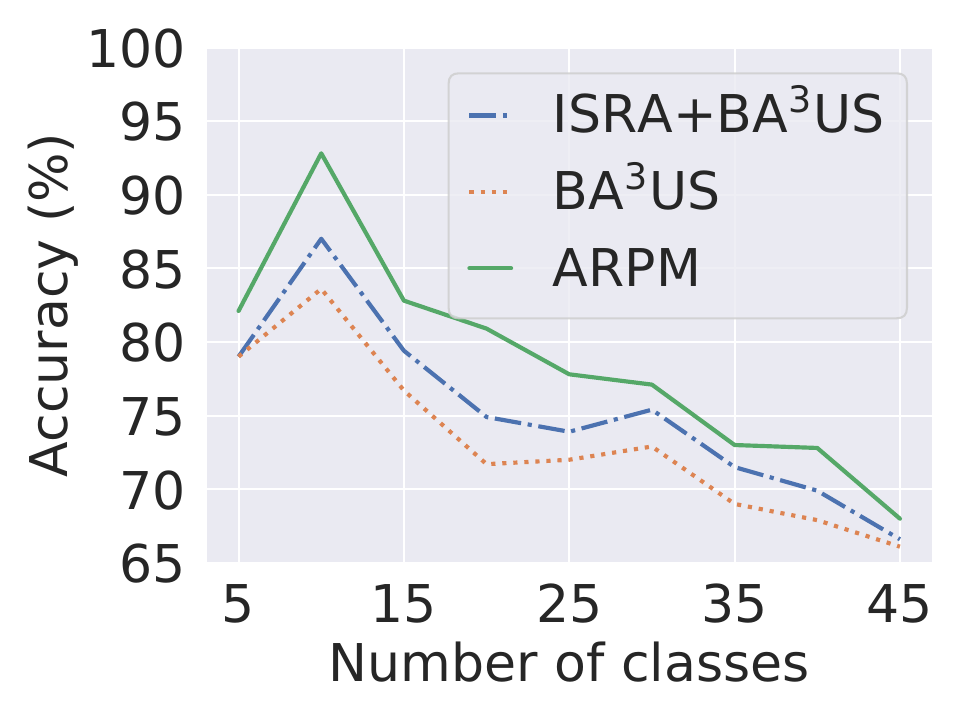} 
		\label{fig:num_CA}}
    
	\caption{ Accuracy with varying numbers of target classes in tasks (a) A$\rightarrow$C and (b) C$\rightarrow$A on Office-Home dataset.
    }
    \label{fig:number_of_classes}
\end{figure}

 \vspace{0.5\baselineskip} \noindent \textbf{Accuracy with varying numbers of target classes.} 
 We evaluate our method with different numbers of target classes in Fig.~\ref{fig:number_of_classes}. Our method of ARPM outperforms recent PDA methods BA$^3$US~\cite{liang2020balanced} and ISRA+BA$^3$US~\cite{xiao2021implicit} when the number of target classes is smaller than 45 (the number of source classes is 65). This indicates that our method is effective for PDA with different degrees of label space mismatch.

 \vspace{0.5\baselineskip} \noindent \textbf{Sensitivity to hyper-parameters.}
  We investigate the effect of hyper-parameters  $\lambda$ and $\rho$ in Fig.~\ref{fig:hyper_param}. Our method is relatively stable to $\rho$ in range [3, 9] as shown in Fig.~\ref{fig:rho} and to $\lambda$ in [0.3, 0.7] as shown in Fig.~\ref{fig:lam}.

  \begin{figure}[t]
	\centering
    \subfigure[]{ \includegraphics[width=0.45\columnwidth]{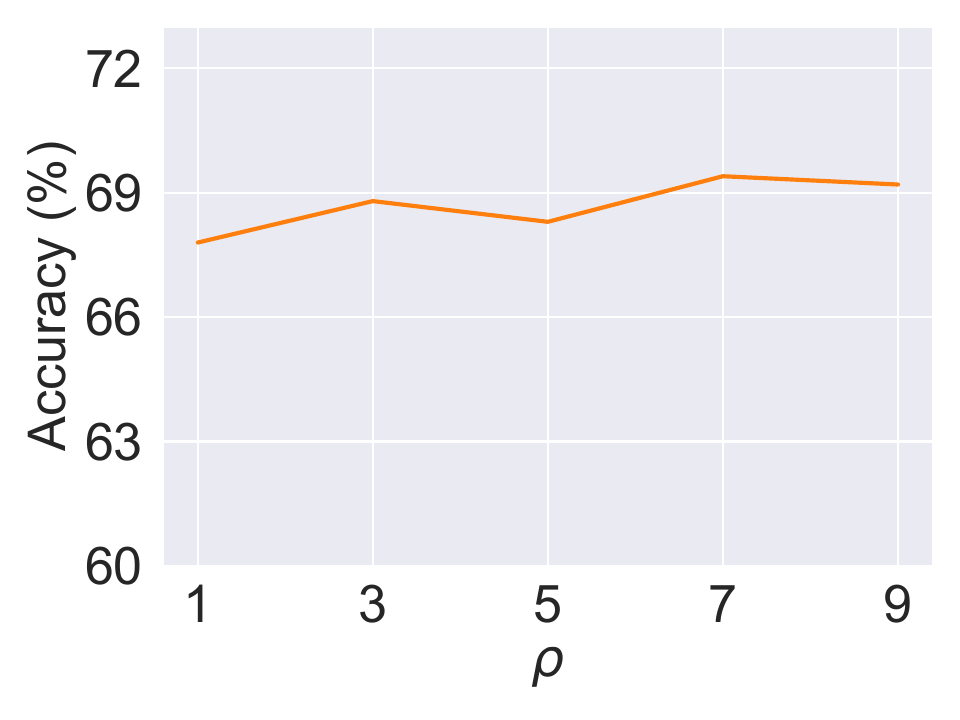}
		\label{fig:rho}}
	\subfigure[]{ \includegraphics[width=0.45\columnwidth]{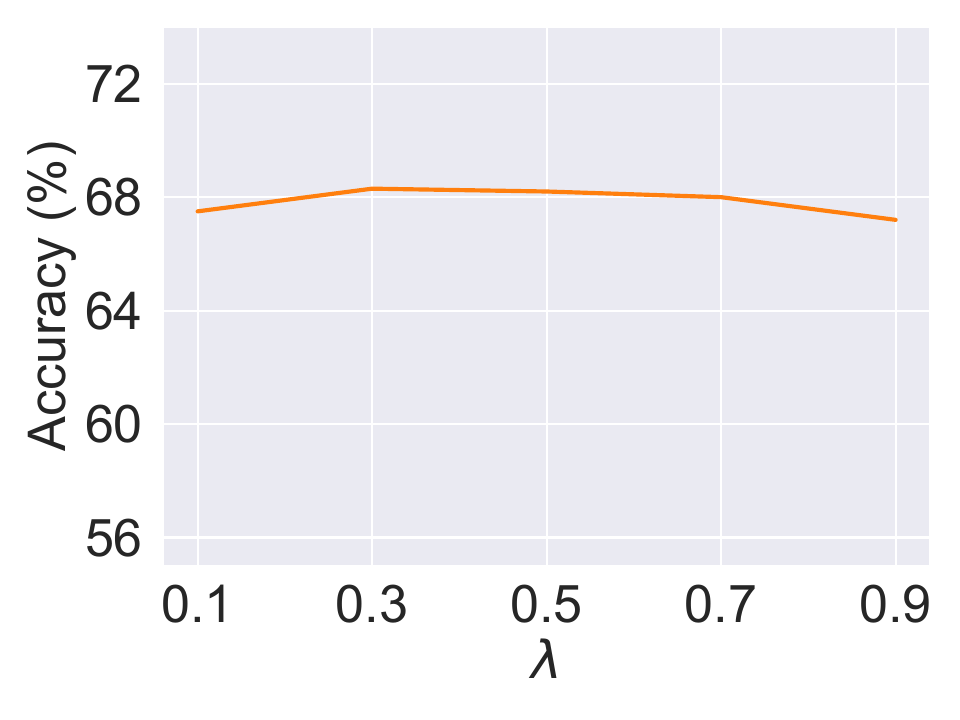} 
		\label{fig:lam}}
    
	\caption{ Results for varying magnitudes of (a) $\rho$ in the constraints of adversarial reweighting model and (b) $\lambda$ in Eq.~\eqref{eq:overall_loss}, in task A$\rightarrow$C on Office-Home dataset.
    }
    \label{fig:hyper_param}
\end{figure}

 \begin{figure}[t]
	\centering
    \subfigure[]{ \includegraphics[width=0.46\columnwidth]{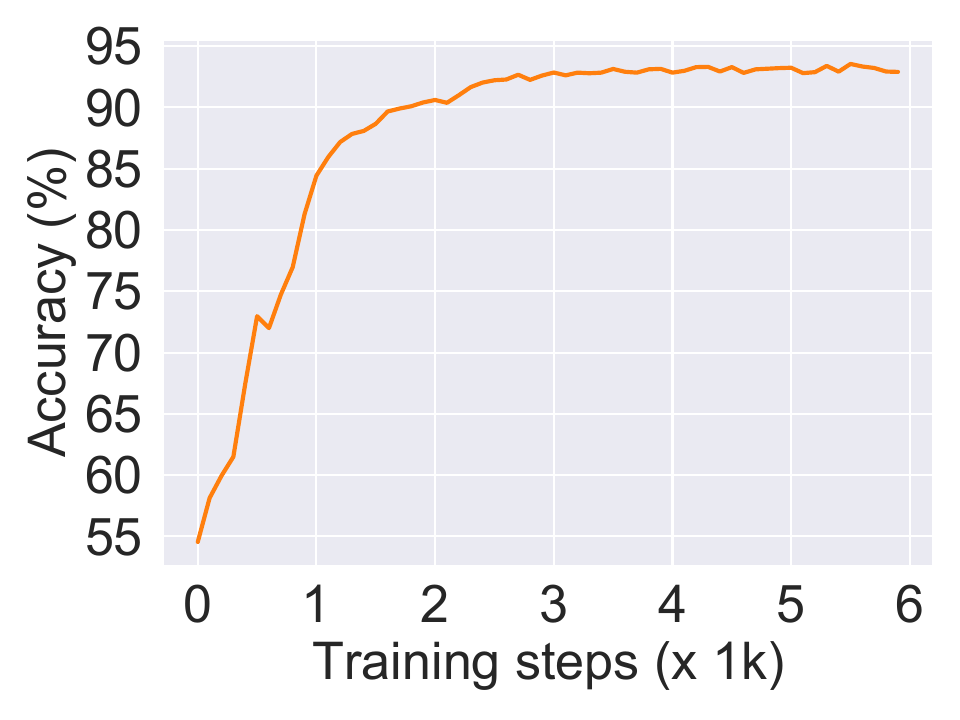} 
		\label{fig:acc}}
	\subfigure[]{ \includegraphics[width=0.46\columnwidth]{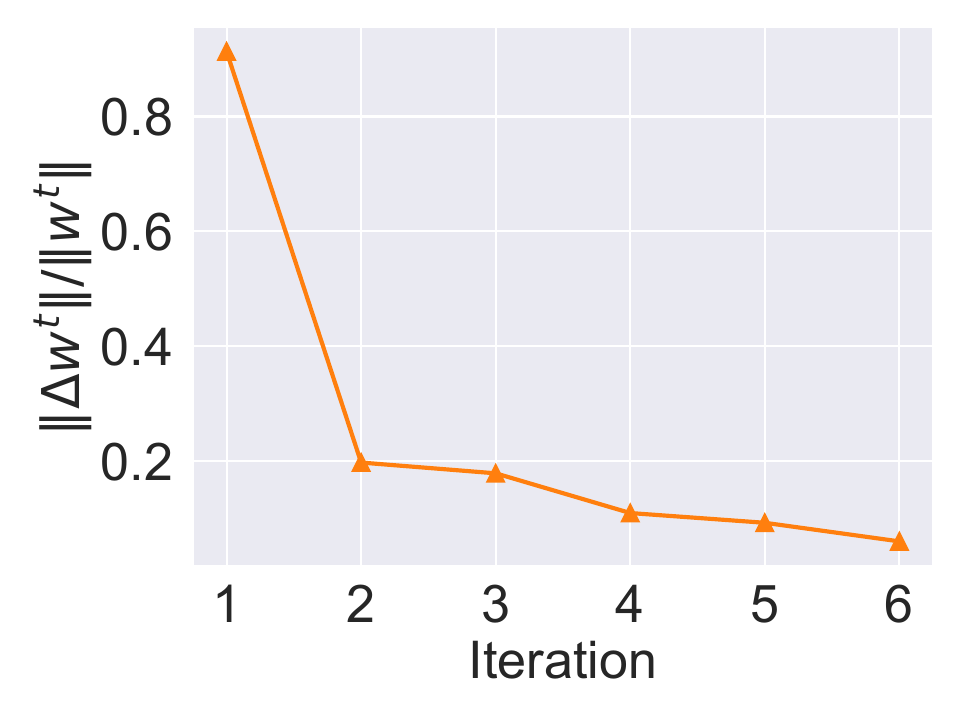} 
		\label{fig:delta_weight}}
    
	\caption{ (a) Accuracy in training in task S$\rightarrow$R on VisDA-2017 dataset. (b) {Relative} difference of source data weights in alternate iteration in task S$\rightarrow$R on VisDA-2017 dataset.
    }
    \label{fig:convergence}
\end{figure}

 \vspace{0.5\baselineskip} \noindent \textbf{Convergence of training algorithm.}
In Fig.~\ref{fig:convergence}, we take the PDA task S$\rightarrow$R on VisDA-2017 as an example to study the convergence of our method. Figure~\ref{fig:acc} indicates that the accuracy of our approach stably increases and converges in the training process.  We also show the relative difference of weights in Fig.~\ref{fig:delta_weight}. The relative difference is $\frac{\|\Delta\mathbf{w}^t\|}{\|\mathbf{w}^t\|}$, where $\Delta\mathbf{w}^t=\mathbf{w}^{t+1}-\mathbf{w}^t$, and $\mathbf{w}^t$ is the value of the weights in the $t$-th iteration of the alternate training algorithm. We can see that $\frac{\|\Delta\mathbf{w}^t\|}{\|\mathbf{w}^t\|}$ stably decreases.

 \vspace{0.5\baselineskip} \noindent \textbf{Visualization of learned weights.}
 We visualize the learned average weights of source domain data of each class in task S$\rightarrow$R on VisDA-2017 dataset, as shown in Fig.~\ref{fig:hist_weights}. We can see that the source domain common class (the first six classes) data are assigned with larger weights in general (except the 6-th class). Even for the 6-th class, its weight is larger than the weights of five among the total of six source-private classes.
  \begin{figure}[t]
	\centering
    \subfigure[]{ \includegraphics[width=0.45\columnwidth]{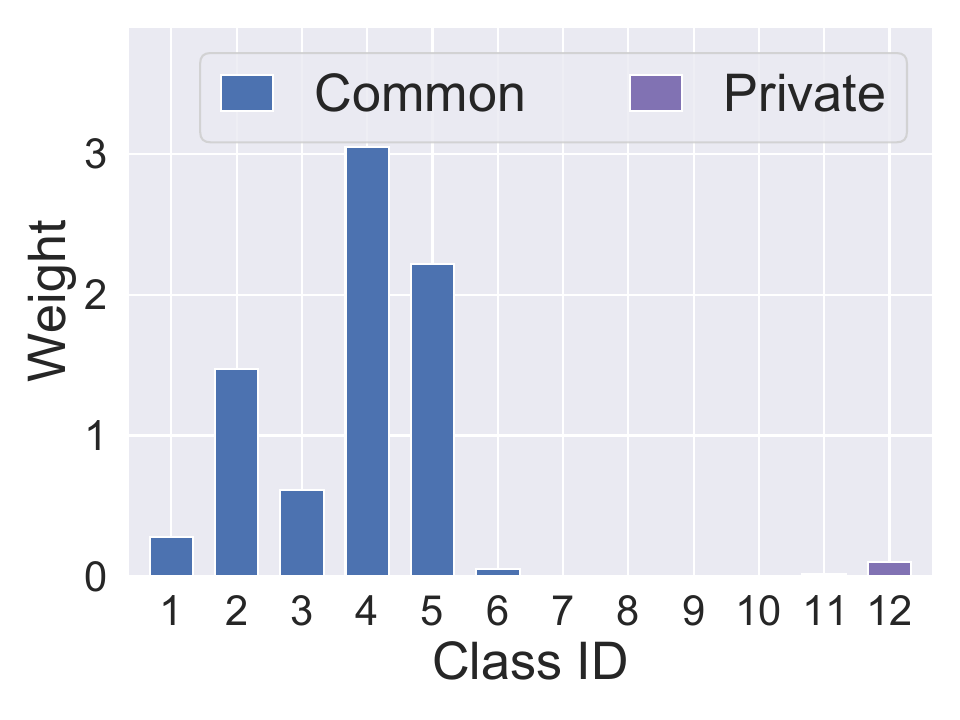} 
		\label{fig:hist_weights}}
	\subfigure[]{ \includegraphics[width=0.45\columnwidth]{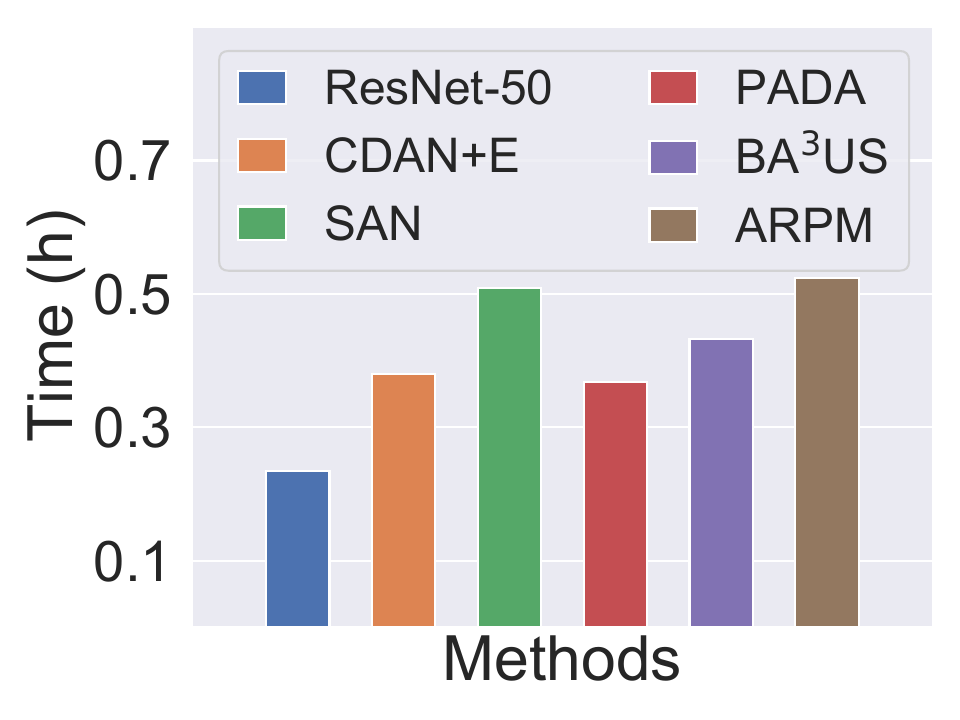} 
		\label{fig:computational_cost}}
    
	\caption{ (a) Average weights for each class on source domain in task S$\rightarrow$R on VisDA-2017. (b) Computational cost of PDA methods in task A$\rightarrow$C on Office-Home.
    }
\end{figure}

\begin{figure}[t]
	\centering
        \includegraphics[width=1.02\columnwidth]{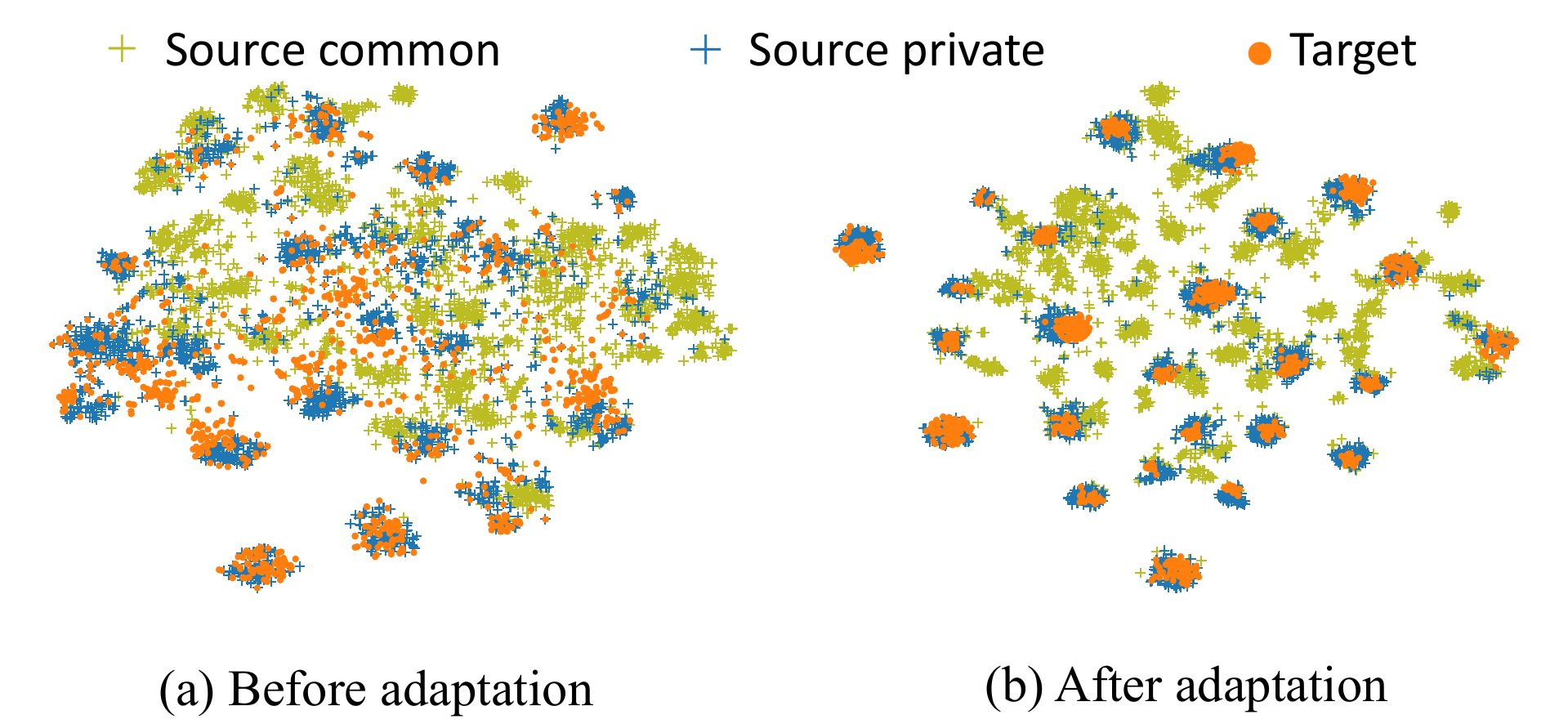}
   
	\caption{T-SNE visualization of features in task R$\rightarrow$A on Office-Home dataset. (a) Features before adaptation. (b) Features after adaptation of ARPM.}
 \label{fig:tsne_features}
\end{figure}

 \vspace{0.5\baselineskip} \noindent \textbf{Computational cost.}
 We compare the computational cost of different methods with the total training time in the same training steps (5000 steps), as in Fig.~\ref{fig:computational_cost}. Figure~\ref{fig:computational_cost} shows that our approach (ARPM) is comparable to other methods in terms of computational time cost. Note that the test time cost is similar for all the methods because all the methods only need one forward process.

{
 \vspace{0.5\baselineskip} \noindent \textbf{Feature visualization.}
 We visualize the features before and after the adaptation of ARPM in Fig.~\ref{fig:tsne_features} by T-SNE~\cite{van2008visualizing}. We can see that from Fig.~\ref{fig:tsne_features}(b), the source domain common class features are more discriminative/separable than the source-private class features, and the target domain data features are aligned with the source domain common class features.
}

\section{Conclusion}
In this paper, we propose a novel ARPM approach for PDA, in which we propose an adversarial reweighting model to learn to reweight source domain data, propose $\alpha$-power maximization to reduce prediction uncertainty, and utilize the neighborhood reciprocity clustering to enforce robustness. Extensive experiments on five benchmark datasets demonstrate the effectiveness of the proposed ARPM for PDA. We also present the theoretical analysis of the proposed method. 
{Additionally, we extend our approach to more ``open-world'' recognition tasks, including open-set DA, universal DA, and TTA. Since both the adversarial reweighting model and the $\alpha$-power maximization in our approach require accessing the target domain data, it is non-trivial to extend our approach to the adaptation tasks without target domain data, \eg, domain generalization. We will explore more applications of our approach in the future.}


\section{Acknowledgments}
The work was supported by National Key R\&D Program 2021YFA1003002, NSFC (12125104, U20B2075, 12326615), Postdoctoral Fellowship Program of CPSF GZB20230582, and Key Laboratory of Biomedical Imaging Science and System, Chinese Academy of Sciences.

\section{Data Availability Statement}
The data that support the findings of this study
are available from the authors upon
request.

\section{Statements and Declarations}
The authors declare that they have no known competing financial interests or personal relationships
that could have appeared to influence the work
reported in this paper.

\section*{Appendix}\label{sec:app}

We first give some lemmas and then provide the proof of Theorem~\ref{thm:pda_bound}.

\begin{lemma} \label{thm:lemma_R}
    Divide $\Yc$ into $S_1$ and $S_2$ such that $S_1 = \{i\in\Yc:\E_{(\x,y)\sim\frac{\tP_i+Q_i}{2}}\mathbb{I}(\exists\x'\in\mathcal{N}(\x),\tilde{f}(\x)\neq\tilde{f}(\x'))<\min\{\epsilon,q\}\}$ and $S_2 = \{i\in\Yc:\E_{(\x,y)\sim\frac{\tP_i+Q_i}{2}}\mathbb{I}(\exists\x'\in\mathcal{N}(\x),\tilde{f}(\x)\neq\tilde{f}(\x'))\geq\min\{\epsilon,q\}\}$. Under the condition of Theorem~\ref{thm:pda_bound}, we have 
    \begin{equation}
        \sum_{i\in S_2} \frac{\tP+Q}{2}(y=i) \leq \frac{R_{\frac{\tP+Q}{2}}(f)}{\min \{\epsilon,q\}}.
    \end{equation}
\end{lemma}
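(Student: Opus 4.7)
The plan is to run a Markov-type argument after decomposing $R_{\frac{\tP+Q}{2}}(f)$ class by class. Since both $\tP$ and $Q$ are supported on $\Yc$, the first step is to write
\begin{equation*}
R_{\frac{\tP+Q}{2}}(f) = \sum_{i\in\Yc} \tfrac{\tP+Q}{2}(y=i)\cdot \E_{(\x,y)\sim\frac{\tP_i+Q_i}{2}}\mathbb{I}\bigl(\exists\,\x'\in\mathcal{N}(\x),\ \tilde f(\x)\neq\tilde f(\x')\bigr),
\end{equation*}
by the law of total probability, using that the event inside the indicator depends only on $\x$. This rewrites the robustness as a class-indexed sum whose inner quantities are exactly those appearing in the definitions of $S_1$ and $S_2$.

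Next, since every summand is non-negative, I would drop the contributions from $i\in S_1$ and retain only those with $i\in S_2$. This yields
\begin{equation*}
R_{\frac{\tP+Q}{2}}(f) \;\geq\; \sum_{i\in S_2} \tfrac{\tP+Q}{2}(y=i)\cdot \E_{(\x,y)\sim\frac{\tP_i+Q_i}{2}}\mathbb{I}\bigl(\exists\,\x'\in\mathcal{N}(\x),\ \tilde f(\x)\neq\tilde f(\x')\bigr).
\end{equation*}
Then by the very definition of $S_2$, each inner expectation is at least $\min\{\epsilon,q\}$. Pulling this constant out and dividing through gives the claimed bound $\sum_{i\in S_2}\tfrac{\tP+Q}{2}(y=i)\leq R_{\frac{\tP+Q}{2}}(f)/\min\{\epsilon,q\}$. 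This is essentially a Markov inequality applied to the random variable ``class-conditional probability of local non-robustness''.

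The main (and, I expect, only) obstacle is the bookkeeping in the first step: one must carefully reconcile the mixture distribution $\frac{\tP+Q}{2}$ on $\X\times\Y$ with the class-conditional mixture $\frac{\tP_i+Q_i}{2}$ on $\X$ used in the definitions of $S_1$ and $S_2$, so that the per-class decomposition matches the form appearing in the partition hypothesis. Once this is set up in notationally compatible form, the remainder is a two-line deduction: restrict the non-negative sum to $S_2$, apply the pointwise $\min\{\epsilon,q\}$ lower bound coming from the $S_2$ membership condition, and rearrange.
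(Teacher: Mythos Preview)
Your proposal is correct and follows essentially the same approach as the paper: decompose $R_{\frac{\tP+Q}{2}}(f)$ class by class, discard the $S_1$ terms, and use the defining $\min\{\epsilon,q\}$ lower bound on $S_2$. The only cosmetic difference is that the paper wraps this same chain of inequalities in a proof by contradiction, whereas you argue directly.
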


 \begin{proof}
    Suppose $\sum_{i\in S_2} \frac{\tP+Q}{2}(y=i) > \frac{R(f)}{\min \{\epsilon,q\}}$, which implies 
    \begin{equation}
        \begin{split}
           & R_{\frac{\tP+Q}{2}}(f) \\=& \frac{\tP+Q}{2}(\{\x,\exists\x'\in\mathcal{N}(\x),\tilde{f}(\x)\neq\tilde{f}(\x')\})\\
            =&\E_{(\x,y)\sim\frac{\tP+Q}{2}}\mathbb{I}(\exists\x'\in\mathcal{N}(\x),\tilde{f}(\x)\neq\tilde{f}(\x'))\\
            =& \sum_{i\in\Yc}\Big\{\E_{\x\sim\frac{\tP_i+Q_i}{2}}\mathbb{I}(\exists\x'\in\mathcal{N}(\x),\tilde{f}(\x)\neq\\&\hspace{1cm}\tilde{f}(\x'))\frac{\tP+Q}{2}(y=i)\Big\}\\
            \geq& \sum_{i\in S_2}\Big\{\E_{\x\sim\frac{\tP_i+Q_i}{2}}\mathbb{I}(\exists\x'\in\mathcal{N}(\x),\tilde{f}(\x)\neq \\&\hspace{1cm}\tilde{f}(\x'))\frac{\tP+Q}{2}(y=i)\Big\}\\
            \geq & \min \{\epsilon,q\} \sum_{i\in S_2} \frac{\tP+Q}{2}(y=i)
            > R_{\frac{\tP+Q}{2}}(f).
        \end{split}
    \end{equation}
    $R_{\frac{\tP+Q}{2}}(f)>R_{\frac{\tP+Q}{2}}(f)$ forms a contradiction.
 \end{proof}

 \begin{lemma}[Lemma 2 in \cite{liu2021cycle}]\label{thm:lemma_q}
    Under the condition of Theorem~\ref{thm:pda_bound}, if sub-populations $\tP_i$ and $Q_i$ satisfy $\E_{(\x,y)\sim\frac{\tP_i+Q_i}{2}}\mathbb{I}(\exists\x'\in\mathcal{N}(\x),\tilde{f}(\x)\neq\tilde{f}(\x'))<\min\{\epsilon,q\}$, we have 
    \begin{equation}
        |\varepsilon_{\tP_i}(f)-\varepsilon_{Q_i}(f)|\leq 2q.
    \end{equation}
 \end{lemma}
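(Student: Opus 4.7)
\vspace{0.5\baselineskip}\noindent\textbf{Proof proposal for Lemma~\ref{thm:lemma_q}.} The plan is to translate the robustness assumption and the constant-expansion assumption into a dichotomy on the size of the error set, and then observe that in either branch of the dichotomy the source and target class-conditional errors are automatically close. Write $A=\{\x\in\X:\tilde f(\x)\neq i\}$, so that $\varepsilon_{\tP_i}(f)=\tP_i(A)$ and $\varepsilon_{Q_i}(f)=Q_i(A)$, and let $\mu_i=\tfrac{1}{2}(\tP_i+Q_i)$. We are told that the $\mu_i$-measure of the non-robust set $\mathrm{NR}(f):=\{\x:\exists\x'\in\mathcal{N}(\x),\ \tilde f(\x)\neq\tilde f(\x')\}$ is strictly less than $\min\{\epsilon,q\}$, and Assumption (iii) says $\mu_i$ enjoys $(q,\epsilon)$-constant expansion.

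First I would prove the key containment $\mathcal{N}(A)\setminus A\subset\mathrm{NR}(f)$ and, by the same argument, $\mathcal{N}(A^c)\setminus A^c\subset\mathrm{NR}(f)$. Indeed, if $\x\in\mathcal{N}(A)\setminus A$ then $\tilde f(\x)=i$ and there exists $\x'\in A\cap\mathcal{N}(\x)$ with $\tilde f(\x')\neq i=\tilde f(\x)$, so $\x$ is non-robust; the dual statement is symmetric. Combining with the hypothesis gives
\begin{equation}
\mu_i(\mathcal{N}(A)\setminus A)<\min\{\epsilon,q\},\quad \mu_i(\mathcal{N}(A^c)\setminus A^c)<\min\{\epsilon,q\}.
\end{equation}

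Next I would apply the $(q,\epsilon)$-expansion of $\mu_i$ to both $A$ and $A^c$ to derive a dichotomy. If $\mu_i(A)\in[q,\tfrac12]$, the expansion would force $\mu_i(\mathcal{N}(A)\setminus A)>\min\{\epsilon,\mu_i(A)\}\geq\min\{\epsilon,q\}$, contradicting the display above; hence $\mu_i(A)<q$ or $\mu_i(A)>\tfrac12$. The same reasoning applied to $A^c$ yields $\mu_i(A^c)<q$ or $\mu_i(A^c)>\tfrac12$. Combining, either (Case 1) $\mu_i(A)<q$, or (Case 2) $\mu_i(A^c)<q$, because $\mu_i(A)>\tfrac12$ forces $\mu_i(A^c)<\tfrac12$ and then the dichotomy on $A^c$ collapses to $\mu_i(A^c)<q$.

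Finally I would translate each branch into the desired inequality using $\tP_i(B)+Q_i(B)=2\mu_i(B)$ and the non-negativity of $\tP_i(B),Q_i(B)$. In Case 1, $\tP_i(A),Q_i(A)\in[0,2q)$, so $|\varepsilon_{\tP_i}(f)-\varepsilon_{Q_i}(f)|=|\tP_i(A)-Q_i(A)|\le 2q$. In Case 2, the same bound applied to $A^c$ gives $|\tP_i(A^c)-Q_i(A^c)|\le 2q$, and since $\tP_i(A^c)-Q_i(A^c)=-(\tP_i(A)-Q_i(A))$ the conclusion follows. The main conceptual obstacle is setting up the correct containment $\mathcal{N}(A)\setminus A\subset\mathrm{NR}(f)$ and recognising that the expansion must be applied to both $A$ and $A^c$ to rule out the ``large error set'' regime; once this is in place, the rest is bookkeeping with the mixture identity.
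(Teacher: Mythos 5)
Your proof is correct, but note that the paper itself does not prove this lemma at all --- it is imported verbatim as Lemma~2 of \cite{liu2021cycle}, so the only comparison available is with that reference, whose argument yours essentially reproduces: take the error set $A=\{\x:\tilde f(\x)\neq i\}$, show $\mathcal{N}(A)\setminus A$ and $\mathcal{N}(A^c)\setminus A^c$ lie in the non-robust set, apply $(q,\epsilon)$-constant expansion to both $A$ and $A^c$ to force $\mu_i(A)<q$ or $\mu_i(A^c)<q$, and finish with the mixture identity. The only implicit step worth flagging is that your containment $\mathcal{N}(A)\setminus A\subset\mathrm{NR}(f)$ uses symmetry of $d$ (from $\x\in\mathcal{N}(\x'')$ you infer $\x''\in\mathcal{N}(\x)$), which is legitimate here since $d$ is a distance on $\X$.
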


 \begin{lemma}[Lemma 3 in \cite{liu2021cycle}]\label{thm:lemma_M}
    For any distribution $P$, if $f$ is $L$-Lipschiz w.r.t. $d(\cdot,\cdot)$, we have 
    \begin{equation}
        R_P(f) \leq \frac{1}{(1-2L\xi)}(1-{M}_P(f)).
    \end{equation}
 \end{lemma}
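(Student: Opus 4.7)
The plan is to translate non-robustness at $\x$ into a margin-smallness condition via the Lipschitz property, then apply Markov's inequality to the nonnegative random variable $1-M(f(\x))\in[0,1]$.

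First, I would establish the pointwise implication that if $\x$ lies in the non-robust set $\{\x:\exists\,\x'\in\mathcal{N}(\x),\tilde{f}(\x)\neq\tilde{f}(\x')\}$, then $M(f(\x))\leq 2L\xi$. Writing $i^\star=\tilde{f}(\x)$ and $j^\star=\tilde{f}(\x')$ with $i^\star\neq j^\star$, the definition of $\tilde{f}$ gives the two inequalities $f(\x)_{i^\star}\geq f(\x)_{j^\star}$ and $f(\x')_{j^\star}\geq f(\x')_{i^\star}$. The $L$-Lipschitz assumption together with $d(\x,\x')\leq\xi$ yields the coordinatewise bound $|f(\x)_k-f(\x')_k|\leq L\xi$ for every class $k$. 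Chaining these four inequalities through $f(\x')_{i^\star}\geq f(\x)_{i^\star}-L\xi$ and $f(\x')_{j^\star}\leq f(\x)_{j^\star}+L\xi$ gives $f(\x)_{i^\star}-f(\x)_{j^\star}\leq 2L\xi$, and since $j^\star\neq i^\star$ we have $M(f(\x))=f(\x)_{i^\star}-\max_{i\neq i^\star}f(\x)_i\leq f(\x)_{i^\star}-f(\x)_{j^\star}\leq 2L\xi$.

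Second, this yields the set inclusion $\{\x:\exists\,\x'\in\mathcal{N}(\x),\tilde{f}(\x)\neq\tilde{f}(\x')\}\subseteq\{\x:1-M(f(\x))\geq 1-2L\xi\}$. Taking $P$-probabilities and applying Markov's inequality to the nonnegative random variable $1-M(f(\x))$ (nonnegative because $f(\x)$ is a probability vector, so $M(f(\x))\in[0,1]$), I obtain
$$R_P(f)\leq P\bigl(1-M(f(\x))\geq 1-2L\xi\bigr)\leq \frac{\E_{(\x,y)\sim P}[1-M(f(\x))]}{1-2L\xi}=\frac{1-M_P(f)}{1-2L\xi},$$
which is the claim. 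Implicitly the bound requires $2L\xi<1$, the standard small-perturbation regime.

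The main obstacle, such as it is, lies in pinning down the convention under which $f$ is declared $L$-Lipschitz into $[0,1]^{|\Y|}$: the argument proceeds verbatim as long as coordinatewise bounds $|f(\x)_k-f(\x')_k|\leq L\,d(\x,\x')$ hold, which is immediate under the $\ell_\infty$ convention on the output, and also under the $\ell_2$ convention since $|f(\x)_k-f(\x')_k|\leq\|f(\x)-f(\x')\|_2$. Beyond resolving this convention, the proof is a routine combination of a Lipschitz margin estimate and Markov's inequality, with no further subtlety.
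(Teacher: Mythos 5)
Your proof is correct: the pointwise Lipschitz-margin estimate $M(f(\x))\leq 2L\xi$ on the non-robust set, followed by Markov's inequality applied to the $[0,1]$-valued quantity $1-M(f(\x))$, is exactly the standard argument and matches the proof of this lemma in its source \cite{liu2021cycle}, which this paper quotes without reproving. Your handling of the implicit requirement $2L\xi<1$ and of the output-norm convention (coordinatewise Lipschitz bounds sufficing) is appropriate and introduces no gap.
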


 \vspace{0.5\baselineskip} \noindent \textbf{Proof of Theorem~\ref{thm:pda_bound}.}
 From the definition of $\varepsilon_Q(f)$ in PDA, we have
 \begin{equation}\label{eq:app_eq1}
    \begin{split}
        \varepsilon_Q(f) = & \sum_{i\in\Yc}\varepsilon_{Q_i}(f)Q(y=i) \\
        \leq & \sum_{i\in S_1}\varepsilon_{Q_i}(f)Q(y=i)  + \sum_{i\in S_2} Q(y=i)\\
        \leq & \sum_{i\in S_1}(\varepsilon_{P_i}(f)+2q)r\tP(y=i) \\
        &+ \sum_{i\in S_2} Q(y=i)\\
        \leq & \sum_{i\in \Yc}(\varepsilon_{P_i}(f)+2q)r\tP(y=i) \\
        &+ \sum_{i\in S_2} Q(y=i)\\
        = & r\varepsilon_{\tP}(f) + 2qr + \sum_{i\in S_2} Q(y=i).
    \end{split}
 \end{equation}
 The second inequality uses Lemma~\ref{thm:lemma_q} and $\frac{Q(y=i)}{\tP(y=i)}\leq r$ for $i\in\Yc$. 
 Since
 \begin{equation}
    \begin{split}
         \frac{\tP+Q}{2}(y=i) =& \frac{1}{2}(\tP(y=i)+Q(y=i))\\
         \geq&  \frac{1}{2}(\frac{1}{r}Q(y=i)+Q(y=i)) \\
         = & \frac{1+r}{2r}Q(y=i),
    \end{split}
 \end{equation}
 we have  
 \begin{equation}
     \sum_{i\in S_2} Q(y=i) \leq  \frac{2r}{1+r}\sum_{i\in S_2} \frac{\tP+Q}{2}(y=i).
 \end{equation}
 Using Lemma~\ref{thm:lemma_R}, we have
 \begin{equation}
    \begin{split}
        \sum_{i\in S_2} Q(y=i) 
        \leq &\frac{2r}{\min \{\epsilon,q\}(1+r)}R_{\frac{\tP+Q}{2}}(f).
    \end{split}
 \end{equation}
 Applying Lemma~\ref{thm:lemma_M}, for any $\eta\in[0,1]$, we have 
 \begin{equation}\label{eq:app_eq2}
     \begin{split}
        &\sum_{i\in S_2} Q(y=i) \\
        \leq &\frac{2r\eta}{\min \{\epsilon,q\}(1+r)}R_{\frac{\tP+Q}{2}}(f)\\
        +& \frac{2r(1-\eta)}{\min \{\epsilon,q\}(1+r)(1-2L\xi)}(1-\mathcal{M}_{\frac{\tP+Q}{2}}(f)).
    \end{split}
 \end{equation}
 Combining Eqs.~\eqref{eq:app_eq1} and~\eqref{eq:app_eq2}, we have
 \begin{equation}
    \begin{split}
        \varepsilon_Q(f) \leq &  r\varepsilon_{\tP}(f) + c_1 R_{\frac{\tP+Q}{2}}(f)\\
        &+c_2(1-{M}_{\frac{\tP+Q}{2}}(f)) + 2rq,
    \end{split}
\end{equation}
where the coeffcients $c_1 = \frac{2\eta r}{\min\{\epsilon,q\}(1+r)}$ and $c_2 = \frac{2r(1-\eta) }{\min\{\epsilon,q\}(1-2L\xi)(1+r)}$ are constants to $f$. \qed


\bibliography{ref}


\begin{thebibliography}{107}
\ifx \bisbn   \undefined \def \bisbn  #1{ISBN #1}\fi
\ifx \binits  \undefined \def \binits#1{#1}\fi
\ifx \bauthor  \undefined \def \bauthor#1{#1}\fi
\ifx \batitle  \undefined \def \batitle#1{#1}\fi
\ifx \bjtitle  \undefined \def \bjtitle#1{#1}\fi
\ifx \bvolume  \undefined \def \bvolume#1{\textbf{#1}}\fi
\ifx \byear  \undefined \def \byear#1{#1}\fi
\ifx \bissue  \undefined \def \bissue#1{#1}\fi
\ifx \bfpage  \undefined \def \bfpage#1{#1}\fi
\ifx \blpage  \undefined \def \blpage #1{#1}\fi
\ifx \burl  \undefined \def \burl#1{\textsf{#1}}\fi
\ifx \doiurl  \undefined \def \doiurl#1{\url{https://doi.org/#1}}\fi
\ifx \betal  \undefined \def \betal{\textit{et al.}}\fi
\ifx \binstitute  \undefined \def \binstitute#1{#1}\fi
\ifx \binstitutionaled  \undefined \def \binstitutionaled#1{#1}\fi
\ifx \bctitle  \undefined \def \bctitle#1{#1}\fi
\ifx \beditor  \undefined \def \beditor#1{#1}\fi
\ifx \bpublisher  \undefined \def \bpublisher#1{#1}\fi
\ifx \bbtitle  \undefined \def \bbtitle#1{#1}\fi
\ifx \bedition  \undefined \def \bedition#1{#1}\fi
\ifx \bseriesno  \undefined \def \bseriesno#1{#1}\fi
\ifx \blocation  \undefined \def \blocation#1{#1}\fi
\ifx \bsertitle  \undefined \def \bsertitle#1{#1}\fi
\ifx \bsnm \undefined \def \bsnm#1{#1}\fi
\ifx \bsuffix \undefined \def \bsuffix#1{#1}\fi
\ifx \bparticle \undefined \def \bparticle#1{#1}\fi
\ifx \barticle \undefined \def \barticle#1{#1}\fi
\bibcommenthead
\ifx \bconfdate \undefined \def \bconfdate #1{#1}\fi
\ifx \botherref \undefined \def \botherref #1{#1}\fi
\ifx \url \undefined \def \url#1{\textsf{#1}}\fi
\ifx \bchapter \undefined \def \bchapter#1{#1}\fi
\ifx \bbook \undefined \def \bbook#1{#1}\fi
\ifx \bcomment \undefined \def \bcomment#1{#1}\fi
\ifx \oauthor \undefined \def \oauthor#1{#1}\fi
\ifx \citeauthoryear \undefined \def \citeauthoryear#1{#1}\fi
\ifx \endbibitem  \undefined \def \endbibitem {}\fi
\ifx \bconflocation  \undefined \def \bconflocation#1{#1}\fi
\ifx \arxivurl  \undefined \def \arxivurl#1{\textsf{#1}}\fi
\csname PreBibitemsHook\endcsname

\bibitem[\protect\citeauthoryear{He et~al.}{2016}]{He_2016_CVPR}
\begin{bchapter}
\bauthor{\bsnm{He}, \binits{K.}},
\bauthor{\bsnm{Zhang}, \binits{X.}},
\bauthor{\bsnm{Ren}, \binits{S.}},
\bauthor{\bsnm{Sun}, \binits{J.}}:
\bctitle{Deep residual learning for image recognition}.
In: \bbtitle{CVPR}
(\byear{2016})
\end{bchapter}
\endbibitem

\bibitem[\protect\citeauthoryear{Krizhevsky et~al.}{2012}]{krizhevsky2012imagenet}
\begin{bchapter}
\bauthor{\bsnm{Krizhevsky}, \binits{A.}},
\bauthor{\bsnm{Sutskever}, \binits{I.}},
\bauthor{\bsnm{Hinton}, \binits{G.E.}}:
\bctitle{Imagenet classification with deep convolutional neural networks}.
In: \bbtitle{NeurIPS}
(\byear{2012})
\end{bchapter}
\endbibitem

\bibitem[\protect\citeauthoryear{Simonyan and Zisserman}{2014}]{simonyan2014very}
\begin{botherref}
\oauthor{\bsnm{Simonyan}, \binits{K.}},
\oauthor{\bsnm{Zisserman}, \binits{A.}}:
Very deep convolutional networks for large-scale image recognition.
arXiv:1409.1556
(2014)
\end{botherref}
\endbibitem

\bibitem[\protect\citeauthoryear{Gu et~al.}{In press, 2022}]{9733209}
\begin{botherref}
\oauthor{\bsnm{Gu}, \binits{X.}},
\oauthor{\bsnm{Sun}, \binits{J.}},
\oauthor{\bsnm{Xu}, \binits{Z.}}:
Unsupervised and semi-supervised robust spherical space domain adaptation.
IEEE Trans. PAMI
(In press, 2022)
\end{botherref}
\endbibitem

\bibitem[\protect\citeauthoryear{Liang et~al.}{In press, 2021}]{9512429}
\begin{botherref}
\oauthor{\bsnm{Liang}, \binits{J.}},
\oauthor{\bsnm{Hu}, \binits{D.}},
\oauthor{\bsnm{Wang}, \binits{Y.}},
\oauthor{\bsnm{He}, \binits{R.}},
\oauthor{\bsnm{Feng}, \binits{J.}}:
Source data-absent unsupervised domain adaptation through hypothesis transfer and labeling transfer.
IEEE Trans. PAMI
(In press, 2021)
\end{botherref}
\endbibitem

\bibitem[\protect\citeauthoryear{Liu et~al.}{2021}]{liu2021cycle}
\begin{bchapter}
\bauthor{\bsnm{Liu}, \binits{H.}},
\bauthor{\bsnm{Wang}, \binits{J.}},
\bauthor{\bsnm{Long}, \binits{M.}}:
\bctitle{Cycle self-training for domain adaptation}.
In: \bbtitle{NeurIPS}
(\byear{2021})
\end{bchapter}
\endbibitem

\bibitem[\protect\citeauthoryear{Xu et~al.}{2022}]{xu2022cdtrans}
\begin{bchapter}
\bauthor{\bsnm{Xu}, \binits{T.}},
\bauthor{\bsnm{Chen}, \binits{W.}},
\bauthor{\bsnm{WANG}, \binits{P.}},
\bauthor{\bsnm{Wang}, \binits{F.}},
\bauthor{\bsnm{Li}, \binits{H.}},
\bauthor{\bsnm{Jin}, \binits{R.}}:
\bctitle{{CDT}rans: Cross-domain transformer for unsupervised domain adaptation}.
In: \bbtitle{ICLR}
(\byear{2022})
\end{bchapter}
\endbibitem

\bibitem[\protect\citeauthoryear{Kang et~al.}{2022}]{9219132}
\begin{barticle}
\bauthor{\bsnm{Kang}, \binits{G.}},
\bauthor{\bsnm{Jiang}, \binits{L.}},
\bauthor{\bsnm{Wei}, \binits{Y.}},
\bauthor{\bsnm{Yang}, \binits{Y.}},
\bauthor{\bsnm{Hauptmann}, \binits{A.}}:
\batitle{Contrastive adaptation network for single- and multi-source domain adaptation}.
\bjtitle{IEEE Trans. PAMI}
\bvolume{44}(\bissue{4}),
\bfpage{1793}--\blpage{1804}
(\byear{2022})
\end{barticle}
\endbibitem

\bibitem[\protect\citeauthoryear{Koniusz et~al.}{2017}]{koniusz2017domain}
\begin{bchapter}
\bauthor{\bsnm{Koniusz}, \binits{P.}},
\bauthor{\bsnm{Tas}, \binits{Y.}},
\bauthor{\bsnm{Porikli}, \binits{F.}}:
\bctitle{Domain adaptation by mixture of alignments of second-or higher-order scatter tensors}.
In: \bbtitle{CVPR}
(\byear{2017})
\end{bchapter}
\endbibitem

\bibitem[\protect\citeauthoryear{Long et~al.}{2015}]{pmlr-v37-long15}
\begin{bchapter}
\bauthor{\bsnm{Long}, \binits{M.}},
\bauthor{\bsnm{Cao}, \binits{Y.}},
\bauthor{\bsnm{Wang}, \binits{J.}},
\bauthor{\bsnm{Jordan}, \binits{M.}}:
\bctitle{Learning transferable features with deep adaptation networks}.
In: \bbtitle{ICML}
(\byear{2015})
\end{bchapter}
\endbibitem

\bibitem[\protect\citeauthoryear{Sun and Saenko}{2016}]{sun2016deep}
\begin{bchapter}
\bauthor{\bsnm{Sun}, \binits{B.}},
\bauthor{\bsnm{Saenko}, \binits{K.}}:
\bctitle{Deep coral: Correlation alignment for deep domain adaptation}.
In: \bbtitle{ECCV}
(\byear{2016})
\end{bchapter}
\endbibitem

\bibitem[\protect\citeauthoryear{Zellinger et~al.}{2017}]{zellinger2017central}
\begin{botherref}
\oauthor{\bsnm{Zellinger}, \binits{W.}},
\oauthor{\bsnm{Grubinger}, \binits{T.}},
\oauthor{\bsnm{Lughofer}, \binits{E.}},
\oauthor{\bsnm{Natschl{\"a}ger}, \binits{T.}},
\oauthor{\bsnm{Saminger-Platz}, \binits{S.}}:
Central moment discrepancy (cmd) for domain-invariant representation learning.
arXiv preprint arXiv:1702.08811
(2017)
\end{botherref}
\endbibitem

\bibitem[\protect\citeauthoryear{Ganin and Lempitsky}{2015}]{pmlr-v37-ganin15}
\begin{bchapter}
\bauthor{\bsnm{Ganin}, \binits{Y.}},
\bauthor{\bsnm{Lempitsky}, \binits{V.}}:
\bctitle{Unsupervised domain adaptation by backpropagation}.
In: \bbtitle{ICML}
(\byear{2015})
\end{bchapter}
\endbibitem

\bibitem[\protect\citeauthoryear{Tzeng et~al.}{2015}]{tzeng2015simultaneous}
\begin{bchapter}
\bauthor{\bsnm{Tzeng}, \binits{E.}},
\bauthor{\bsnm{Hoffman}, \binits{J.}},
\bauthor{\bsnm{Darrell}, \binits{T.}},
\bauthor{\bsnm{Saenko}, \binits{K.}}:
\bctitle{Simultaneous deep transfer across domains and tasks}.
In: \bbtitle{ICCV}
(\byear{2015})
\end{bchapter}
\endbibitem

\bibitem[\protect\citeauthoryear{Tzeng et~al.}{2017}]{tzeng2017adversarial}
\begin{bchapter}
\bauthor{\bsnm{Tzeng}, \binits{E.}},
\bauthor{\bsnm{Hoffman}, \binits{J.}},
\bauthor{\bsnm{Saenko}, \binits{K.}},
\bauthor{\bsnm{Darrell}, \binits{T.}}:
\bctitle{Adversarial discriminative domain adaptation}.
In: \bbtitle{CVPR}
(\byear{2017})
\end{bchapter}
\endbibitem

\bibitem[\protect\citeauthoryear{Cao et~al.}{2018a}]{cao2018partial}
\begin{bchapter}
\bauthor{\bsnm{Cao}, \binits{Z.}},
\bauthor{\bsnm{Long}, \binits{M.}},
\bauthor{\bsnm{Wang}, \binits{J.}},
\bauthor{\bsnm{Jordan}, \binits{M.I.}}:
\bctitle{Partial transfer learning with selective adversarial networks}.
In: \bbtitle{CVPR}
(\byear{2018})
\end{bchapter}
\endbibitem

\bibitem[\protect\citeauthoryear{Cao et~al.}{2018b}]{cao2018partial1}
\begin{bchapter}
\bauthor{\bsnm{Cao}, \binits{Z.}},
\bauthor{\bsnm{Ma}, \binits{L.}},
\bauthor{\bsnm{Long}, \binits{M.}},
\bauthor{\bsnm{Wang}, \binits{J.}}:
\bctitle{Partial adversarial domain adaptation}.
In: \bbtitle{ECCV}
(\byear{2018})
\end{bchapter}
\endbibitem

\bibitem[\protect\citeauthoryear{Panareda~Busto and Gall}{2017}]{panareda2017open}
\begin{bchapter}
\bauthor{\bsnm{Panareda~Busto}, \binits{P.}},
\bauthor{\bsnm{Gall}, \binits{J.}}:
\bctitle{Open set domain adaptation}.
In: \bbtitle{ICCV}
(\byear{2017})
\end{bchapter}
\endbibitem

\bibitem[\protect\citeauthoryear{Saito et~al.}{2018}]{saito2018open}
\begin{bchapter}
\bauthor{\bsnm{Saito}, \binits{K.}},
\bauthor{\bsnm{Yamamoto}, \binits{S.}},
\bauthor{\bsnm{Ushiku}, \binits{Y.}},
\bauthor{\bsnm{Harada}, \binits{T.}}:
\bctitle{Open set domain adaptation by backpropagation}.
In: \bbtitle{ECCV}
(\byear{2018})
\end{bchapter}
\endbibitem

\bibitem[\protect\citeauthoryear{You et~al.}{2019}]{you2019universal}
\begin{bchapter}
\bauthor{\bsnm{You}, \binits{K.}},
\bauthor{\bsnm{Long}, \binits{M.}},
\bauthor{\bsnm{Cao}, \binits{Z.}},
\bauthor{\bsnm{Wang}, \binits{J.}},
\bauthor{\bsnm{Jordan}, \binits{M.I.}}:
\bctitle{Universal domain adaptation}.
In: \bbtitle{CVPR}
(\byear{2019})
\end{bchapter}
\endbibitem

\bibitem[\protect\citeauthoryear{Fu et~al.}{2020}]{fu2020learning}
\begin{bchapter}
\bauthor{\bsnm{Fu}, \binits{B.}},
\bauthor{\bsnm{Cao}, \binits{Z.}},
\bauthor{\bsnm{Long}, \binits{M.}},
\bauthor{\bsnm{Wang}, \binits{J.}}:
\bctitle{Learning to detect open classes for universal domain adaptation}.
In: \bbtitle{ECCV}
(\byear{2020})
\end{bchapter}
\endbibitem

\bibitem[\protect\citeauthoryear{Wang et~al.}{2021}]{wang2021tent}
\begin{bchapter}
\bauthor{\bsnm{Wang}, \binits{D.}},
\bauthor{\bsnm{Shelhamer}, \binits{E.}},
\bauthor{\bsnm{Liu}, \binits{S.}},
\bauthor{\bsnm{Olshausen}, \binits{B.}},
\bauthor{\bsnm{Darrell}, \binits{T.}}:
\bctitle{Tent: Fully test-time adaptation by entropy minimization}.
In: \bbtitle{ICLR}
(\byear{2021})
\end{bchapter}
\endbibitem

\bibitem[\protect\citeauthoryear{Zhang et~al.}{2018}]{zhang2018importance}
\begin{bchapter}
\bauthor{\bsnm{Zhang}, \binits{J.}},
\bauthor{\bsnm{Ding}, \binits{Z.}},
\bauthor{\bsnm{Li}, \binits{W.}},
\bauthor{\bsnm{Ogunbona}, \binits{P.}}:
\bctitle{Importance weighted adversarial nets for partial domain adaptation}.
In: \bbtitle{CVPR}
(\byear{2018})
\end{bchapter}
\endbibitem

\bibitem[\protect\citeauthoryear{Feng et~al.}{2019}]{feng2019attract}
\begin{bchapter}
\bauthor{\bsnm{Feng}, \binits{Q.}},
\bauthor{\bsnm{Kang}, \binits{G.}},
\bauthor{\bsnm{Fan}, \binits{H.}},
\bauthor{\bsnm{Yang}, \binits{Y.}}:
\bctitle{Attract or distract: Exploit the margin of open set}.
In: \bbtitle{ICCV}
(\byear{2019})
\end{bchapter}
\endbibitem

\bibitem[\protect\citeauthoryear{Pan and Yang}{2010}]{pan2009survey}
\begin{barticle}
\bauthor{\bsnm{Pan}, \binits{S.J.}},
\bauthor{\bsnm{Yang}, \binits{Q.}}:
\batitle{A survey on transfer learning}.
\bjtitle{IEEE TKDE}
\bvolume{22}(\bissue{10}),
\bfpage{1345}--\blpage{1359}
(\byear{2010})
\end{barticle}
\endbibitem

\bibitem[\protect\citeauthoryear{Cao et~al.}{2019}]{cao2019learning}
\begin{bchapter}
\bauthor{\bsnm{Cao}, \binits{Z.}},
\bauthor{\bsnm{You}, \binits{K.}},
\bauthor{\bsnm{Long}, \binits{M.}},
\bauthor{\bsnm{Wang}, \binits{J.}},
\bauthor{\bsnm{Yang}, \binits{Q.}}:
\bctitle{Learning to transfer examples for partial domain adaptation}.
In: \bbtitle{CVPR}
(\byear{2019})
\end{bchapter}
\endbibitem

\bibitem[\protect\citeauthoryear{Li et~al.}{2020}]{li2020deep}
\begin{barticle}
\bauthor{\bsnm{Li}, \binits{S.}},
\bauthor{\bsnm{Liu}, \binits{C.H.}},
\bauthor{\bsnm{Lin}, \binits{Q.}},
\bauthor{\bsnm{Wen}, \binits{Q.}},
\bauthor{\bsnm{Su}, \binits{L.}},
\bauthor{\bsnm{Huang}, \binits{G.}},
\bauthor{\bsnm{Ding}, \binits{Z.}}:
\batitle{Deep residual correction network for partial domain adaptation}.
\bjtitle{IEEE Trans. PAMI}
\bvolume{43}(\bissue{7}),
\bfpage{2329}--\blpage{2344}
(\byear{2020})
\end{barticle}
\endbibitem

\bibitem[\protect\citeauthoryear{Liang et~al.}{2020}]{liang2020balanced}
\begin{bchapter}
\bauthor{\bsnm{Liang}, \binits{J.}},
\bauthor{\bsnm{Wang}, \binits{Y.}},
\bauthor{\bsnm{Hu}, \binits{D.}},
\bauthor{\bsnm{He}, \binits{R.}},
\bauthor{\bsnm{Feng}, \binits{J.}}:
\bctitle{A balanced and uncertainty-aware approach for partial domain adaptation}.
In: \bbtitle{ECCV}
(\byear{2020})
\end{bchapter}
\endbibitem

\bibitem[\protect\citeauthoryear{Ren et~al.}{2020}]{ren2020learning}
\begin{barticle}
\bauthor{\bsnm{Ren}, \binits{C.-X.}},
\bauthor{\bsnm{Ge}, \binits{P.}},
\bauthor{\bsnm{Yang}, \binits{P.}},
\bauthor{\bsnm{Yan}, \binits{S.}}:
\batitle{Learning target-domain-specific classifier for partial domain adaptation}.
\bjtitle{IEEE TNNLS}
\bvolume{32}(\bissue{5}),
\bfpage{1989}--\blpage{2001}
(\byear{2020})
\end{barticle}
\endbibitem

\bibitem[\protect\citeauthoryear{Yan et~al.}{2020}]{yandiscriminative}
\begin{bchapter}
\bauthor{\bsnm{Yan}, \binits{J.}},
\bauthor{\bsnm{Jing}, \binits{Z.}},
\bauthor{\bsnm{Leung}, \binits{H.}}:
\bctitle{Discriminative partial domain adversarial network}.
In: \bbtitle{ECCV}
(\byear{2020})
\end{bchapter}
\endbibitem

\bibitem[\protect\citeauthoryear{Grandvalet and Bengio}{2005}]{grandvalet2005semi}
\begin{bchapter}
\bauthor{\bsnm{Grandvalet}, \binits{Y.}},
\bauthor{\bsnm{Bengio}, \binits{Y.}}:
\bctitle{Semi-supervised learning by entropy minimization}.
In: \bbtitle{NeurIPS}
(\byear{2005})
\end{bchapter}
\endbibitem

\bibitem[\protect\citeauthoryear{Yang et~al.}{2021}]{yang2021exploiting}
\begin{bchapter}
\bauthor{\bsnm{Yang}, \binits{S.}},
\bauthor{\bsnm{Wang}, \binits{Y.}},
\bauthor{\bsnm{weijer}, \binits{J.}},
\bauthor{\bsnm{Herranz}, \binits{L.}},
\bauthor{\bsnm{JUI}, \binits{S.}}:
\bctitle{Exploiting the intrinsic neighborhood structure for source-free domain adaptation}.
In: \bbtitle{NeurIPS}
(\byear{2021})
\end{bchapter}
\endbibitem

\bibitem[\protect\citeauthoryear{Gu et~al.}{2021}]{gu2021adversarial}
\begin{bchapter}
\bauthor{\bsnm{Gu}, \binits{X.}},
\bauthor{\bsnm{Yu}, \binits{X.}},
\bauthor{\bsnm{Yang}, \binits{Y.}},
\bauthor{\bsnm{Sun}, \binits{J.}},
\bauthor{\bsnm{Xu}, \binits{Z.}}:
\bctitle{Adversarial reweighting for partial domain adaptation}.
In: \bbtitle{NeurIPS}
(\byear{2021})
\end{bchapter}
\endbibitem

\bibitem[\protect\citeauthoryear{Gretton et~al.}{2006}]{gretton2006kernel}
\begin{bchapter}
\bauthor{\bsnm{Gretton}, \binits{A.}},
\bauthor{\bsnm{Borgwardt}, \binits{K.}},
\bauthor{\bsnm{Rasch}, \binits{M.}},
\bauthor{\bsnm{Sch{\"o}lkopf}, \binits{B.}},
\bauthor{\bsnm{Smola}, \binits{A.}}:
\bctitle{A kernel method for the two-sample-problem}.
In: \bbtitle{NeurIPS}
(\byear{2006})
\end{bchapter}
\endbibitem

\bibitem[\protect\citeauthoryear{Tzeng et~al.}{2014}]{tzeng2014deep}
\begin{botherref}
\oauthor{\bsnm{Tzeng}, \binits{E.}},
\oauthor{\bsnm{Hoffman}, \binits{J.}},
\oauthor{\bsnm{Zhang}, \binits{N.}},
\oauthor{\bsnm{Saenko}, \binits{K.}},
\oauthor{\bsnm{Darrell}, \binits{T.}}:
Deep domain confusion: Maximizing for domain invariance.
arXiv preprint arXiv:1412.3474
(2014)
\end{botherref}
\endbibitem

\bibitem[\protect\citeauthoryear{Tang et~al.}{In press, 2023}]{tang2023source}
\begin{botherref}
\oauthor{\bsnm{Tang}, \binits{S.}},
\oauthor{\bsnm{Chang}, \binits{A.}},
\oauthor{\bsnm{Zhang}, \binits{F.}},
\oauthor{\bsnm{Zhu}, \binits{X.}},
\oauthor{\bsnm{Ye}, \binits{M.}},
\oauthor{\bsnm{Zhang}, \binits{C.}}:
Source-free domain adaptation via target prediction distribution searching.
IJCV
(In press, 2023)
\end{botherref}
\endbibitem

\bibitem[\protect\citeauthoryear{Long et~al.}{2018}]{NIPS2018_7436}
\begin{bchapter}
\bauthor{\bsnm{Long}, \binits{M.}},
\bauthor{\bsnm{Cao}, \binits{Z.}},
\bauthor{\bsnm{Wang}, \binits{J.}},
\bauthor{\bsnm{Jordan}, \binits{M.I.}}:
\bctitle{Conditional adversarial domain adaptation}.
In: \bbtitle{NeurIPS}
(\byear{2018})
\end{bchapter}
\endbibitem

\bibitem[\protect\citeauthoryear{Ganin et~al.}{2016}]{ganin2016domain}
\begin{barticle}
\bauthor{\bsnm{Ganin}, \binits{Y.}},
\bauthor{\bsnm{Ustinova}, \binits{E.}},
\bauthor{\bsnm{Ajakan}, \binits{H.}},
\bauthor{\bsnm{Germain}, \binits{P.}},
\bauthor{\bsnm{Larochelle}, \binits{H.}},
\bauthor{\bsnm{Laviolette}, \binits{F.}},
\bauthor{\bsnm{Marchand}, \binits{M.}},
\bauthor{\bsnm{Lempitsky}, \binits{V.}}:
\batitle{Domain-adversarial training of neural networks}.
\bjtitle{JMLR}
\bvolume{17}(\bissue{1}),
\bfpage{2096}--\blpage{2030}
(\byear{2016})
\end{barticle}
\endbibitem

\bibitem[\protect\citeauthoryear{Hoffman et~al.}{2018}]{hoffman2018cycada}
\begin{bchapter}
\bauthor{\bsnm{Hoffman}, \binits{J.}},
\bauthor{\bsnm{Tzeng}, \binits{E.}},
\bauthor{\bsnm{Park}, \binits{T.}},
\bauthor{\bsnm{Zhu}, \binits{J.-Y.}},
\bauthor{\bsnm{Isola}, \binits{P.}},
\bauthor{\bsnm{Saenko}, \binits{K.}},
\bauthor{\bsnm{Efros}, \binits{A.}},
\bauthor{\bsnm{Darrell}, \binits{T.}}:
\bctitle{Cycada: Cycle-consistent adversarial domain adaptation}.
In: \bbtitle{ICML}
(\byear{2018})
\end{bchapter}
\endbibitem

\bibitem[\protect\citeauthoryear{Liu et~al.}{2021}]{liu2021adversarial}
\begin{bchapter}
\bauthor{\bsnm{Liu}, \binits{X.}},
\bauthor{\bsnm{Guo}, \binits{Z.}},
\bauthor{\bsnm{Li}, \binits{S.}},
\bauthor{\bsnm{Xing}, \binits{F.}},
\bauthor{\bsnm{You}, \binits{J.}},
\bauthor{\bsnm{Kuo}, \binits{C.-C.J.}},
\bauthor{\bsnm{El~Fakhri}, \binits{G.}},
\bauthor{\bsnm{Woo}, \binits{J.}}:
\bctitle{Adversarial unsupervised domain adaptation with conditional and label shift: Infer, align and iterate}.
In: \bbtitle{ICCV}
(\byear{2021})
\end{bchapter}
\endbibitem

\bibitem[\protect\citeauthoryear{Du et~al.}{2021}]{du2021cross}
\begin{bchapter}
\bauthor{\bsnm{Du}, \binits{Z.}},
\bauthor{\bsnm{Li}, \binits{J.}},
\bauthor{\bsnm{Su}, \binits{H.}},
\bauthor{\bsnm{Zhu}, \binits{L.}},
\bauthor{\bsnm{Lu}, \binits{K.}}:
\bctitle{Cross-domain gradient discrepancy minimization for unsupervised domain adaptation}.
In: \bbtitle{CVPR}
(\byear{2021})
\end{bchapter}
\endbibitem

\bibitem[\protect\citeauthoryear{Li et~al.}{2021}]{li2021unsupervised}
\begin{barticle}
\bauthor{\bsnm{Li}, \binits{H.}},
\bauthor{\bsnm{Wan}, \binits{R.}},
\bauthor{\bsnm{Wang}, \binits{S.}},
\bauthor{\bsnm{Kot}, \binits{A.C.}}:
\batitle{Unsupervised domain adaptation in the wild via disentangling representation learning}.
\bjtitle{IJCV}
\bvolume{129},
\bfpage{267}--\blpage{283}
(\byear{2021})
\end{barticle}
\endbibitem

\bibitem[\protect\citeauthoryear{Gu et~al.}{2020}]{gu2020spherical}
\begin{bchapter}
\bauthor{\bsnm{Gu}, \binits{X.}},
\bauthor{\bsnm{Sun}, \binits{J.}},
\bauthor{\bsnm{Xu}, \binits{Z.}}:
\bctitle{Spherical space domain adaptation with robust pseudo-label loss}.
In: \bbtitle{CVPR}
(\byear{2020})
\end{bchapter}
\endbibitem

\bibitem[\protect\citeauthoryear{Wang et~al.}{2019}]{wang2019transferable}
\begin{bchapter}
\bauthor{\bsnm{Wang}, \binits{X.}},
\bauthor{\bsnm{Li}, \binits{L.}},
\bauthor{\bsnm{Ye}, \binits{W.}},
\bauthor{\bsnm{Long}, \binits{M.}},
\bauthor{\bsnm{Wang}, \binits{J.}}:
\bctitle{Transferable attention for domain adaptation}.
In: \bbtitle{AAAI}
(\byear{2019})
\end{bchapter}
\endbibitem

\bibitem[\protect\citeauthoryear{Yang et~al.}{2021}]{yang2021generalized}
\begin{bchapter}
\bauthor{\bsnm{Yang}, \binits{S.}},
\bauthor{\bsnm{Wang}, \binits{Y.}},
\bauthor{\bsnm{Van De~Weijer}, \binits{J.}},
\bauthor{\bsnm{Herranz}, \binits{L.}},
\bauthor{\bsnm{Jui}, \binits{S.}}:
\bctitle{Generalized source-free domain adaptation}.
In: \bbtitle{ICCV}
(\byear{2021})
\end{bchapter}
\endbibitem

\bibitem[\protect\citeauthoryear{Li and Chen}{2022}]{li2022unsupervised}
\begin{barticle}
\bauthor{\bsnm{Li}, \binits{W.}},
\bauthor{\bsnm{Chen}, \binits{S.}}:
\batitle{Unsupervised domain adaptation with progressive adaptation of subspaces}.
\bjtitle{PR}
\bvolume{132},
\bfpage{108918}
(\byear{2022})
\end{barticle}
\endbibitem

\bibitem[\protect\citeauthoryear{Li and Chen}{2023}]{9983498}
\begin{barticle}
\bauthor{\bsnm{Li}, \binits{W.}},
\bauthor{\bsnm{Chen}, \binits{S.}}:
\batitle{Partial domain adaptation without domain alignment}.
\bjtitle{IEEE Trans. PAMI}
\bvolume{45}(\bissue{7}),
\bfpage{8787}--\blpage{8797}
(\byear{2023})
\end{barticle}
\endbibitem

\bibitem[\protect\citeauthoryear{Balgi and Dukkipati}{2022}]{9397282}
\begin{barticle}
\bauthor{\bsnm{Balgi}, \binits{S.}},
\bauthor{\bsnm{Dukkipati}, \binits{A.}}:
\batitle{Contradistinguisher: A vapnik’s imperative to unsupervised domain adaptation}.
\bjtitle{IEEE Trans. PAMI}
\bvolume{44}(\bissue{9}),
\bfpage{4730}--\blpage{4747}
(\byear{2022})
\end{barticle}
\endbibitem

\bibitem[\protect\citeauthoryear{Sahoo et~al.}{2023}]{sahoo2021select}
\begin{bchapter}
\bauthor{\bsnm{Sahoo}, \binits{A.}},
\bauthor{\bsnm{Panda}, \binits{R.}},
\bauthor{\bsnm{Feris}, \binits{R.}},
\bauthor{\bsnm{Saenko}, \binits{K.}},
\bauthor{\bsnm{Das}, \binits{A.}}:
\bctitle{Select, label, and mix: Learning discriminative invariant feature representations for partial domain adaptation}.
In: \bbtitle{WACV}
(\byear{2023})
\end{bchapter}
\endbibitem

\bibitem[\protect\citeauthoryear{Xiao et~al.}{2021}]{xiao2021implicit}
\begin{bchapter}
\bauthor{\bsnm{Xiao}, \binits{W.}},
\bauthor{\bsnm{Ding}, \binits{Z.}},
\bauthor{\bsnm{Liu}, \binits{H.}}:
\bctitle{Implicit semantic response alignment for partial domain adaptation}.
In: \bbtitle{NeurIPS}
(\byear{2021})
\end{bchapter}
\endbibitem

\bibitem[\protect\citeauthoryear{Luo and Ren}{2023}]{Luo_2023_CVPR}
\begin{bchapter}
\bauthor{\bsnm{Luo}, \binits{Y.-W.}},
\bauthor{\bsnm{Ren}, \binits{C.-X.}}:
\bctitle{Mot: Masked optimal transport for partial domain adaptation}.
In: \bbtitle{CPVR}
(\byear{2023})
\end{bchapter}
\endbibitem

\bibitem[\protect\citeauthoryear{Liu et~al.}{2019}]{liu2019separate}
\begin{bchapter}
\bauthor{\bsnm{Liu}, \binits{H.}},
\bauthor{\bsnm{Cao}, \binits{Z.}},
\bauthor{\bsnm{Long}, \binits{M.}},
\bauthor{\bsnm{Wang}, \binits{J.}},
\bauthor{\bsnm{Yang}, \binits{Q.}}:
\bctitle{Separate to adapt: Open set domain adaptation via progressive separation}.
In: \bbtitle{CVPR}
(\byear{2019})
\end{bchapter}
\endbibitem

\bibitem[\protect\citeauthoryear{Baktashmotlagh et~al.}{2019}]{baktashmotlagh2019learning}
\begin{bchapter}
\bauthor{\bsnm{Baktashmotlagh}, \binits{M.}},
\bauthor{\bsnm{Faraki}, \binits{M.}},
\bauthor{\bsnm{Drummond}, \binits{T.}},
\bauthor{\bsnm{Salzmann}, \binits{M.}}:
\bctitle{Learning factorized representations for open-set domain adaptation}.
In: \bbtitle{ICLR}
(\byear{2019})
\end{bchapter}
\endbibitem

\bibitem[\protect\citeauthoryear{Kundu et~al.}{2020}]{kundu2020towards}
\begin{bchapter}
\bauthor{\bsnm{Kundu}, \binits{J.N.}},
\bauthor{\bsnm{Venkat}, \binits{N.}},
\bauthor{\bsnm{Revanur}, \binits{A.}},
\bauthor{\bsnm{Babu}, \binits{R.V.}}, \betal:
\bctitle{Towards inheritable models for open-set domain adaptation}.
In: \bbtitle{CVPR}
(\byear{2020})
\end{bchapter}
\endbibitem

\bibitem[\protect\citeauthoryear{Bucci et~al.}{2020}]{bucci2020effectiveness}
\begin{bchapter}
\bauthor{\bsnm{Bucci}, \binits{S.}},
\bauthor{\bsnm{Loghmani}, \binits{M.R.}},
\bauthor{\bsnm{Tommasi}, \binits{T.}}:
\bctitle{On the effectiveness of image rotation for open set domain adaptation}.
In: \bbtitle{ECCV}
(\byear{2020})
\end{bchapter}
\endbibitem

\bibitem[\protect\citeauthoryear{Rakshit et~al.}{2020}]{rakshit2020multi}
\begin{bchapter}
\bauthor{\bsnm{Rakshit}, \binits{S.}},
\bauthor{\bsnm{Tamboli}, \binits{D.}},
\bauthor{\bsnm{Meshram}, \binits{P.S.}},
\bauthor{\bsnm{Banerjee}, \binits{B.}},
\bauthor{\bsnm{Roig}, \binits{G.}},
\bauthor{\bsnm{Chaudhuri}, \binits{S.}}:
\bctitle{Multi-source open-set deep adversarial domain adaptation}.
In: \bbtitle{ECCV}
(\byear{2020})
\end{bchapter}
\endbibitem

\bibitem[\protect\citeauthoryear{Luo et~al.}{2020}]{luo2020progressive}
\begin{bchapter}
\bauthor{\bsnm{Luo}, \binits{Y.}},
\bauthor{\bsnm{Wang}, \binits{Z.}},
\bauthor{\bsnm{Huang}, \binits{Z.}},
\bauthor{\bsnm{Baktashmotlagh}, \binits{M.}}:
\bctitle{Progressive graph learning for open-set domain adaptation}.
In: \bbtitle{ICML}
(\byear{2020})
\end{bchapter}
\endbibitem

\bibitem[\protect\citeauthoryear{Pan et~al.}{2020}]{pan2020exploring}
\begin{bchapter}
\bauthor{\bsnm{Pan}, \binits{Y.}},
\bauthor{\bsnm{Yao}, \binits{T.}},
\bauthor{\bsnm{Li}, \binits{Y.}},
\bauthor{\bsnm{Ngo}, \binits{C.-W.}},
\bauthor{\bsnm{Mei}, \binits{T.}}:
\bctitle{Exploring category-agnostic clusters for open-set domain adaptation}.
In: \bbtitle{CVPR}
(\byear{2020})
\end{bchapter}
\endbibitem

\bibitem[\protect\citeauthoryear{Jing et~al.}{2021a}]{jing2021balanced}
\begin{bchapter}
\bauthor{\bsnm{Jing}, \binits{M.}},
\bauthor{\bsnm{Li}, \binits{J.}},
\bauthor{\bsnm{Zhu}, \binits{L.}},
\bauthor{\bsnm{Ding}, \binits{Z.}},
\bauthor{\bsnm{Lu}, \binits{K.}},
\bauthor{\bsnm{Yang}, \binits{Y.}}:
\bctitle{Balanced open set domain adaptation via centroid alignment}.
In: \bbtitle{AAAI}
(\byear{2021})
\end{bchapter}
\endbibitem

\bibitem[\protect\citeauthoryear{Jing et~al.}{2021b}]{jing2021towards}
\begin{bchapter}
\bauthor{\bsnm{Jing}, \binits{T.}},
\bauthor{\bsnm{Liu}, \binits{H.}},
\bauthor{\bsnm{Ding}, \binits{Z.}}:
\bctitle{Towards novel target discovery through open-set domain adaptation}.
In: \bbtitle{ICCV}
(\byear{2021})
\end{bchapter}
\endbibitem

\bibitem[\protect\citeauthoryear{Li et~al.}{2023}]{li2023adjustment}
\begin{bchapter}
\bauthor{\bsnm{Li}, \binits{W.}},
\bauthor{\bsnm{Liu}, \binits{J.}},
\bauthor{\bsnm{Han}, \binits{B.}},
\bauthor{\bsnm{Yuan}, \binits{Y.}}:
\bctitle{Adjustment and alignment for unbiased open set domain adaptation}.
In: \bbtitle{CVPR}
(\byear{2023})
\end{bchapter}
\endbibitem

\bibitem[\protect\citeauthoryear{Saito et~al.}{2020}]{saito2020universal}
\begin{bchapter}
\bauthor{\bsnm{Saito}, \binits{K.}},
\bauthor{\bsnm{Kim}, \binits{D.}},
\bauthor{\bsnm{Sclaroff}, \binits{S.}},
\bauthor{\bsnm{Saenko}, \binits{K.}}:
\bctitle{Universal domain adaptation through self supervision}.
In: \bbtitle{NeurIPS}
(\byear{2020})
\end{bchapter}
\endbibitem

\bibitem[\protect\citeauthoryear{Saito and Saenko}{2021}]{saito2021ovanet}
\begin{bchapter}
\bauthor{\bsnm{Saito}, \binits{K.}},
\bauthor{\bsnm{Saenko}, \binits{K.}}:
\bctitle{Ovanet: One-vs-all network for universal domain adaptation}.
In: \bbtitle{ICCV}
(\byear{2021})
\end{bchapter}
\endbibitem

\bibitem[\protect\citeauthoryear{Li et~al.}{2021}]{li2021domain}
\begin{bchapter}
\bauthor{\bsnm{Li}, \binits{G.}},
\bauthor{\bsnm{Kang}, \binits{G.}},
\bauthor{\bsnm{Zhu}, \binits{Y.}},
\bauthor{\bsnm{Wei}, \binits{Y.}},
\bauthor{\bsnm{Yang}, \binits{Y.}}:
\bctitle{Domain consensus clustering for universal domain adaptation}.
In: \bbtitle{CVPR}
(\byear{2021})
\end{bchapter}
\endbibitem

\bibitem[\protect\citeauthoryear{Chen et~al.}{2022}]{chen2022geometric}
\begin{bchapter}
\bauthor{\bsnm{Chen}, \binits{L.}},
\bauthor{\bsnm{Lou}, \binits{Y.}},
\bauthor{\bsnm{He}, \binits{J.}},
\bauthor{\bsnm{Bai}, \binits{T.}},
\bauthor{\bsnm{Deng}, \binits{M.}}:
\bctitle{Geometric anchor correspondence mining with uncertainty modeling for universal domain adaptation}.
In: \bbtitle{CVPR}
(\byear{2022})
\end{bchapter}
\endbibitem

\bibitem[\protect\citeauthoryear{Kundu et~al.}{2022}]{kundu2022subsidiary}
\begin{bchapter}
\bauthor{\bsnm{Kundu}, \binits{J.N.}},
\bauthor{\bsnm{Bhambri}, \binits{S.}},
\bauthor{\bsnm{Kulkarni}, \binits{A.R.}},
\bauthor{\bsnm{Sarkar}, \binits{H.}},
\bauthor{\bsnm{Jampani}, \binits{V.}},
\bauthor{\bsnm{Radhakrishnan}, \binits{V.B.}}:
\bctitle{Subsidiary prototype alignment for universal domain adaptation}.
In: \bbtitle{NeurIPS}
(\byear{2022})
\end{bchapter}
\endbibitem

\bibitem[\protect\citeauthoryear{Qu et~al.}{2023}]{sanqing2023GLC}
\begin{bchapter}
\bauthor{\bsnm{Qu}, \binits{S.}},
\bauthor{\bsnm{Zou}, \binits{T.}},
\bauthor{\bsnm{Röhrbein}, \binits{F.}},
\bauthor{\bsnm{Lu}, \binits{C.}},
\bauthor{\bsnm{Chen}, \binits{G.}},
\bauthor{\bsnm{Tao}, \binits{D.}},
\bauthor{\bsnm{Jiang}, \binits{C.}}:
\bctitle{Upcycling models under domain and category shift}.
In: \bbtitle{CVPR}
(\byear{2023})
\end{bchapter}
\endbibitem

\bibitem[\protect\citeauthoryear{Yang et~al.}{2023}]{Yang_Gu_Sun_2023}
\begin{bchapter}
\bauthor{\bsnm{Yang}, \binits{Y.}},
\bauthor{\bsnm{Gu}, \binits{X.}},
\bauthor{\bsnm{Sun}, \binits{J.}}:
\bctitle{Prototypical partial optimal transport for universal domain adaptation}.
In: \bbtitle{AAAI}
(\byear{2023})
\end{bchapter}
\endbibitem

\bibitem[\protect\citeauthoryear{Wang et~al.}{In press, 2023}]{wang2023exploiting}
\begin{botherref}
\oauthor{\bsnm{Wang}, \binits{Y.}},
\oauthor{\bsnm{Zhang}, \binits{L.}},
\oauthor{\bsnm{Song}, \binits{R.}},
\oauthor{\bsnm{Li}, \binits{H.}},
\oauthor{\bsnm{Rosin}, \binits{P.L.}},
\oauthor{\bsnm{Zhang}, \binits{W.}}:
Exploiting inter-sample affinity for knowability-aware universal domain adaptation.
IJCV
(In press, 2023)
\end{botherref}
\endbibitem

\bibitem[\protect\citeauthoryear{Gulrajani et~al.}{2017}]{NIPS2017_892c3b1c}
\begin{bchapter}
\bauthor{\bsnm{Gulrajani}, \binits{I.}},
\bauthor{\bsnm{Ahmed}, \binits{F.}},
\bauthor{\bsnm{Arjovsky}, \binits{M.}},
\bauthor{\bsnm{Dumoulin}, \binits{V.}},
\bauthor{\bsnm{Courville}, \binits{A.C.}}:
\bctitle{Improved training of wasserstein gans}.
In: \bbtitle{NeurIPS}
(\byear{2017})
\end{bchapter}
\endbibitem

\bibitem[\protect\citeauthoryear{Miyato et~al.}{2018}]{miyato2018spectral}
\begin{bchapter}
\bauthor{\bsnm{Miyato}, \binits{T.}},
\bauthor{\bsnm{Kataoka}, \binits{T.}},
\bauthor{\bsnm{Koyama}, \binits{M.}},
\bauthor{\bsnm{Yoshida}, \binits{Y.}}:
\bctitle{Spectral normalization for generative adversarial networks}.
In: \bbtitle{ICLR}
(\byear{2018})
\end{bchapter}
\endbibitem

\bibitem[\protect\citeauthoryear{Arjovsky et~al.}{2017}]{arjovsky2017wasserstein}
\begin{bchapter}
\bauthor{\bsnm{Arjovsky}, \binits{M.}},
\bauthor{\bsnm{Chintala}, \binits{S.}},
\bauthor{\bsnm{Bottou}, \binits{L.}}:
\bctitle{Wasserstein generative adversarial networks}.
In: \bbtitle{ICML}
(\byear{2017})
\end{bchapter}
\endbibitem

\bibitem[\protect\citeauthoryear{Liang et~al.}{2020}]{liang2020we}
\begin{bchapter}
\bauthor{\bsnm{Liang}, \binits{J.}},
\bauthor{\bsnm{Hu}, \binits{D.}},
\bauthor{\bsnm{Feng}, \binits{J.}}:
\bctitle{Do we really need to access the source data? source hypothesis transfer for unsupervised domain adaptation}.
In: \bbtitle{ICML}
(\byear{2020})
\end{bchapter}
\endbibitem

\bibitem[\protect\citeauthoryear{Cui et~al.}{2020}]{Cui_2020_CVPR}
\begin{bchapter}
\bauthor{\bsnm{Cui}, \binits{S.}},
\bauthor{\bsnm{Wang}, \binits{S.}},
\bauthor{\bsnm{Zhuo}, \binits{J.}},
\bauthor{\bsnm{Li}, \binits{L.}},
\bauthor{\bsnm{Huang}, \binits{Q.}},
\bauthor{\bsnm{Tian}, \binits{Q.}}:
\bctitle{Towards discriminability and diversity: Batch nuclear-norm maximization under label insufficient situations}.
In: \bbtitle{CVPR}
(\byear{2020})
\end{bchapter}
\endbibitem

\bibitem[\protect\citeauthoryear{Chen et~al.}{2019}]{chen2019domain}
\begin{bchapter}
\bauthor{\bsnm{Chen}, \binits{M.}},
\bauthor{\bsnm{Xue}, \binits{H.}},
\bauthor{\bsnm{Cai}, \binits{D.}}:
\bctitle{Domain adaptation for semantic segmentation with maximum squares loss}.
In: \bbtitle{ICCV}
(\byear{2019})
\end{bchapter}
\endbibitem

\bibitem[\protect\citeauthoryear{Zhong et~al.}{2017}]{zhong2017re}
\begin{bchapter}
\bauthor{\bsnm{Zhong}, \binits{Z.}},
\bauthor{\bsnm{Zheng}, \binits{L.}},
\bauthor{\bsnm{Cao}, \binits{D.}},
\bauthor{\bsnm{Li}, \binits{S.}}:
\bctitle{Re-ranking person re-identification with k-reciprocal encoding}.
In: \bbtitle{CVPR}
(\byear{2017})
\end{bchapter}
\endbibitem

\bibitem[\protect\citeauthoryear{Diamond and Boyd}{2016}]{diamond2016cvxpy}
\begin{barticle}
\bauthor{\bsnm{Diamond}, \binits{S.}},
\bauthor{\bsnm{Boyd}, \binits{S.}}:
\batitle{Cvxpy: A python-embedded modeling language for convex optimization}.
\bjtitle{JMLR}
\bvolume{17}(\bissue{1}),
\bfpage{2909}--\blpage{2913}
(\byear{2016})
\end{barticle}
\endbibitem

\bibitem[\protect\citeauthoryear{Russakovsky et~al.}{2015}]{russakovsky2015imagenet}
\begin{barticle}
\bauthor{\bsnm{Russakovsky}, \binits{O.}},
\bauthor{\bsnm{Deng}, \binits{J.}},
\bauthor{\bsnm{Su}, \binits{H.}},
\bauthor{\bsnm{Krause}, \binits{J.}},
\bauthor{\bsnm{Satheesh}, \binits{S.}},
\bauthor{\bsnm{Ma}, \binits{S.}},
\bauthor{\bsnm{Huang}, \binits{Z.}},
\bauthor{\bsnm{Karpathy}, \binits{A.}},
\bauthor{\bsnm{Khosla}, \binits{A.}},
\bauthor{\bsnm{Bernstein}, \binits{M.}}, \betal:
\batitle{Imagenet large scale visual recognition challenge}.
\bjtitle{IJCV}
\bvolume{115}(\bissue{3}),
\bfpage{211}--\blpage{252}
(\byear{2015})
\end{barticle}
\endbibitem

\bibitem[\protect\citeauthoryear{Glorot and Bengio}{2010}]{pmlr-v9-glorot10a}
\begin{bchapter}
\bauthor{\bsnm{Glorot}, \binits{X.}},
\bauthor{\bsnm{Bengio}, \binits{Y.}}:
\bctitle{Understanding the difficulty of training deep feedforward neural networks}.
In: \bbtitle{AISTATS}
(\byear{2010})
\end{bchapter}
\endbibitem

\bibitem[\protect\citeauthoryear{He et~al.}{2015}]{7410480}
\begin{bchapter}
\bauthor{\bsnm{He}, \binits{K.}},
\bauthor{\bsnm{Zhang}, \binits{X.}},
\bauthor{\bsnm{Ren}, \binits{S.}},
\bauthor{\bsnm{Sun}, \binits{J.}}:
\bctitle{Delving deep into rectifiers: Surpassing human-level performance on imagenet classification}.
In: \bbtitle{ICCV}
(\byear{2015})
\end{bchapter}
\endbibitem

\bibitem[\protect\citeauthoryear{Wei et~al.}{2021}]{wei2021theoretical}
\begin{bchapter}
\bauthor{\bsnm{Wei}, \binits{C.}},
\bauthor{\bsnm{Shen}, \binits{K.}},
\bauthor{\bsnm{Chen}, \binits{Y.}},
\bauthor{\bsnm{Ma}, \binits{T.}}:
\bctitle{Theoretical analysis of self-training with deep networks on unlabeled data}.
In: \bbtitle{ICLR}
(\byear{2021})
\end{bchapter}
\endbibitem

\bibitem[\protect\citeauthoryear{Cai et~al.}{2021}]{pmlr-v139-cai21b}
\begin{bchapter}
\bauthor{\bsnm{Cai}, \binits{T.}},
\bauthor{\bsnm{Gao}, \binits{R.}},
\bauthor{\bsnm{Lee}, \binits{J.}},
\bauthor{\bsnm{Lei}, \binits{Q.}}:
\bctitle{A theory of label propagation for subpopulation shift}.
In: \bbtitle{ICML}
(\byear{2021})
\end{bchapter}
\endbibitem

\bibitem[\protect\citeauthoryear{Brock et~al.}{2019}]{brock2018large}
\begin{bchapter}
\bauthor{\bsnm{Brock}, \binits{A.}},
\bauthor{\bsnm{Donahue}, \binits{J.}},
\bauthor{\bsnm{Simonyan}, \binits{K.}}:
\bctitle{Large scale gan training for high fidelity natural image synthesis}.
In: \bbtitle{ICLR}
(\byear{2019})
\end{bchapter}
\endbibitem

\bibitem[\protect\citeauthoryear{Saenko et~al.}{2010}]{saenko2010adapting}
\begin{bchapter}
\bauthor{\bsnm{Saenko}, \binits{K.}},
\bauthor{\bsnm{Kulis}, \binits{B.}},
\bauthor{\bsnm{Fritz}, \binits{M.}},
\bauthor{\bsnm{Darrell}, \binits{T.}}:
\bctitle{Adapting visual category models to new domains}.
In: \bbtitle{ECCV}
(\byear{2010})
\end{bchapter}
\endbibitem

\bibitem[\protect\citeauthoryear{Griffin et~al.}{2007}]{griffin2007caltech}
\begin{botherref}
\oauthor{\bsnm{Griffin}, \binits{G.}},
\oauthor{\bsnm{Holub}, \binits{A.}},
\oauthor{\bsnm{Perona}, \binits{P.}}:
Caltech-256 object category dataset.
Technical Report 7694,
California Institute of Technology,
Pasadena
(2007)
\end{botherref}
\endbibitem

\bibitem[\protect\citeauthoryear{Venkateswara et~al.}{2017}]{venkateswara2017deep}
\begin{bchapter}
\bauthor{\bsnm{Venkateswara}, \binits{H.}},
\bauthor{\bsnm{Eusebio}, \binits{J.}},
\bauthor{\bsnm{Chakraborty}, \binits{S.}},
\bauthor{\bsnm{Panchanathan}, \binits{S.}}:
\bctitle{Deep hashing network for unsupervised domain adaptation}.
In: \bbtitle{CVPR}
(\byear{2017})
\end{bchapter}
\endbibitem

\bibitem[\protect\citeauthoryear{Peng et~al.}{2017}]{peng2017visda}
\begin{botherref}
\oauthor{\bsnm{Peng}, \binits{X.}},
\oauthor{\bsnm{Usman}, \binits{B.}},
\oauthor{\bsnm{Kaushik}, \binits{N.}},
\oauthor{\bsnm{Hoffman}, \binits{J.}},
\oauthor{\bsnm{Wang}, \binits{D.}},
\oauthor{\bsnm{Saenko}, \binits{K.}}:
Visda: The visual domain adaptation challenge.
arXiv preprint arXiv:1710.06924
(2017)
\end{botherref}
\endbibitem

\bibitem[\protect\citeauthoryear{Peng et~al.}{2019}]{peng2019moment}
\begin{bchapter}
\bauthor{\bsnm{Peng}, \binits{X.}},
\bauthor{\bsnm{Bai}, \binits{Q.}},
\bauthor{\bsnm{Xia}, \binits{X.}},
\bauthor{\bsnm{Huang}, \binits{Z.}},
\bauthor{\bsnm{Saenko}, \binits{K.}},
\bauthor{\bsnm{Wang}, \binits{B.}}:
\bctitle{Moment matching for multi-source domain adaptation}.
In: \bbtitle{ICCV}
(\byear{2019})
\end{bchapter}
\endbibitem

\bibitem[\protect\citeauthoryear{Guo et~al.}{2022}]{guo2022selective}
\begin{bchapter}
\bauthor{\bsnm{Guo}, \binits{P.}},
\bauthor{\bsnm{Zhu}, \binits{J.}},
\bauthor{\bsnm{Zhang}, \binits{Y.}}:
\bctitle{Selective partial domain adaptation.}
In: \bbtitle{BMVC}
(\byear{2022})
\end{bchapter}
\endbibitem

\bibitem[\protect\citeauthoryear{Lin et~al.}{2022}]{lin2022adversarial}
\begin{bchapter}
\bauthor{\bsnm{Lin}, \binits{K.-Y.}},
\bauthor{\bsnm{Zhou}, \binits{J.}},
\bauthor{\bsnm{Qiu}, \binits{Y.}},
\bauthor{\bsnm{Zheng}, \binits{W.-S.}}:
\bctitle{Adversarial partial domain adaptation by cycle inconsistency}.
In: \bbtitle{ECCV}
(\byear{2022})
\end{bchapter}
\endbibitem

\bibitem[\protect\citeauthoryear{Yang et~al.}{2023}]{9705553}
\begin{barticle}
\bauthor{\bsnm{Yang}, \binits{C.}},
\bauthor{\bsnm{Cheung}, \binits{Y.-M.}},
\bauthor{\bsnm{Ding}, \binits{J.}},
\bauthor{\bsnm{Tan}, \binits{K.C.}},
\bauthor{\bsnm{Xue}, \binits{B.}},
\bauthor{\bsnm{Zhang}, \binits{M.}}:
\batitle{Contrastive learning assisted-alignment for partial domain adaptation}.
\bjtitle{IEEE Trans. NNLS}
\bvolume{34}(\bissue{10}),
\bfpage{7621}--\blpage{7634}
(\byear{2023})
\end{barticle}
\endbibitem

\bibitem[\protect\citeauthoryear{Wu et~al.}{2023}]{9957101}
\begin{barticle}
\bauthor{\bsnm{Wu}, \binits{K.}},
\bauthor{\bsnm{Wu}, \binits{M.}},
\bauthor{\bsnm{Chen}, \binits{Z.}},
\bauthor{\bsnm{Jin}, \binits{R.}},
\bauthor{\bsnm{Cui}, \binits{W.}},
\bauthor{\bsnm{Cao}, \binits{Z.}},
\bauthor{\bsnm{Li}, \binits{X.}}:
\batitle{Reinforced adaptation network for partial domain adaptation}.
\bjtitle{IEEE Trans. CSVT}
\bvolume{33}(\bissue{5}),
\bfpage{2370}--\blpage{2380}
(\byear{2023})
\end{barticle}
\endbibitem

\bibitem[\protect\citeauthoryear{He et~al.}{In press, 2023}]{he2023addressing}
\begin{botherref}
\oauthor{\bsnm{He}, \binits{C.}},
\oauthor{\bsnm{Li}, \binits{X.}},
\oauthor{\bsnm{Xia}, \binits{Y.}},
\oauthor{\bsnm{Tang}, \binits{J.}},
\oauthor{\bsnm{Yang}, \binits{J.}},
\oauthor{\bsnm{Ye}, \binits{Z.}}:
Addressing the overfitting in partial domain adaptation with self-training and contrastive learning.
IEEE Trans. CSVT
(In press, 2023)
\end{botherref}
\endbibitem

\bibitem[\protect\citeauthoryear{Cao et~al.}{2023}]{9736609}
\begin{barticle}
\bauthor{\bsnm{Cao}, \binits{Z.}},
\bauthor{\bsnm{You}, \binits{K.}},
\bauthor{\bsnm{Zhang}, \binits{Z.}},
\bauthor{\bsnm{Wang}, \binits{J.}},
\bauthor{\bsnm{Long}, \binits{M.}}:
\batitle{From big to small: Adaptive learning to partial-set domains}.
\bjtitle{IEEE Trans. PAMI}
\bvolume{45}(\bissue{2}),
\bfpage{1766}--\blpage{1780}
(\byear{2023})
\end{barticle}
\endbibitem

\bibitem[\protect\citeauthoryear{Saito et~al.}{2019}]{saito2019semi}
\begin{bchapter}
\bauthor{\bsnm{Saito}, \binits{K.}},
\bauthor{\bsnm{Kim}, \binits{D.}},
\bauthor{\bsnm{Sclaroff}, \binits{S.}},
\bauthor{\bsnm{Darrell}, \binits{T.}},
\bauthor{\bsnm{Saenko}, \binits{K.}}:
\bctitle{Semi-supervised domain adaptation via minimax entropy}.
In: \bbtitle{ICCV}
(\byear{2019})
\end{bchapter}
\endbibitem

\bibitem[\protect\citeauthoryear{Paszke et~al.}{2019}]{NEURIPS2019_bdbca288}
\begin{bchapter}
\bauthor{\bsnm{Paszke}, \binits{A.}},
\bauthor{\bsnm{Gross}, \binits{S.}},
\bauthor{\bsnm{Massa}, \binits{F.}},
\bauthor{\bsnm{Lerer}, \binits{A.}},
\bauthor{\bsnm{Bradbury}, \binits{J.}},
\bauthor{\bsnm{Chanan}, \binits{G.}},
\bauthor{\bsnm{Killeen}, \binits{T.}},
\bauthor{\bsnm{Lin}, \binits{Z.}},
\bauthor{\bsnm{Gimelshein}, \binits{N.}},
\bauthor{\bsnm{Antiga}, \binits{L.}},
\bauthor{\bsnm{Desmaison}, \binits{A.}},
\bauthor{\bsnm{Kopf}, \binits{A.}},
\bauthor{\bsnm{Yang}, \binits{E.}},
\bauthor{\bsnm{DeVito}, \binits{Z.}},
\bauthor{\bsnm{Raison}, \binits{M.}},
\bauthor{\bsnm{Tejani}, \binits{A.}},
\bauthor{\bsnm{Chilamkurthy}, \binits{S.}},
\bauthor{\bsnm{Steiner}, \binits{B.}},
\bauthor{\bsnm{Fang}, \binits{L.}},
\bauthor{\bsnm{Bai}, \binits{J.}},
\bauthor{\bsnm{Chintala}, \binits{S.}}:
\bctitle{Pytorch: An imperative style, high-performance deep learning library}.
In: \bbtitle{NeurIPS}
(\byear{2019})
\end{bchapter}
\endbibitem

\bibitem[\protect\citeauthoryear{Liang et~al.}{2021}]{liang2021umad}
\begin{botherref}
\oauthor{\bsnm{Liang}, \binits{J.}},
\oauthor{\bsnm{Hu}, \binits{D.}},
\oauthor{\bsnm{Feng}, \binits{J.}},
\oauthor{\bsnm{He}, \binits{R.}}:
Umad: Universal model adaptation under domain and category shift.
arXiv preprint arXiv:2112.08553
(2021)
\end{botherref}
\endbibitem

\bibitem[\protect\citeauthoryear{Qu et~al.}{2023}]{qu2023upcycling}
\begin{bchapter}
\bauthor{\bsnm{Qu}, \binits{S.}},
\bauthor{\bsnm{Zou}, \binits{T.}},
\bauthor{\bsnm{R{\"o}hrbein}, \binits{F.}},
\bauthor{\bsnm{Lu}, \binits{C.}},
\bauthor{\bsnm{Chen}, \binits{G.}},
\bauthor{\bsnm{Tao}, \binits{D.}},
\bauthor{\bsnm{Jiang}, \binits{C.}}:
\bctitle{Upcycling models under domain and category shift}.
In: \bbtitle{CVPR}
(\byear{2023})
\end{bchapter}
\endbibitem

\bibitem[\protect\citeauthoryear{Kundu et~al.}{2020}]{kundu2020universal}
\begin{bchapter}
\bauthor{\bsnm{Kundu}, \binits{J.N.}},
\bauthor{\bsnm{Venkat}, \binits{N.}},
\bauthor{\bsnm{Babu}, \binits{R.V.}}, \betal:
\bctitle{Universal source-free domain adaptation}.
In: \bbtitle{CVPR}
(\byear{2020})
\end{bchapter}
\endbibitem

\bibitem[\protect\citeauthoryear{Chang et~al.}{2022}]{chang2022unified}
\begin{bchapter}
\bauthor{\bsnm{Chang}, \binits{W.}},
\bauthor{\bsnm{Shi}, \binits{Y.}},
\bauthor{\bsnm{Tuan}, \binits{H.}},
\bauthor{\bsnm{Wang}, \binits{J.}}:
\bctitle{Unified optimal transport framework for universal domain adaptation}.
In: \bbtitle{NeurIPS}
(\byear{2022})
\end{bchapter}
\endbibitem

\bibitem[\protect\citeauthoryear{Sun et~al.}{2020}]{sun2020test}
\begin{bchapter}
\bauthor{\bsnm{Sun}, \binits{Y.}},
\bauthor{\bsnm{Wang}, \binits{X.}},
\bauthor{\bsnm{Liu}, \binits{Z.}},
\bauthor{\bsnm{Miller}, \binits{J.}},
\bauthor{\bsnm{Efros}, \binits{A.}},
\bauthor{\bsnm{Hardt}, \binits{M.}}:
\bctitle{Test-time training with self-supervision for generalization under distribution shifts}.
In: \bbtitle{ICML}
(\byear{2020})
\end{bchapter}
\endbibitem

\bibitem[\protect\citeauthoryear{Schneider et~al.}{2020}]{schneider2020improving}
\begin{bchapter}
\bauthor{\bsnm{Schneider}, \binits{S.}},
\bauthor{\bsnm{Rusak}, \binits{E.}},
\bauthor{\bsnm{Eck}, \binits{L.}},
\bauthor{\bsnm{Bringmann}, \binits{O.}},
\bauthor{\bsnm{Brendel}, \binits{W.}},
\bauthor{\bsnm{Bethge}, \binits{M.}}:
\bctitle{Improving robustness against common corruptions by covariate shift adaptation}.
In: \bbtitle{NeurIPS}
(\byear{2020})
\end{bchapter}
\endbibitem

\bibitem[\protect\citeauthoryear{Mirza et~al.}{2022}]{mirza2022norm}
\begin{bchapter}
\bauthor{\bsnm{Mirza}, \binits{M.J.}},
\bauthor{\bsnm{Micorek}, \binits{J.}},
\bauthor{\bsnm{Possegger}, \binits{H.}},
\bauthor{\bsnm{Bischof}, \binits{H.}}:
\bctitle{The norm must go on: Dynamic unsupervised domain adaptation by normalization}.
In: \bbtitle{CVPR}
(\byear{2022})
\end{bchapter}
\endbibitem

\bibitem[\protect\citeauthoryear{Hendrycks et~al.}{2021}]{hendrycks2021many}
\begin{bchapter}
\bauthor{\bsnm{Hendrycks}, \binits{D.}},
\bauthor{\bsnm{Basart}, \binits{S.}},
\bauthor{\bsnm{Mu}, \binits{N.}},
\bauthor{\bsnm{Kadavath}, \binits{S.}},
\bauthor{\bsnm{Wang}, \binits{F.}},
\bauthor{\bsnm{Dorundo}, \binits{E.}},
\bauthor{\bsnm{Desai}, \binits{R.}},
\bauthor{\bsnm{Zhu}, \binits{T.}},
\bauthor{\bsnm{Parajuli}, \binits{S.}},
\bauthor{\bsnm{Guo}, \binits{M.}},
\bauthor{\bsnm{Song}, \binits{D.}},
\bauthor{\bsnm{Steinhardt}, \binits{J.}},
\bauthor{\bsnm{Gilmer}, \binits{J.}}:
\bctitle{The many faces of robustness: A critical analysis of out-of-distribution generalization}.
In: \bbtitle{ICCV}
(\byear{2021})
\end{bchapter}
\endbibitem

\bibitem[\protect\citeauthoryear{Goodfellow et~al.}{2014}]{goodfellow2014generative}
\begin{botherref}
\oauthor{\bsnm{Goodfellow}, \binits{I.J.}},
\oauthor{\bsnm{Pouget-Abadie}, \binits{J.}},
\oauthor{\bsnm{Mirza}, \binits{M.}},
\oauthor{\bsnm{Xu}, \binits{B.}},
\oauthor{\bsnm{Warde-Farley}, \binits{D.}},
\oauthor{\bsnm{Ozair}, \binits{S.}},
\oauthor{\bsnm{Courville}, \binits{A.}},
\oauthor{\bsnm{Bengio}, \binits{Y.}}:
Generative adversarial networks.
arXiv preprint arXiv:1406.2661
(2014)
\end{botherref}
\endbibitem

\bibitem[\protect\citeauthoryear{Reddi et~al.}{2015}]{reddi2015high}
\begin{bchapter}
\bauthor{\bsnm{Reddi}, \binits{S.}},
\bauthor{\bsnm{Ramdas}, \binits{A.}},
\bauthor{\bsnm{P{\'o}czos}, \binits{B.}},
\bauthor{\bsnm{Singh}, \binits{A.}},
\bauthor{\bsnm{Wasserman}, \binits{L.}}:
\bctitle{On the high dimensional power of a linear-time two sample test under mean-shift alternatives}.
In: \bbtitle{AISTATS}
(\byear{2015})
\end{bchapter}
\endbibitem

\bibitem[\protect\citeauthoryear{Van~der Maaten and Hinton}{2008}]{van2008visualizing}
\begin{barticle}
\bauthor{\bsnm{Maaten}, \binits{L.}},
\bauthor{\bsnm{Hinton}, \binits{G.}}:
\batitle{Visualizing data using t-sne.}
\bjtitle{JMLR}
\bvolume{9}(\bissue{86}),
\bfpage{2579}--\blpage{2605}
(\byear{2008})
\end{barticle}
\endbibitem

\end{thebibliography}

\end{document}